%% file: main.tex
\documentclass{article}

\usepackage[preprint]{ProbNum26} 

\usepackage[round]{natbib}

\input{preambles/packages}
\input{preambles/defns}
\input{preambles/tikz}

\usepackage{hyperref}
\usepackage[capitalise,nameinlink]{cleveref}
\crefname{thm}{Theorem}{Theorems}
\crefname{defn}{Definition}{Definitions}

\newcommand{\ml}[2][] {\todo[inline,backgroundcolor=green!20!white, #1]{(Mykola) #2}}

\newcommand{\wouterk}[2][] {\todo[inline,backgroundcolor=red!20!white, #1]{(WouterK) #2}}

\probnumtitle{Composing Non-Conjugate Factor Graphs with Closed-Form Variational Inference}

\probnumauthors{%
\name{Mykola Lukashchuk}%
\affiliation{Eindhoven University of Technology, the Netherlands}
\and%
\name{Kyrylo Yemets}%
\affiliation{Lviv Polytechnic National University, Lviv, Ukraine}
\and%
\name{Wouter M. Kouw}%
\affiliation{ Eindhoven University of Technology, the Netherlands}
\and%
\name{Dmitry Bagaev}%
\affiliation{Eindhoven University of Technology, the Netherlands}
\and%
\name{{\.{I}}smail {\c{S}en\"{o}z}}%
\affiliation{Lazy Dynamics, Utrecht, the Netherlands}
\and%
\name{Jeff Beck}%
\affiliation{Great Sky, Boulder, Colorado, USA}
\and%
\name{Bert de Vries}%
\affiliation{
  Eindhoven University of Technology, the Netherlands}
}

\probnumabstract{%
Stacking probabilistic building blocks into deeper architectures typically breaks closed-form inference. We show that closed-form inference can be preserved. We identify five factor-graph primitives: a bilinear factor, an exponential link, a Gamma prior, a Gaussian likelihood, and an equality node, and prove that any model composed from them admits closed-form variational message passing. The construction works because each primitive preserves a small set of message families: under mean-field factorization, messages on Gaussian variables remain Gaussian and messages on precision variables remain Gamma, while the only non-conjugate interface, the exponential link, remains tractable through the Gaussian moment-generating function and the sufficient statistics of the Gamma family. We demonstrate composition at increasing depth, from static ensembles through input-dependent gating to split-branch routing, and show that stacking routing layers encodes arbitrary decision trees, establishing universal function approximation with closed-form inference. Applied to ensemble time-series forecasting, the framework yields a Bayesian mixture of experts in which gating functions are inferred rather than learned, providing calibrated uncertainty over expert selection across five benchmark datasets.
}

\let\probnumoriginalmaketitle\probnummaketitle
\renewcommand{\probnummaketitle}{%
    \probnumoriginalmaketitle
    \begin{center}
        \small\sffamily
        \textbf{Accepted to ProbNum 2026.}
        OpenReview record will be accessible after the camera-ready deadline:
        \url{https://openreview.net/forum?id=fi4qX94LI8}.
    \end{center}
    \vspace*{2mm}
}

\begin{document}

\section{Introduction}\label{sec:intro}
\input{sections/new_introduction}

\section{The Alphabet}\label{sec:alphabet}
\input{sections/alphabet}

\section{The Grammar}\label{sec:grammar}
\input{sections/grammar}

\section{The Runtime}\label{sec:runtime}
\input{sections/runtime}

\section{Expressiveness}\label{sec:expressiveness}
\input{sections/expressiveness}

\section{Application}\label{sec:application}
\input{sections/application}

\section{Related Work}\label{sec:discussion}
\input{sections/new_discussion}

\section{Conclusion}\label{sec:conclusion}
\input{sections/conclusion}

\section*{Acknowledgements}

We gratefully acknowledge financial support by the Dutch Ministry of Economic Affairs (PPS funding), by the Dutch Research Council (NWO) and by hearing aid manufacturer GN Hearing, under contracts \\ TKI-HTSM/21.0161/2112P09 (project: Auto-AR) and KICH3.LTP.20.006 (Project: ROBUST).

\FloatBarrier

\bibliography{references}

\appendix

\section{Forney-style Factor Graphs and Message Passing}\label{app:ffg}
\input{sections/appendix_ffg}


\section{XOR Encoding Parameters}\label{app:xor}
\input{sections/appendix_xor}

\section{Comparison with Neural Universal Approximation}\label{app:neural}
\input{sections/appendix_neural}

\section{Models in GraphPPL.jl}\label{app:graphppl}
\input{sections/appendix_graphppl}

\section{Experimental Details}\label{app:experimental}
\input{sections/appendix_experimental}

\section{Full Results}
\input{sections/full_results}



\end{document}

%% file: preambles/packages.tex
\usepackage{amssymb}
\usepackage[T1]{fontenc}
\usepackage[utf8]{inputenc}

\usepackage[american]{babel}
\usepackage{multirow}

\usepackage{mathtools} 
\usepackage{bm} 
\usepackage{amsthm}

\newtheorem{proposition}{Proposition}
\newtheorem{corollary}{Corollary}
\usepackage[disable]{todonotes}
\usepackage{soul}
\usepackage{subcaption}
\usepackage{placeins}
\usepackage[frozencache]{minted}
\setminted[julia]{
  fontsize=\scriptsize,
  frame=lines,
  framesep=2pt,
  breaklines=true,
  numbersep=5pt,
  xleftmargin=1.5em,
}

%% file: preambles/defns.tex
\newcommand\cut[1]{}

\newcommand{\Exp}[2]{\mathbb{E}_{#1}\sqr{#2}}
\newcommand{\Ent}[1]{\mathbb{H}\sqr{#1}}

\newcommand{\squishlist}{
   \begin{list}{$\bullet$}
    { \setlength{\itemsep}{0pt}      \setlength{\parsep}{3pt}
      \setlength{\topsep}{3pt}       \setlength{\partopsep}{0pt}
      \setlength{\leftmargin}{1.5em} \setlength{\labelwidth}{1em}
      \setlength{\labelsep}{0.5em} } }

\newcommand{\squishlisttwo}{
   \begin{list}{$\bullet$}
    { \setlength{\itemsep}{0pt}    \setlength{\parsep}{0pt}
      \setlength{\topsep}{0pt}     \setlength{\partopsep}{0pt}
      \setlength{\leftmargin}{2em} \setlength{\labelwidth}{1.5em}
      \setlength{\labelsep}{0.5em} } }

\newcommand{\squishend}{
    \end{list}  }

{}
\newtheorem{thm}{Theorem}{}
{}
{}
{}
\newtheorem{defn}{Definition}{}

\newcommand{\sqr}[1]{\left[#1\right]}

\newcommand{\gauss}{\mbox{${\cal N}$}}
\newcommand{\gammad}{\mbox{${\cal G}$}}

\newcommand{\myvec}[1]{\mbox{$\mathbf{#1}$}}
\newcommand{\myvecsym}[1]{\mbox{$\boldsymbol{#1}$}}

\newcommand{\veta}{\mbox{$\myvecsym{\eta}$}}
\newcommand{\vgamma}{\mbox{$\myvecsym{\gamma}$}}
\newcommand{\vmu}{\mbox{$\myvecsym{\mu}$}}

\newcommand{\vphi}{\mbox{$\myvecsym{\phi}$}}

\newcommand{\vtau}{\mbox{$\myvecsym{\tau}$}}

\newcommand{\vm}{\mbox{$\myvec{m}$}}

\newcommand{\vs}{\mbox{$\myvec{s}$}}

\newcommand{\vu}{\mbox{$\myvec{u}$}}
\newcommand{\vv}{\mbox{$\myvec{v}$}}
\newcommand{\vw}{\mbox{$\myvec{w}$}}

\newcommand{\vx}{\mbox{$\myvec{x}$}}

\newcommand{\vy}{\mbox{$\myvec{y}$}}

\newcommand{\vz}{\mbox{$\myvec{z}$}}

\newcommand{\vF}{\mbox{$\myvec{F}$}}

\newcommand{\vT}{\mbox{$\myvec{T}$}}

\newcommand{\vV}{\mbox{$\myvec{V}$}}

\newcommand{\be}{\begin{equation}}
\newcommand{\ee}{\end{equation}}
\newcommand{\bea}{\begin{eqnarray}}
\newcommand{\eea}{\end{eqnarray}}
\newcommand{\beaa}{\begin{eqnarray*}}
\newcommand{\eeaa}{\end{eqnarray*}}

\DeclareMathOperator*{\argmin}{arg\,min}

\newcommand{\R}{\mathbb{R}}
\newcommand{\mI}{\mbox{$\myvec{I}$}} 

%% file: preambles/tikz.tex
\usepackage{pgfplots}
\usepackage{pgfplotstable}
\pgfplotsset{compat=newest}
\usetikzlibrary{intersections}

\usetikzlibrary{positioning}
\usepgfplotslibrary{external}
\usepgfplotslibrary{fillbetween}
\pgfplotsset{compat=1.5}
\usetikzlibrary{pgfplots.groupplots}
\usetikzlibrary{shapes.geometric,backgrounds}
\usetikzlibrary{calc,arrows.meta,positioning}
\usepackage{adjustbox}
\usetikzlibrary{backgrounds}
\pgfdeclarelayer{background}
\pgfsetlayers{background,main}

\tikzset{
    every node/.style={font=\sffamily\small},
    main node/.style={thick,circle,draw,font=\sffamily\Large}
}

\usepackage{xifthen}
\usepackage{tkz-euclide}
\usetikzlibrary{positioning}
\usepgfplotslibrary{fillbetween}
\usepgfplotslibrary{groupplots}
\usetikzlibrary{shapes.geometric,backgrounds}
\usepackage[outline]{contour}

\usetikzlibrary{external}
\pgfdeclarelayer{bg}    
\pgfsetlayers{bg,main}  

\usetikzlibrary{calc, arrows, fit, positioning, patterns, decorations.pathreplacing, shapes}
\tikzstyle{dash} = [dashed, -latex,>=latex]
\tikzstyle{line} = [draw, -latex,>=latex]
\tikzstyle{smallbox} = [draw, minimum size=5.0mm]
\tikzstyle{box} = [draw, minimum size=7.0mm]
\tikzstyle{bigbox} = [draw, minimum size=10.0mm]
\tikzstyle{rectangle} = [draw, minimum width=10.0mm, minimum height=20.0mm]
\tikzstyle{switch} = [trapezium, trapezium angle=120, draw, rotate=90,  inner ysep=5pt, outer sep=5pt,
minimum height=7mm, minimum width=7mm]
\tikzstyle{roundbox} = [draw, circle, inner sep=0pt, minimum size=3mm]
\tikzstyle{clamped} = [draw, fill=darkgrey, minimum size=0.15cm]
\tikzstyle{msgcircle} = [shape=circle, draw, inner sep=0pt, minimum size=4mm, fill=white, font=\scriptsize]
\tikzstyle{darkmsgcircle} = [shape=circle, draw, inner sep=0pt, minimum size=4mm, fill=darkgrey, text=white, font=\scriptsize]
\tikzstyle{redmsgcircle} = [shape=circle, draw=red, inner sep=0pt, minimum size=4mm, text=red, font=\scriptsize]
\tikzstyle{reddarkmsgcircle} = [shape=circle, draw=red, inner sep=0pt, minimum size=4mm, fill=red, text=white, font=\scriptsize]
\tikzstyle{msgdoublecircle} = [shape=circle, double, double distance=1.5pt, draw, inner sep=0pt, minimum size=5mm, fill=white]
\tikzstyle{darkmsgdoublecircle} = [shape=circle, double, double distance=1.5pt, draw, inner sep=0pt, minimum size=5mm, fill=darkgrey, text=white, font=\bfseries]

\newcommand{\msg}[6]{
      \ifthenelse{\isin{#1}{left} \AND \isin{#2}{down}}{
            \coordinate (anchor) at ($({#3})!{#5}!({#4})$);
            \node[xshift=-6.0mm] at (anchor) {#6};
            \node[xshift=-1.0mm] at (anchor) {$\downarrow$};
      }{}
      \ifthenelse{\isin{#1}{right} \AND \isin{#2}{down}}{
            \coordinate (anchor) at ($({#3})!{#5}!({#4})$);
            \node[xshift=6.0mm] at (anchor) {#6};
            \node[xshift=1.0mm] at (anchor) {$\downarrow$};
      }{}

      \ifthenelse{\isin{#1}{down} \AND \isin{#2}{right}}{
            \coordinate (anchor) at ($({#3})!{#5}!({#4})$);
            \node[ yshift=-4.0mm] at (anchor) {#6};
            \node[yshift=-1.0mm] at (anchor) {$\rightarrow$};
      }{}
      \ifthenelse{\isin{#1}{up} \AND \isin{#2}{right}}{
            \coordinate (anchor) at ($({#3})!{#5}!({#4})$);
            \node[ yshift=4.0mm] at (anchor) {#6};
            \node[yshift=1.0mm] at (anchor) {$\rightarrow$};
      }{}

      \ifthenelse{\isin{#1}{down} \AND \isin{#2}{left}}{
            \coordinate (anchor) at ($({#3})!{#5}!({#4})$);
            \node[ yshift=-4.0mm] at (anchor) {#6};
            \node[yshift=-1.0mm] at (anchor) {$\leftarrow$};
      }{}
      \ifthenelse{\isin{#1}{up} \AND \isin{#2}{left}}{
            \coordinate (anchor) at ($({#3})!{#5}!({#4})$);
            \node[ yshift=4.0mm] at (anchor) {#6};
            \node[yshift=1.0mm] at (anchor) {$\leftarrow$};
      }{}

      \ifthenelse{\isin{#1}{left} \AND \isin{#2}{up}}{
            \coordinate (anchor) at ($({#3})!{#5}!({#4})$);
            \node[ xshift=-6.0mm] at (anchor) {#6};
            \node[xshift=-1.0mm] at (anchor) {$\uparrow$};
      }{}
      \ifthenelse{\isin{#1}{right} \AND \isin{#2}{up}}{
            \coordinate (anchor) at ($({#3})!{#5}!({#4})$);
            \node[ xshift=6.0mm] at (anchor) {#6};
            \node[xshift=1.0mm] at (anchor) {$\uparrow$};
      }{}
}

\newcommand{\darkmsg}[6]{
      \ifthenelse{\isin{#1}{left} \AND \isin{#2}{down}}{
            \coordinate (anchor) at ($({#3})!{#5}!({#4})$);
            \node[darkmsgcircle, xshift=-5.5mm] at (anchor) {#6};
            \node[xshift=-1.5mm] at (anchor) {$\downarrow$};
      }{}
      \ifthenelse{\isin{#1}{right} \AND \isin{#2}{down}}{
            \coordinate (anchor) at ($({#3})!{#5}!({#4})$);
            \node[darkmsgcircle, xshift=5.5mm] at (anchor) {#6};
            \node[xshift=1.5mm] at (anchor) {$\downarrow$};
      }{}

      \ifthenelse{\isin{#1}{down} \AND \isin{#2}{right}}{
            \coordinate (anchor) at ($({#3})!{#5}!({#4})$);
            \node[darkmsgcircle, yshift=-6.0mm] at (anchor) {#6};
            \node[yshift=-2.0mm] at (anchor) {$\rightarrow$};
      }{}
      \ifthenelse{\isin{#1}{up} \AND \isin{#2}{right}}{
            \coordinate (anchor) at ($({#3})!{#5}!({#4})$);
            \node[darkmsgcircle, yshift=6.0mm] at (anchor) {#6};
            \node[yshift=2.0mm] at (anchor) {$\rightarrow$};
      }{}

      \ifthenelse{\isin{#1}{down} \AND \isin{#2}{left}}{
            \coordinate (anchor) at ($({#3})!{#5}!({#4})$);
            \node[darkmsgcircle, yshift=-6.0mm] at (anchor) {#6};
            \node[yshift=-2.0mm] at (anchor) {$\leftarrow$};
      }{}
      \ifthenelse{\isin{#1}{up} \AND \isin{#2}{left}}{
            \coordinate (anchor) at ($({#3})!{#5}!({#4})$);
            \node[darkmsgcircle, yshift=6.0mm] at (anchor) {#6};
            \node[yshift=2.0mm] at (anchor) {$\leftarrow$};
      }{}

      \ifthenelse{\isin{#1}{left} \AND \isin{#2}{up}}{
            \coordinate (anchor) at ($({#3})!{#5}!({#4})$);
            \node[darkmsgcircle, xshift=-5.5mm] at (anchor) {#6};
            \node[xshift=-1.5mm] at (anchor) {$\uparrow$};
      }{}
      \ifthenelse{\isin{#1}{right} \AND \isin{#2}{up}}{
            \coordinate (anchor) at ($({#3})!{#5}!({#4})$);
            \node[darkmsgcircle, xshift=5.5mm] at (anchor) {#6};
            \node[xshift=1.5mm] at (anchor) {$\uparrow$};
      }{}
}

\newcommand{\bwmsg}[6]{
      \ifthenelse{\isin{#1}{left} \AND \isin{#2}{down}}{
            \coordinate (anchor) at ($({#3})!{#5}!({#4})$);
            \node[msgdoublecircle, xshift=-5.5mm] at (anchor) {#6};
            \node[xshift=-1.5mm] at (anchor) {$\downarrow$};
      }{}
      \ifthenelse{\isin{#1}{right} \AND \isin{#2}{down}}{
            \coordinate (anchor) at ($({#3})!{#5}!({#4})$);
            \node[msgdoublecircle, xshift=5.5mm] at (anchor) {#6};
            \node[xshift=1.5mm] at (anchor) {$\downarrow$};
      }{}

      \ifthenelse{\isin{#1}{down} \AND \isin{#2}{right}}{
            \coordinate (anchor) at ($({#3})!{#5}!({#4})$);
            \node[msgdoublecircle, yshift=-6.0mm] at (anchor) {#6};
            \node[yshift=-2.0mm] at (anchor) {$\rightarrow$};
      }{}
      \ifthenelse{\isin{#1}{up} \AND \isin{#2}{right}}{
            \coordinate (anchor) at ($({#3})!{#5}!({#4})$);
            \node[msgdoublecircle, yshift=6.0mm] at (anchor) {#6};
            \node[yshift=2.0mm] at (anchor) {$\rightarrow$};
      }{}

      \ifthenelse{\isin{#1}{down} \AND \isin{#2}{left}}{
            \coordinate (anchor) at ($({#3})!{#5}!({#4})$);
            \node[msgdoublecircle, yshift=-6.0mm] at (anchor) {#6};
            \node[yshift=-2.0mm] at (anchor) {$\leftarrow$};
      }{}
      \ifthenelse{\isin{#1}{up} \AND \isin{#2}{left}}{
            \coordinate (anchor) at ($({#3})!{#5}!({#4})$);
            \node[msgdoublecircle, yshift=6.0mm] at (anchor) {#6};
            \node[yshift=2.0mm] at (anchor) {$\leftarrow$};
      }{}

      \ifthenelse{\isin{#1}{left} \AND \isin{#2}{up}}{
            \coordinate (anchor) at ($({#3})!{#5}!({#4})$);
            \node[msgdoublecircle, xshift=-5.5mm] at (anchor) {#6};
            \node[xshift=-1.5mm] at (anchor) {$\uparrow$};
      }{}
      \ifthenelse{\isin{#1}{right} \AND \isin{#2}{up}}{
            \coordinate (anchor) at ($({#3})!{#5}!({#4})$);
            \node[msgdoublecircle, xshift=5.5mm] at (anchor) {#6};
            \node[xshift=1.5mm] at (anchor) {$\uparrow$};
      }{}
}

\newcommand{\bwdarkmsg}[6]{
      \ifthenelse{\isin{#1}{left} \AND \isin{#2}{down}}{
            \coordinate (anchor) at ($({#3})!{#5}!({#4})$);
            \node[darkmsgdoublecircle, xshift=-5.5mm] at (anchor) {#6};
            \node[xshift=-1.5mm] at (anchor) {$\downarrow$};
      }{}
      \ifthenelse{\isin{#1}{right} \AND \isin{#2}{down}}{
            \coordinate (anchor) at ($({#3})!{#5}!({#4})$);
            \node[darkmsgdoublecircle, xshift=5.5mm] at (anchor) {#6};
            \node[xshift=1.5mm] at (anchor) {$\downarrow$};
      }{}

      \ifthenelse{\isin{#1}{down} \AND \isin{#2}{right}}{
            \coordinate (anchor) at ($({#3})!{#5}!({#4})$);
            \node[darkmsgdoublecircle, yshift=-6.0mm] at (anchor) {#6};
            \node[yshift=-2.0mm] at (anchor) {$\rightarrow$};
      }{}
      \ifthenelse{\isin{#1}{up} \AND \isin{#2}{right}}{
            \coordinate (anchor) at ($({#3})!{#5}!({#4})$);
            \node[darkmsgdoublecircle, yshift=6.0mm] at (anchor) {#6};
            \node[yshift=2.0mm] at (anchor) {$\rightarrow$};
      }{}

      \ifthenelse{\isin{#1}{down} \AND \isin{#2}{left}}{
            \coordinate (anchor) at ($({#3})!{#5}!({#4})$);
            \node[darkmsgdoublecircle, yshift=-6.0mm] at (anchor) {#6};
            \node[yshift=-2.0mm] at (anchor) {$\leftarrow$};
      }{}
      \ifthenelse{\isin{#1}{up} \AND \isin{#2}{left}}{
            \coordinate (anchor) at ($({#3})!{#5}!({#4})$);
            \node[darkmsgdoublecircle, yshift=6.0mm] at (anchor) {#6};
            \node[yshift=2.0mm] at (anchor) {$\leftarrow$};
      }{}

      \ifthenelse{\isin{#1}{left} \AND \isin{#2}{up}}{
            \coordinate (anchor) at ($({#3})!{#5}!({#4})$);
            \node[darkmsgdoublecircle, xshift=-5.5mm] at (anchor) {#6};
            \node[xshift=-1.5mm] at (anchor) {$\uparrow$};
      }{}
      \ifthenelse{\isin{#1}{right} \AND \isin{#2}{up}}{
            \coordinate (anchor) at ($({#3})!{#5}!({#4})$);
            \node[darkmsgdoublecircle, xshift=5.5mm] at (anchor) {#6};
            \node[xshift=1.5mm] at (anchor) {$\uparrow$};
      }{}
}

\newcommand{\redbackmsg}[6]{
      \ifthenelse{\isin{#1}{left} \AND \isin{#2}{down}}{
            \coordinate (anchor) at ($({#3})!{#5}!({#4})$);
            \node[reddarkmsgcircle, xshift=-5mm] at (anchor) {#6};
            \node[xshift=-1.5mm] at (anchor) {$\downarrow$};
      }{}
      \ifthenelse{\isin{#1}{right} \AND \isin{#2}{down}}{
            \coordinate (anchor) at ($({#3})!{#5}!({#4})$);
            \node[reddarkmsgcircle, xshift=5mm] at (anchor) {#6};
            \node[xshift=1.5mm] at (anchor) {$\downarrow$};
      }{}

      \ifthenelse{\isin{#1}{down} \AND \isin{#2}{right}}{
            \coordinate (anchor) at ($({#3})!{#5}!({#4})$);
            \node[reddarkmsgcircle, yshift=-5.0mm] at (anchor) {#6};
            \node[yshift=-2.0mm] at (anchor) {$\rightarrow$};
      }{}
      \ifthenelse{\isin{#1}{up} \AND \isin{#2}{right}}{
            \coordinate (anchor) at ($({#3})!{#5}!({#4})$);
            \node[reddarkmsgcircle, yshift=5.0mm] at (anchor) {#6};
            \node[yshift=2.0mm] at (anchor) {$\rightarrow$};
      }{}

      \ifthenelse{\isin{#1}{down} \AND \isin{#2}{left}}{
            \coordinate (anchor) at ($({#3})!{#5}!({#4})$);
            \node[reddarkmsgcircle, yshift=-5.0mm] at (anchor) {#6};
            \node[yshift=-2.0mm] at (anchor) {$\leftarrow$};
      }{}
      \ifthenelse{\isin{#1}{up} \AND \isin{#2}{left}}{
            \coordinate (anchor) at ($({#3})!{#5}!({#4})$);
            \node[reddarkmsgcircle, yshift=5.0mm] at (anchor) {#6};
            \node[yshift=2.0mm] at (anchor) {$\leftarrow$};
      }{}

      \ifthenelse{\isin{#1}{left} \AND \isin{#2}{up}}{
            \coordinate (anchor) at ($({#3})!{#5}!({#4})$);
            \node[reddarkmsgcircle, xshift=-5.0mm] at (anchor) {#6};
            \node[xshift=-1.5mm] at (anchor) {$\uparrow$};
      }{}
      \ifthenelse{\isin{#1}{right} \AND \isin{#2}{up}}{
            \coordinate (anchor) at ($({#3})!{#5}!({#4})$);
            \node[reddarkmsgcircle, xshift=5.0mm] at (anchor) {#6};
            \node[xshift=1.5mm] at (anchor) {$\uparrow$};
      }{}
}

\newcommand{\redmsg}[6]{
      \ifthenelse{\isin{#1}{left} \AND \isin{#2}{down}}{
            \coordinate (anchor) at ($({#3})!{#5}!({#4})$);
            \node[redmsgcircle, xshift=-5mm] at (anchor) {#6};
            \node[xshift=-1.5mm] at (anchor) {$\downarrow$};
      }{}
      \ifthenelse{\isin{#1}{right} \AND \isin{#2}{down}}{
            \coordinate (anchor) at ($({#3})!{#5}!({#4})$);
            \node[redmsgcircle, xshift=5mm] at (anchor) {#6};
            \node[xshift=1.5mm] at (anchor) {$\downarrow$};
      }{}

      \ifthenelse{\isin{#1}{down} \AND \isin{#2}{right}}{
            \coordinate (anchor) at ($({#3})!{#5}!({#4})$);
            \node[redmsgcircle, yshift=-5.0mm] at (anchor) {#6};
            \node[yshift=-2.0mm] at (anchor) {$\rightarrow$};
      }{}
      \ifthenelse{\isin{#1}{up} \AND \isin{#2}{right}}{
            \coordinate (anchor) at ($({#3})!{#5}!({#4})$);
            \node[redmsgcircle, yshift=5.0mm] at (anchor) {#6};
            \node[yshift=2.0mm] at (anchor) {$\rightarrow$};
      }{}

      \ifthenelse{\isin{#1}{down} \AND \isin{#2}{left}}{
            \coordinate (anchor) at ($({#3})!{#5}!({#4})$);
            \node[redmsgcircle, yshift=-5.0mm] at (anchor) {#6};
            \node[yshift=-2.0mm] at (anchor) {$\leftarrow$};
      }{}
      \ifthenelse{\isin{#1}{up} \AND \isin{#2}{left}}{
            \coordinate (anchor) at ($({#3})!{#5}!({#4})$);
            \node[redmsgcircle, yshift=5.0mm] at (anchor) {#6};
            \node[yshift=2.0mm] at (anchor) {$\leftarrow$};
      }{}

      \ifthenelse{\isin{#1}{left} \AND \isin{#2}{up}}{
            \coordinate (anchor) at ($({#3})!{#5}!({#4})$);
            \node[redmsgcircle, xshift=-5.0mm] at (anchor) {#6};
            \node[xshift=-1.5mm] at (anchor) {$\uparrow$};
      }{}
      \ifthenelse{\isin{#1}{right} \AND \isin{#2}{up}}{
            \coordinate (anchor) at ($({#3})!{#5}!({#4})$);
            \node[redmsgcircle, xshift=5.0mm] at (anchor) {#6};
            \node[xshift=1.5mm] at (anchor) {$\uparrow$};
      }{}
}

\newcommand{\msgcircle}[6]{
      \ifthenelse{\isin{#1}{left} \AND \isin{#2}{down}}{
            \coordinate (anchor) at ($({#3})!{#5}!({#4})$);
            \node[msgcircle,xshift=-5.0mm] at (anchor) {#6};
            \node[xshift=-1.5mm] at (anchor) {$\downarrow$};
      }{}
      \ifthenelse{\isin{#1}{right} \AND \isin{#2}{down}}{
            \coordinate (anchor) at ($({#3})!{#5}!({#4})$);
            \node[msgcircle,xshift=5.0mm] at (anchor) {#6};
            \node[xshift=1.5mm] at (anchor) {$\downarrow$};
      }{}

      \ifthenelse{\isin{#1}{down} \AND \isin{#2}{right}}{
            \coordinate (anchor) at ($({#3})!{#5}!({#4})$);
            \node[msgcircle, yshift=-5.0mm] at (anchor) {#6};
            \node[yshift=-2.0mm] at (anchor) {$\rightarrow$};
      }{}
      \ifthenelse{\isin{#1}{up} \AND \isin{#2}{right}}{
            \coordinate (anchor) at ($({#3})!{#5}!({#4})$);
            \node[msgcircle, yshift=5.0mm] at (anchor) {#6};
            \node[yshift=2.0mm] at (anchor) {$\rightarrow$};
      }{}

      \ifthenelse{\isin{#1}{down} \AND \isin{#2}{left}}{
            \coordinate (anchor) at ($({#3})!{#5}!({#4})$);
            \node[msgcircle, yshift=-5.0mm] at (anchor) {#6};
            \node[yshift=-2.0mm] at (anchor) {$\leftarrow$};
      }{}
      \ifthenelse{\isin{#1}{up} \AND \isin{#2}{left}}{
            \coordinate (anchor) at ($({#3})!{#5}!({#4})$);
            \node[msgcircle, yshift=5.0mm] at (anchor) {#6};
            \node[yshift=2.0mm] at (anchor) {$\leftarrow$};
      }{}

      \ifthenelse{\isin{#1}{left} \AND \isin{#2}{up}}{
            \coordinate (anchor) at ($({#3})!{#5}!({#4})$);
            \node[msgcircle, xshift=-5.0mm] at (anchor) {#6};
            \node[xshift=-1.5mm] at (anchor) {$\uparrow$};
      }{}
      \ifthenelse{\isin{#1}{right} \AND \isin{#2}{up}}{
            \coordinate (anchor) at ($({#3})!{#5}!({#4})$);
            \node[msgcircle, xshift=5.0mm] at (anchor) {#6};
            \node[xshift=1.5mm] at (anchor) {$\uparrow$};
      }{}
}

\tikzset{mainstyle/.style={fill=white, draw=black, shape=rectangle, align=center}}
\tikzset{dstyle/.style={mainstyle, minimum size=4mm, inner sep=0pt, text width=4mm}}
\tikzset{sstyle/.style={mainstyle, minimum size=5mm, inner sep=0pt, text width=5mm}}
\tikzset{ostyle/.style={fill=darkgrey, draw=black, shape=rectangle, minimum size=0.2cm, inner sep=0pt, text width=2mm}}

\tikzstyle{observation}=[ostyle]
\tikzstyle{deterministic}=[dstyle]
\tikzstyle{stochastic}=[sstyle]

\tikzstyle{filter}=[mainstyle, minimum width=1cm, minimum height=0.5cm]
\tikzstyle{selector}=[fill=white, draw=black, shape=trapezium, rotate=180, minimum width=1cm, minimum height=0.5cm]

\makeatletter
\DeclareRobustCommand{\cev}[1]{%
  {\mathpalette\do@cev{#1}}%
}
\newcommand{\do@cev}[2]{%
  \vbox{\offinterlineskip
    \sbox\z@{$\m@th#1 x$}%
    \ialign{##\cr
      \hidewidth\reflectbox{$\m@th#1\vec{}\mkern4mu$}\hidewidth\cr
      \noalign{\kern-\ht\z@}
      $\m@th#1#2$\cr
    }%
  }%
}

\tikzstyle{dash} = [dashed, -latex,>=latex]
\tikzstyle{line} = [draw, -latex,>=latex]
\tikzstyle{smallbox} = [draw, minimum size=5.0mm]
\tikzstyle{box} = [draw, minimum size=7.0mm]
\tikzstyle{bigbox} = [draw, minimum size=10.0mm]
\tikzstyle{hugebox} = [draw, minimum size=13.0mm]
\tikzstyle{rectangle} = [draw, minimum width=10.0mm, minimum height=20.0mm]
\tikzstyle{switch} = [trapezium, trapezium angle=120, draw, rotate=90,  inner ysep=5pt, outer sep=5pt,
minimum height=7mm, minimum width=7mm]
\tikzstyle{roundbox} = [draw, circle, inner sep=0pt, minimum size=3mm]
\usetikzlibrary{arrows.meta}
\usetikzlibrary{backgrounds}
\usepgfplotslibrary{patchplots}
\usepgfplotslibrary{fillbetween}
\pgfplotsset{%
    layers/standard/.define layer set={%
        background,axis background,axis grid,axis ticks,axis lines,axis tick labels,pre main,main,axis descriptions,axis foreground%
    }{
        grid style={/pgfplots/on layer=axis grid},%
        tick style={/pgfplots/on layer=axis ticks},%
        axis line style={/pgfplots/on layer=axis lines},%
        label style={/pgfplots/on layer=axis descriptions},%
        legend style={/pgfplots/on layer=axis descriptions},%
        title style={/pgfplots/on layer=axis descriptions},%
        colorbar style={/pgfplots/on layer=axis descriptions},%
        ticklabel style={/pgfplots/on layer=axis tick labels},%
        axis background@ style={/pgfplots/on layer=axis background},%
        3d box foreground style={/pgfplots/on layer=axis foreground},%
    },
}

\definecolor{darkgrey}{gray}{0.3}

%% file: sections/new_introduction.tex
In deterministic computation, building hierarchical systems is straightforward. Functions call functions, expressions nest within expressions, and complexity grows without any fundamental barrier. Languages and compilers execute such computational graphs step by step.


Probabilistic models need the same kind of hierarchy to be expressive enough. Consider ensemble forecasting: a simple model assigns a fixed precision to each expert, but what if an expert's reliability varies with the input? Then precision itself becomes a learned function of the input features, and that function should carry its own uncertainty. What if different experts are relevant in different regions of the input space? Then a routing layer must select among experts, adding another level of probabilistic structure on top. Each layer adds modeling power, and each layer should propagate uncertainty.
Yet, stacking probabilistic building blocks into deeper architectures typically leads to intractable operations. 
The field has responded by developing increasingly powerful black-box inference methods: Markov chain Monte Carlo \citep{neal_mcmc_2011}, black-box variational inference \citep{ranganath_black_2014}, amortized variational autoencoders \citep{kingma_autoencoding_2013}, normalizing flows \citep{rezende_variational_2015}, and diffusion models \citep{ho_denoising_2020}. These approaches handle non-conjugacy by learning or sampling the inference procedure but sacrifice closed-form tractability in the process.

We take a different path. Rather than improving black-box inference to handle with compositional models, we ask: \emph{can probabilistic models compose hierarchically while retaining closed-form inference?} Structural constraints on the approximate posterior---mean-field factorization and exponential-family form constraints---are known to reduce variational inference to closed-form updates in favorable settings. Message passing algorithms that minimize the Bethe free energy \citep{yedidia_constructing_2005, senoz_variational_2021} provide the execution mechanism. We show how to choose and compose fundamental probabilistic building blocks such that closed-form inference is preserved at every level of nesting.

The paper is organized around an analogy with programming languages: an \emph{alphabet} of building blocks (\cref{sec:alphabet}), a \emph{grammar} for composing them into models of increasing depth (\cref{sec:grammar}), and a \emph{runtime} in which form constraints and the Bethe free energy yield closed-form message passing derived from the model, not prescribed by the modeler (\cref{sec:runtime}).

We make the following contributions:
\begin{enumerate}
    \item We identify a five-letter alphabet of building blocks. The set is sufficient for universal approximation and closed under Q-conjugate variational message passing, so any composed model admits closed-form updates.
    \item We demonstrate composition at increasing depth: a static ensemble, input-dependent gating, and split-branch routing with piecewise-linear boundaries. Stacking routing layers encodes arbitrary decision trees, establishing universal function approximation.
    \item We apply the framework to ensemble forecasting, yielding a Bayesian mixture of experts in which gating functions are inferred rather than learned, providing principled uncertainty over expert selection.
\end{enumerate}

%% file: sections/alphabet.tex

We seek a set of probabilistic factors from which arbitrarily expressive models can be composed while retaining closed-form inference. We identify five such factors (\cref{fig:alphabet}). Sub-labels (a)--(e) match between figure and equation; edge styles encode variable roles: solid for weights and observations ($\vw, \vphi, y, \mu$), dashed for precisions ($\tau, \gamma, \beta$), and dash-dotted for the latent $z$.

This set is \emph{sufficient} for universal approximation (\cref{cor:universal}) and closed under Q-conjugate variational message passing (\cref{thm:messages,thm:marginals}); we do not claim minimum cardinality.

\input{figures/softdot}

\begin{subequations}\label{eq:alphabet}
\renewcommand{\theequation}{\theparentequation\alph{equation}}
\begin{align}
    \label{eq:softdot}
    f_*(z \mid \vw, \vphi, \tau) &= \gauss(z \mid \vw^\top \vphi, \tau^{-1}), \\
    \label{eq:exp-link}
    f_{\exp}(\gamma \mid z) &= \delta(\gamma - \exp(z)), \\
    \label{eq:gamma-factor}
    f_{\gammad}(\gamma \mid \alpha, \beta) &= \gammad(\gamma \mid \alpha, \beta), \\
    \label{eq:normal-factor}
    f_{\mathcal{N}}(y \mid \mu, \tau) &= \gauss(y \mid \mu, \tau^{-1}), \\
    \label{eq:equality}
    f_{=}(x_1, x_2, x_3) &= \delta(x_1 - x_2)\,\delta(x_1 - x_3).
\end{align}
\end{subequations}
When $y$ and $\mu$ are vectors, the normal factor \eqref{eq:normal-factor} denotes $\gauss(\vy \mid \vmu, \tau^{-1}\mI)$, i.e.\ a multivariate Gaussian whose covariance is a scalar precision $\tau$ shared across all dimensions.\wouterk{This is a sufficient alphabet for continuous outcomes not for discrete. Unless you intend to constrain a Gaussian to be discrete? If not, this should be mentioned somewhere.}
\ml{I don't think you are right. However, the graphs would need to be very large that it's probably will be not practical to compute. Theoretically you are wrong, I will explain in the discussion, thanks}

In this paper, we compose these factors into Forney-style factor graphs (FFG) \citep{forney_codes_2001, loeliger_introduction_2004}. In an FFG, square nodes represent factors and edges represent variables; each expression above becomes one node. For instance, the square labeled~$*$ in \cref{fig:alphabet} a is the factor $f_*$ from \eqref{eq:softdot}, and its four edges are the variables $\vw$, $\vphi$, $\tau$, and~$z$. Two factors that share a variable are connected by the same edge; composing models amounts to connecting nodes via shared edges.

%% file: figures/softdot.tex
\begin{figure*}[t]
\centering
\begin{tikzpicture}[every node/.style={font=\small}]
    \node[box] (sd) {$*$};
    \fill (sd.west) ++(0, 0.12) arc (90:270:0.12) -- cycle;
    \draw[fill=white] (sd.north) ++(0.12, 0) arc (0:180:0.12) -- cycle;
    \node[roundbox] at (0, 1.1) (sd_phi) {};
    \node[roundbox] at (-1.1, 0) (sd_w) {};
    \node[roundbox] at (0, -1.1) (sd_tau) {};
    \node[roundbox] at (1.1, 0) (sd_z) {};
    \draw[-] (sd) -- node[right, font=\scriptsize] {$\vphi$} (sd_phi);
    \draw[-] (sd) -- node[above, font=\scriptsize] {$\vw$} (sd_w);
    \draw[dashed] (sd) -- node[right, font=\scriptsize] {$\tau$} (sd_tau);
    \draw[dash dot] (sd) -- node[above, font=\scriptsize] {$z$} (sd_z);
    \node at (0, -1.7) {\scriptsize (a) softdot};

    \node[box] at (3.5, 0) (el) {$\exp$};
    \node[roundbox] at (3.5, 1.1) (el_z) {};
    \node[roundbox] at (3.5, -1.1) (el_gamma) {};
    \draw[dash dot] (el) -- node[right, font=\scriptsize] {$z$} (el_z);
    \draw[dashed] (el) -- node[right, font=\scriptsize] {$\gamma$} (el_gamma);
    \node at (3.5, -1.7) {\scriptsize (b) exp link};

    \node[box] at (7, 0) (gf) {$\gammad$};
    \draw[fill=white] (gf.north) ++(0.12, 0) arc (0:180:0.12) -- cycle;
    \node[clamped] at (8.1, 0) (gf_alpha) {};
    \node[roundbox] at (7, 1.1) (gf_beta) {};
    \node[roundbox] at (7, -1.1) (gf_gamma) {};
    \draw[-] (gf) -- node[above, font=\scriptsize] {$\alpha$} (gf_alpha);
    \draw[dashed] (gf) -- node[right, font=\scriptsize] {$\beta$} (gf_beta);
    \draw[dashed] (gf) -- node[right, font=\scriptsize] {$\gamma$} (gf_gamma);
    \node at (7, -1.7) {\scriptsize (c) gamma};

    \node[box] at (10.5, 0) (nf) {$\mathcal{N}$};
    \draw[fill=white] (nf.west) ++(0, 0.12) arc (90:270:0.12) -- cycle;
    \node[roundbox] at (10.5, 1.1) (nf_y) {};
    \node[roundbox] at (9.4, 0) (nf_mu) {};
    \node[roundbox] at (10.5, -1.1) (nf_tau) {};
    \draw[-] (nf) -- node[right, font=\scriptsize] {$y$} (nf_y);
    \draw[-] (nf) -- node[above, font=\scriptsize] {$\mu$} (nf_mu);
    \draw[dashed] (nf) -- node[right, font=\scriptsize] {$\tau$} (nf_tau);
    \node at (10.5, -1.7) {\scriptsize (d) normal};

    \node[smallbox] at (13.5, 0) (eq) {$=$};
    \node[roundbox] at (12.4, 0) (eq_in) {};
    \node[roundbox] at (14.6, 0.7) (eq_out1) {};
    \node[roundbox] at (14.6, -0.7) (eq_out2) {};
    \draw[-] (eq) -- (eq_in);
    \draw[-] (eq) -- (eq_out1);
    \draw[-] (eq) -- (eq_out2);
    \node at (13.5, -1.7) {\scriptsize (e) equality};
\end{tikzpicture}
\caption{The building blocks as factor graph nodes. Square nodes are factors; round nodes represent neighboring nodes to which these factors connect (e.g., priors, likelihoods, or other factors sharing the same variable). Filled black squares denote \emph{clamped} variables: fixed numerical values that cannot be connected to other factors (e.g., the shape $\alpha$ of the gamma factor). This clamped notation is used throughout for any factor edge that carries a fixed constant rather than a latent. Orientation markers on the softdot factor (filled semi-circle on the $\vw$ side, open on the $\vphi$ side) allow edge roles to be identified even when the node is rotated in composed graphs. Edge styles distinguish variable roles: solid for weights, features, and observations ($\vw$, $\vphi$, $y$, $\mu$), dashed for precisions ($\tau, \gamma, \beta$), and dash-dotted for the latent $z$. The equality node enforces that all connected edges carry the same variable. Composing models amounts to connecting factors via shared variable edges of the same line type.}
\label{fig:alphabet}
\end{figure*}

%% file: sections/grammar.tex

This section shows how to write models in the factor-graph language.
We start with one simple translation exercise: write a familiar probabilistic
model as a graph (Depth 0). Then we extend it by
composition: first by introducing a reusable word (Depth 1), followed by composing
larger words for richer sentences (Depth 2). Runtime and update rules are deferred to \cref{sec:runtime}. Readers who prefer an executable description may consult \cref{app:graphppl}, which provides the GraphPPL.jl \citep{nuijten_graphppljl_2024} code corresponding to each factor graph in this section.

\subsection{From letters to a model}\label{sec:depth0}

\input{figures/depth0_graph}

Given $n$ forecasters with predictions $\hat{y}_{i,j}$ for observation $y_j$,
the Depth-0 smoothing problem is to infer the forecaster precisions from paired
observations and predictions:
\begin{equation}\label{eq:static-ensemble-new}
    p(\vgamma \mid \vy, \bold{\hat{y}})
    \propto
    \prod_{i=1}^{n} f_{\gammad}(\gamma_i \mid \alpha_i, \beta_i)
    \prod_{\substack{i=1\\ j=1}}^{n,m}
    f_{\mathcal{N}}(\hat{y}_{i,j} \mid y_j, \gamma_i).
\end{equation}
The conditional notation in \eqref{eq:static-ensemble-new} names the inference
task, not a directed sampling order. Since the preceding section introduced
local factor functions rather than a complete joint distribution, for now
\eqref{eq:static-ensemble-new} should be read simply as ``infer $\vgamma$ given
observed targets and forecaster predictions.'' The runtime later explains how
the same product of local factors induces different message-passing problems
depending on which variables are observed and which are left latent.
The factor graph is a direct letter-by-letter transcription of this product:
one gamma prior letter $f_{\gammad}$ per expert, one normal letter
$f_{\mathcal{N}}$ per expert-observation pair, and equality letters that share
$\gamma_i$ across observations and $y_j$ across experts. \Cref{fig:depth0} shows the fragment for expert $i{=}1$ and observation $j{=}1$; dotted edges
indicate repetition over the remaining indices. The executable GraphPPL.jl code for this model is given in \cref{app:graphppl-depth0}.
\wouterk{I'm missing an explanation of \emph{why} the Depth-0 graph looks like this. I think you should mention that it's a (composite) likelihood with precision sharing. And the letter analogy becomes a bit confusing when the experts are introduced; are the letters still just the 5 factor nodes in Eq 2? Is a likelihood with a single expert (Fig 2) a letter or the likelihood with $n$ experts for a single observation? Or even the likelihood with $n$ experts and $m$ observations (Eq. 3)?}

\subsection{Defining a new word}\label{sec:depth1}

To make the precision variables $\gamma_i$ input-dependent, we define one reusable word. This is composed of the softdot \eqref{eq:softdot} node with an exponential link \eqref{eq:exp-link}, which gives the \emph{precision word} $\pi$ (\cref{fig:precision-word}), mapping $(\vw, \vphi, \tau)$ to $\gamma$.
\wouterk{So a word is a composite node that "closes the box" around the letter nodes? A word is not a connected collection of letters, such as the factor graph in Fig 2)? This may be useful to indicate explicitly.}
\ml{Yeah, I think you are right, I need to clarify it further. The word is a non-terminated graph constructed from the letters.}

\input{figures/precision_word_depth1}

Writing a Depth-1 sentence means replacing each Depth-0 precision letter
$f_{\gammad}(\gamma_i \mid \alpha_i,\beta_i)$ with $\pi$, and adding shared
parameter nodes $(\vw_i,\tau_i)$. The right-hand likelihood part is unchanged;
only the precision-generation part is replaced. \Cref{fig:depth1} shows this
pattern. We call this sentence family \emph{Precision-Gated Experts} (PGE).
\wouterk{Is a "sentence" a composite node containing at least one word? Or factor graph containing words, i.e., a connected collection of words and letters? If it's the second option (as Fig. 4 seems to imply, then it's inconsistent with the use of "word".}
\ml{Sentence probably is a wrong word. It's just a terminated graph. Maybe the better word will be a program.}



\subsection{Writing richer sentences}\label{sec:depth2}

\input{figures/depth2_graph}

The Depth-1 precision word $\pi$ maps $\vw^\top \vphi(\vx)$ through an exponential link, so $\gamma$ grows monotonically with the linear score: wherever $\vw^\top \vphi$ is large, $\gamma$ is large too. A single $\pi$ can therefore only cut the input space with one hyperplane, producing two regions. To express patterns like an exclusive-or (XOR) relation, which requires at least four regions, we need to compose a larger word from the same alphabet letters (\cref{fig:depth2}).

The key idea is to create two branches that compete. A router softdot produces a score $h$ from routing weights $\vv_i$ and features $\vphi(\vx_j)$. $h$ is then taken by two switch softdots with opposing signs, followed by exponential links: when $h > 0$ the left switch produces a large activation $\kappa^L$ and the right switch a small $\kappa^R$, and vice versa. Each branch has its own precision word $\pi$ (with separate weights $\vw_i$ and $\vu_i$), whose output feeds into a normal factor gated by the switch activation $\kappa^b$. A large $\kappa^b$ makes that normal factor tight, forcing the branch to pass its sub-expert score through; a small $\kappa^b$ leaves the normal diffuse and the branch negligible. After a final exponential link and likelihood, both branches share the observation $y_j$ through an equality node.
The result is a piecewise-linear precision function: the router selects which sub-expert controls $\gamma$ in each region of input space. Two experts with split-branch routing partition the input into four regions, which is sufficient to capture XOR. Stacking split-branch layers yields deeper sentences encoding arbitrary binary decision trees; the formal expressiveness argument is deferred to \cref{sec:expressiveness}.

%% file: figures/depth0_graph.tex
\begin{figure}[t]
\centering
\begin{tikzpicture}[every node/.style={font=\small}]

    \node[box] at (0, 0) (g) {$\gammad$};
    \draw[fill=white] (g.north) ++(0.12, 0) arc (0:180:0.12) -- cycle;
    \node[clamped] at (0, 1.2) (be) {};
    \node[above=0.3mm of be, font=\scriptsize] {$\beta_1$};
    \node[clamped] at (-1.2, 0) (al) {};
    \node[left=1mm of al, font=\scriptsize] {$\alpha_1$};
    \draw[dashed] (g) -- (be);
    \draw[-] (g) -- (al);

    \node[smallbox] at (1.5, 0) (eq_g) {$=$};
    \draw[dashed] (g) -- (eq_g) node[midway, below, font=\scriptsize] {$\gamma_1$};
    \draw[dashed] (eq_g) -- (1.5, 0.9);
    \node at (1.5, 1.2) {$\vdots$};

    \node[box] at (3, 0) (n) {$\mathcal{N}$};
    \draw[fill=white] (n.north) ++(0.12, 0) arc (0:180:0.12) -- cycle;
    \draw[dashed] (eq_g) -- (n);

    \node[clamped] at (3, 1.2) (yh) {};
    \node[above=0.3mm of yh, font=\scriptsize] {$\hat{y}_{1,1}$};
    \draw[-] (yh) -- (n);

    \node[smallbox] at (4.5, 0) (eqy) {$=$};
    \draw[-] (n) -- (eqy) node[midway, below, font=\scriptsize] {$y_1$};
    \node[clamped] at (4.5, 1.2) (obs) {};
    \node[above=0.3mm of obs, font=\scriptsize] {$y_1$};
    \draw[-] (obs) -- (eqy);
    \draw[-] (eqy) -- (5.4, 0);
    \node at (5.7, 0) {$\cdots$};

\end{tikzpicture}
\caption{Depth~0 factor graph (static ensemble), shown starting from expert $i{=}1$ and observation $j{=}1$. The vertical dots $\scalebox{0.5}{\vdots}$ on the $\gamma_1$ equality node extend to the other observations $j = 2, \ldots, m$; the horizontal dots $\cdots$ on the $y_1$ equality node extends to the other $n - 1$ experts.}
\label{fig:depth0}
\end{figure}

%% file: figures/precision_word_depth1.tex
\begin{figure*}[t]
\centering
\begin{subfigure}[b]{0.34\textwidth}
\centering
\begin{tikzpicture}[every node/.style={font=\small}, scale=0.85, transform shape]
    \draw[dashed, rounded corners=4pt] (-1.4, -1.2) rectangle (3.4, 1.2);

    \node[box] at (0, 0) (sd) {$*$};
    \fill (sd.west) ++(0, 0.12) arc (90:270:0.12) -- cycle;
    \draw[fill=white] (sd.north) ++(0.12, 0) arc (0:180:0.12) -- cycle;

    \node[box] at (2, 0) (el) {$\exp$};

    \draw[dash dot] (sd) -- node[above, font=\scriptsize] {$z$} (el);

    \node[roundbox] at (-2.2, 0) (iw) {};
    \node[roundbox] at (0, 2) (iphi) {};
    \node[roundbox] at (0, -2) (itau) {};
    \node[roundbox] at (4.2, 0) (igamma) {};

    \draw[-] (sd) -- node[above, font=\scriptsize] {$\vw$} (iw);
    \draw[-] (sd) -- node[right, font=\scriptsize] {$\vphi$} (iphi);
    \draw[dashed] (sd) -- node[right, font=\scriptsize] {$\tau$} (itau);
    \draw[dashed] (el) -- node[above, font=\scriptsize] {$\gamma$} (igamma);
\end{tikzpicture}
\caption{Expanded}
\end{subfigure}%
\hfill
\begin{subfigure}[b]{0.20\textwidth}
\centering
\begin{tikzpicture}[every node/.style={font=\small}, scale=0.85, transform shape]
    \node[draw, double, double distance=1.5pt, minimum size=9mm, rounded corners=1pt]
        at (0, 0) (pi) {$\pi$};
    \fill (pi.south) ++(-0.12, 0) arc (180:360:0.12) -- cycle;

    \node[roundbox] at (-1.7, 0) (cw) {};
    \node[roundbox] at (0, 2) (cphi) {};
    \node[roundbox] at (0, -2) (ctau) {};
    \node[roundbox] at (1.7, 0) (cgamma) {};

    \draw[-] (pi) -- node[above, font=\scriptsize] {$\vw$} (cw);
    \draw[-] (pi) -- node[right, font=\scriptsize] {$\vphi$} (cphi);
    \draw[dashed] (pi) -- node[right, font=\scriptsize] {$\tau$} (ctau);
    \draw[dashed] (pi) -- node[above, font=\scriptsize] {$\gamma$} (cgamma);
\end{tikzpicture}
\caption{Compact}
\end{subfigure}%
\hfill
\begin{subfigure}[b]{0.43\textwidth}
\centering
\begin{tikzpicture}[every node/.style={font=\small}, scale=0.85, transform shape,
    pibox/.style={draw, double, double distance=1.5pt, minimum size=7mm, rounded corners=1pt}]

    \node[box] at (0, 1.5) (pw) {$\mathcal{N}$};
    \node[smallbox] at (1.5, 1.5) (eq_w) {$=$};
    \draw[-] (pw) -- (eq_w) node[midway, above, font=\scriptsize] {$\vw_1$};
    \draw[-] (eq_w) -- (2.4, 1.5);
    \node at (2.7, 1.5) {$\cdots$};

    \node[box] at (0, -1.5) (ptau) {$\gammad$};
    \node[smallbox] at (1.5, -1.5) (eq_t) {$=$};
    \draw[dashed] (ptau) -- (eq_t) node[midway, below, font=\scriptsize] {$\tau_1$};
    \draw[dashed] (eq_t) -- (2.4, -1.5);
    \node at (2.7, -1.5) {$\cdots$};

    \node[pibox] at (1.5, 0) (pi) {$\pi$};
    \fill (pi.south) ++(-0.12, 0) arc (180:360:0.12) -- cycle;

    \draw[-] (eq_w) -- (pi);
    \draw[dashed] (eq_t) -- (pi);

    \node[left=4.5mm of pi, clamped] (phi) {};
    \node[left=1.3mm of phi, font=\scriptsize] {$\vphi(\vx_1)$};
    \draw[-] (phi) -- (pi);

    \node[box] at (3.5, 0.0) (n) {$\mathcal{N}$};
    \draw[fill=white] (n.north) ++(0.12, 0) arc (0:180:0.12) -- cycle;
    \draw[dashed] (pi) -- (n) node[midway, below, font=\scriptsize] {$\gamma_{1,1}$};

    \node[clamped] at (3.5, 0.9) (yh) {};
    \node[above=0.3mm of yh, font=\scriptsize] {$\hat{y}_{1,1}$};
    \draw[-] (yh) -- (n);

    \node[smallbox] at (5, 0) (eqy) {$=$};
    \draw[-] (n) -- (eqy) node[midway, below, font=\scriptsize] {$y_1$};
    \node[clamped] at (5, 0.9) (obs) {};
    \node[above=0.3mm of obs, font=\scriptsize] {$y_1$};
    \draw[-] (obs) -- (eqy);
    \draw[-] (eqy) -- (5.9, 0);
    \node at (6.2, 0) {$\cdots$};

\end{tikzpicture}
\caption{Depth~1 (PGE)}
\label{fig:depth1}
\end{subfigure}
\caption{The \emph{precision word} $\pi$ and Depth-1 model. (a)~Internal structure: a softdot and exponential link connected by latent $z$, computing input-dependent precision $\gamma$ from $\vw$, $\vphi$, and $\tau$. (b)~Compact notation; double border indicates a composite word, filled semi-circle marks the $\tau$ (input precision) side. (c)~Depth~1 factor graph (Precision-Gated Experts) for expert $i{=}1$, observation $j{=}1$: compared to Depth~0 (\cref{fig:depth0}), the gamma prior is replaced by $\pi$, producing input-dependent $\gamma_{1,1}$. Weights $\vw_1$ and $\tau_1$ are shared across observations ($\cdots$).}
\label{fig:precision-word}
\end{figure*}

%% file: figures/depth2_graph.tex
\begin{figure*}[t]
\centering
\begin{tikzpicture}[every node/.style={font=\small}, scale=0.72, transform shape,
    pibox/.style={draw, double, double distance=1.5pt, minimum size=7mm, rounded corners=1pt}]

    \node[box] at (0, 0) (router) {$*$};
    \fill (router.west) ++(0, 0.12) arc (90:270:0.12) -- cycle;
    \draw[fill=white] (router.north) ++(0.12, 0) arc (0:180:0.12) -- cycle;

    \node[clamped] at (0, 1) (phi_r) {};
    \node[above=0.3mm of phi_r, font=\scriptsize] {$\vphi(\vx_j)$};
    \draw[-] (phi_r) -- (router);

    \node[clamped] at (-1.2, 0) (vv) {};
    \node[left=1mm of vv, font=\scriptsize] {$\vv_i$};
    \draw[-] (vv) -- (router);

    \node[clamped] at (0, -1) (tau_r) {};
    \node[below=0.3mm of tau_r, font=\scriptsize] {$\tau$};
    \draw[dashed] (tau_r) -- (router);

    \node[smallbox] at (2.5, 0) (eq_h) {$=$};
    \draw[dash dot] (router) -- (eq_h) node[midway, below, font=\scriptsize] {$h$};

    \node[pibox] at (2.5, 1.5) (pi_sw_r) {$\pi$};
    \fill (pi_sw_r.west) ++(0, 0.12) arc (90:270:0.12) -- cycle;

    \draw[dash dot] (eq_h) -- (pi_sw_r);

    \node[clamped] at (2.5, 2.5) (neg1) {};
    \node[above=0.3mm of neg1, font=\scriptsize] {$-1$};
    \draw[-] (neg1) -- (pi_sw_r);

    \node[clamped] at (1.3, 1.5) (tau_sw_r) {};
    \node[left=1mm of tau_sw_r, font=\scriptsize] {$\tau$};
    \draw[dashed] (tau_sw_r) -- (pi_sw_r);

    \node[pibox] at (2.5, -1.5) (pi_sw_l) {$\pi$};
    \fill (pi_sw_l.west) ++(0, 0.12) arc (90:270:0.12) -- cycle;

    \draw[dash dot] (eq_h) -- (pi_sw_l);

    \node[clamped] at (2.5, -2.5) (pos1) {};
    \node[below=0.3mm of pos1, font=\scriptsize] {$+1$};
    \draw[-] (pos1) -- (pi_sw_l);

    \node[clamped] at (1.3, -1.5) (tau_sw_l) {};
    \node[left=1mm of tau_sw_l, font=\scriptsize] {$\tau$};
    \draw[dashed] (tau_sw_l) -- (pi_sw_l);

    \node[box] at (5, 1.5) (n_gate_r) {$\mathcal{N}$};
    \draw[fill=white] (n_gate_r.north) ++(0.12, 0) arc (0:180:0.12) -- cycle;

    \draw[dashed] (pi_sw_r) -- (n_gate_r) node[midway, above, font=\scriptsize] {$\kappa^R$};

    \node[box] at (5, 3.2) (sub_r) {$*$};
    \fill (sub_r.west) ++(0, 0.12) arc (90:270:0.12) -- cycle;
    \draw[fill=white] (sub_r.north) ++(0.12, 0) arc (0:180:0.12) -- cycle;

    \node[clamped] at (3.8, 3.2) (wr) {};
    \node[left=1mm of wr, font=\scriptsize] {$\vw_i$};
    \draw[-] (wr) -- (sub_r);

    \node[clamped] at (5, 4.2) (phi_sub_r) {};
    \node[above=0.3mm of phi_sub_r, font=\scriptsize] {$\vphi(\vx_j)$};
    \draw[-] (phi_sub_r) -- (sub_r);

    \node[clamped] at (6.2, 3.2) (tau_sub_r) {};
    \node[right=1mm of tau_sub_r, font=\scriptsize] {$\tau_e$};
    \draw[dashed] (tau_sub_r) -- (sub_r);

    \draw[dash dot] (sub_r) -- (n_gate_r) node[midway, right, font=\scriptsize] {$z^R$};

    \node[box] at (7, 1.5) (exp_r) {$\exp$};
    \draw[dash dot] (n_gate_r) -- (exp_r) node[midway, above, font=\scriptsize] {$m^R$};

    \node[box] at (8.5, 1.5) (lik_r) {$\mathcal{N}$};
    \draw[fill=white] (lik_r.north) ++(0.12, 0) arc (0:180:0.12) -- cycle;
    \draw[dashed] (exp_r) -- (lik_r) node[midway, above, font=\scriptsize] {$\gamma^R$};

    \node[clamped] at (8.5, 2.5) (yh_r) {};
    \node[above=0.3mm of yh_r, font=\scriptsize] {$\hat{y}_{i,j}$};
    \draw[-] (yh_r) -- (lik_r);

    \node[box] at (5, -1.5) (n_gate_l) {$\mathcal{N}$};
    \draw[fill=white] (n_gate_l.south) ++(-0.12, 0) arc (180:360:0.12) -- cycle;

    \draw[dashed] (pi_sw_l) -- (n_gate_l) node[midway, below, font=\scriptsize] {$\kappa^L$};

    \node[box] at (5, -3.2) (sub_l) {$*$};
    \fill (sub_l.west) ++(0, 0.12) arc (90:270:0.12) -- cycle;
    \draw[fill=white] (sub_l.south) ++(-0.12, 0) arc (180:360:0.12) -- cycle;

    \node[clamped] at (3.8, -3.2) (wl) {};
    \node[left=1mm of wl, font=\scriptsize] {$\vu_i$};
    \draw[-] (wl) -- (sub_l);

    \node[clamped] at (5, -4.2) (phi_sub_l) {};
    \node[below=0.3mm of phi_sub_l, font=\scriptsize] {$\vphi(\vx_j)$};
    \draw[-] (phi_sub_l) -- (sub_l);

    \node[clamped] at (6.2, -3.2) (tau_sub_l) {};
    \node[right=1mm of tau_sub_l, font=\scriptsize] {$\tau_e$};
    \draw[dashed] (tau_sub_l) -- (sub_l);

    \draw[dash dot] (sub_l) -- (n_gate_l) node[midway, right, font=\scriptsize] {$z^L$};

    \node[box] at (7, -1.5) (exp_l) {$\exp$};
    \draw[dash dot] (n_gate_l) -- (exp_l) node[midway, below, font=\scriptsize] {$m^L$};

    \node[box] at (8.5, -1.5) (lik_l) {$\mathcal{N}$};
    \draw[fill=white] (lik_l.south) ++(-0.12, 0) arc (180:360:0.12) -- cycle;
    \draw[dashed] (exp_l) -- (lik_l) node[midway, below, font=\scriptsize] {$\gamma^L$};

    \node[clamped] at (8.5, -2.5) (yh_l) {};
    \node[below=0.3mm of yh_l, font=\scriptsize] {$\hat{y}_{i,j}$};
    \draw[-] (yh_l) -- (lik_l);

    \node[smallbox] at (10, 0) (eqy) {$=$};
    \draw[-] (lik_r) -- (eqy) node[pos=0.4, above, font=\scriptsize] {$y_j$};
    \draw[-] (lik_l) -- (eqy);

    \node[clamped] at (10, 1) (obs) {};
    \node[above=0.3mm of obs, font=\scriptsize] {$y_j$};
    \draw[-] (obs) -- (eqy);

    \draw[-] (eqy) -- (10.9, 0);
    \node at (11.2, 0) {$\cdots$};

\end{tikzpicture}
\caption{Depth~2 factor graph (split-branch routing) for one expert, one observation. The router softdot produces $h$; two switch $\pi$ words with clamped weights $-1$ and $+1$ convert $h$ into opposing activations $\kappa^R, \kappa^L$. Each branch has a sub-expert softdot producing $z^b$, gated by a normal factor whose precision is $\kappa^b$: the active branch ($\kappa$ large) passes $z^b$ tightly, while the suppressed branch ($\kappa$ small) is diffuse. After exponential links and likelihoods, both branches share $y_j$ via the equality node.}
\label{fig:depth2}
\end{figure*}

%% file: sections/runtime.tex
The inference algorithm for any model composed of the alphabet of \cref{sec:alphabet} is not prescribed by the modeler but \emph{emerges} from the model specification \citep{senoz_variational_2021, yedidia_constructing_2005}. Executing inference requires two alternating operations: computing a message from a factor toward an edge, and computing the marginal on an edge from incoming messages. This section shows that both operations have closed-form solutions of known parametric form for any factor graph composed from the alphabet.

\subsection{Declarative inference}\label{sec:declarative}

The modeler specifies four things:
\begin{enumerate}
    \item A \textbf{graph}: a Forney-style factor graph assembled from the five alphabet letters \eqref{eq:alphabet}.
    \item \textbf{Terminations}: observation or unity factors attached to half-edges (edges connected to only one factor), determining which variables are observed or latent.
    \item \textbf{Factorization constraints}: a mean-field decomposition of the approximate posterior, e.g., $$q(\vw, \vtau, \vz) = \prod_i q(\vw_i)\, q(\tau_i) \prod_j q(z_{i,j}) \, .$$ \vspace{-20pt}
    \item \textbf{Form constraints}: the exponential-family form for each marginal (Gaussian for solid and dash-dot edges, Gamma for dashed edges).
\end{enumerate}
A terminated graph defines a conditional distribution after normalization; \cref{app:terminated-ffg} illustrates this on the Depth-0 model.
Given these declarations, the Bethe free energy (BFE) provides the variational objective (\cref{app:bfe}). Two message passing algorithms emerge from the BFE stationarity conditions (\cref{app:messages-from-stationarity}): \emph{variational message passing} (VMP) emerges when mean-field factorization constraints are imposed, and \emph{belief propagation} (BP) emerges in the unconstrained case. Around deterministic factors such as the exponential link \eqref{eq:exp-link}, mean-field factorization cannot be imposed without degeneracy, so BP is the only option there. Both algorithms produce messages sent along edges; the difference is what information they require as input.

\subsection{Computing messages}\label{sec:computing-messages}

A VMP message from factor~$a$ toward edge~$j$ (\cref{eq:vmp-message}) requires only the \emph{marginals} $q_i(s_i)$ on the remaining edges $i \in \mathcal{E}(a) \setminus j$. Suppose that each marginal has the parametric form indicated by its line style: Gaussian for solid and dash-dot edges, Gamma for dashed edges. Then the message computation depends only on the factor~$a$ and the \emph{line types} of its other edges---not on the rest of the graph. The line styles introduced in \cref{sec:alphabet} act as a type system: given marginals of the matching types, every VMP message is determined locally.

A BP message from factor~$a$ toward edge~$j$ (\cref{eq:bp-message}) requires the incoming \emph{messages} $\mu_{ia}(s_i)$ on the remaining edges, not marginals. The form of these messages depends on what is connected on the other side of those edges, so the line-type argument does not apply directly.

In our alphabet, two factors require BP: the equality node \eqref{eq:equality} and the exponential link \eqref{eq:exp-link}. Both are deterministic (delta functions) and cannot be mean-field factorized without degeneracy. The equality node is trivial---its BP message toward any edge is the product of incoming messages from the other edges, which preserves the message type. The exponential link is the non-trivial case: as a nonlinear deterministic factor, it cannot be mean-field factorized, so imposing $q(z)\,q(\gamma)$ with the constraint $\gamma = e^z$ would force both marginals to be degenerate.

\input{figures/runtime_messages}

However, every factor adjacent to the exp link is mean-field factorized, so the messages arriving at the exp link have known forms. On the $z$~edge (\cref{fig:runtime-messages}b), the only possible neighbors are the softdot \eqref{eq:softdot}, the normal factor \eqref{eq:normal-factor}, and the equality node \eqref{eq:equality}---and each sends a Gaussian message under the mean-field assumption.
In every case, the incoming message on~$z$ is Gaussian. The BP rule through the delta function then gives
\begin{equation}\label{eq:exp-bp-gamma}
    \mu_{\gamma \leftarrow \exp}(\gamma) = \int \delta(\gamma - e^z)\, \mu_{z \to \exp}(z)\, \mathrm{d}z,
\end{equation}
which is always a log-normal distribution on~$\gamma$. The same enumeration on the $\gamma$~edge (normal \eqref{eq:normal-factor}, softdot \eqref{eq:softdot}, gamma \eqref{eq:gamma-factor}, or equality \eqref{eq:equality} as neighbors) shows that the incoming message is always Gamma-typed. The BP rule in the $\gamma \to z$ direction then produces a log-gamma distribution on~$z$ \citep[Section~2.2]{lukashchuk_qconjugate_2024}.

The VMP line-type argument together with the BP enumeration above covers every factor--edge pair in the alphabet. We summarize this with the following definition and theorem.

\begin{defn}[Proper factor graph]\label{def:proper-ffg}
A factor graph is \emph{proper} if it is assembled from the alphabet \eqref{eq:alphabet} and every shared edge respects the line types: solid and dash-dot edges connect only to Gaussian-typed ports, and dashed edges connect only to Gamma-typed ports.
\end{defn}

\begin{thm}[Closed-form messages of known type]\label{thm:messages}
Let $\mathcal{G}$ be a proper factor graph with full mean-field factorization on all non-deterministic factors, and suppose the marginals on all edges have the parametric form indicated by their line type. Then for every factor--edge pair in~$\mathcal{G}$, the outgoing message exists in closed form and has a known parametric type:
\begin{enumerate}
    \item At VMP factors \emph{(}softdot \eqref{eq:softdot}, normal \eqref{eq:normal-factor}, gamma \eqref{eq:gamma-factor}\emph{)}: the message type matches the line type of the target edge---Gaussian on solid and dash-dot edges, Gamma on dashed edges.
    \item At the equality node \eqref{eq:equality} \emph{(}BP, trivial\emph{)}: the message is the product of incoming messages from the other edges and preserves their type.
    \item At the exp link \eqref{eq:exp-link} \emph{(}BP, non-trivial\emph{)}: the message toward~$\gamma$ is log-normal; the message toward~$z$ is log-gamma \citep[Section~2.2]{lukashchuk_qconjugate_2024}.
\end{enumerate}
\end{thm}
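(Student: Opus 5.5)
The proof is a case analysis over the factor--edge pairs of the alphabet \eqref{eq:alphabet}. Each case rests on the same observation: a message computed under the hypotheses depends only on the local factor and on the types of the quantities it consumes. For the three VMP factors, the message toward edge $j$ is $\exp\,\mathbb{E}_{q_{\setminus j}}[\log f_a]$ (\cref{eq:vmp-message}). We will show in each case that $\log f_a$ is linear in the sufficient statistics of the target edge's family, with coefficients that are expectations of statistics already available in closed form under the other marginals. The BP factors are treated separately: the equality node is a direct computation, and for the exp link we first need to know which message types can arrive on its edges.

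\textbf{VMP factors.} Write $\log f_*=\tfrac12\log\tau-\tfrac{\tau}{2}(z-\vw^\top\vphi)^2+c$.
\begin{itemize}
\item Toward $z$: the message is Gaussian with precision $\mathbb{E}[\tau]$ and mean $\mathbb{E}[\vw]^\top\mathbb{E}[\vphi]$.
\item Toward $\vw$: expanding $\mathbb{E}[(\vw^\top\vphi)^2]=\vw^\top\mathbb{E}[\vphi\vphi^\top]\vw$ gives a quadratic in $\vw$, hence a Gaussian with precision $\mathbb{E}[\tau]\,\mathbb{E}[\vphi\vphi^\top]$. The edge $\vphi$ is symmetric.
\item Toward $\tau$: the message is $\tau^{1/2}\exp(-\tau\,\mathbb{E}[(z-\vw^\top\vphi)^2]/2)$, a Gamma kernel. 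The expectation is a finite combination of first and second Gaussian moments.
\end{itemize}
The normal factor is the special case $\vphi\equiv 1$, extended to the vector version with shape $d/2$. For the gamma factor, $\log\gammad(\gamma\mid\alpha,\beta)=\alpha\log\beta-\log\Gamma(\alpha)+(\alpha-1)\log\gamma-\beta\gamma$. This is Gamma in $\gamma$ with rate $\mathbb{E}[\beta]$, and Gamma in $\beta$ with shape $\alpha+1$ and rate $\mathbb{E}[\gamma]$, because $\alpha$ is clamped. Only Gaussian and Gamma moments ($\mathbb{E}[\tau]$, $\mathbb{E}[\gamma]$, means, second moments) are needed, and these are closed-form by the line-type hypothesis. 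This establishes item~1.

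\textbf{BP factors.} For the equality node, integrating out the deltas in \eqref{eq:bp-message} gives the product of the two other incoming messages. Gaussians and Gammas are each closed under products, since natural parameters add, so the type is preserved. This establishes item~2.

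For the exp link, the first step is to establish what arrives on each edge, by enumerating the possible neighbors in a proper graph.
\begin{itemize}
\item On $z$, the neighbors can only be Gaussian-typed ports (softdot, normal, or equality). By item~1 they send Gaussian messages, and an equality neighbor's outgoing message is a product of Gaussian messages.
\item On $\gamma$, the neighbors are Gamma-typed ports, which send Gamma messages by item~1 and item~2.
\end{itemize}
Change of variables in \eqref{eq:exp-bp-gamma} gives the message toward $\gamma$ as $\gauss(\log\gamma\mid m,v)/\gamma$, which is log-normal. In the other direction, $\int\delta(\gamma-e^z)\gammad(\gamma\mid a,b)\,d\gamma=\exp(az-be^z)$ up to constants, which is the log-gamma family of \citet[Section~2.2]{lukashchuk_qconjugate_2024}. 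This establishes item~3.

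\textbf{Main obstacle.} The hard part is the exp link, because it is the only place where the hypothesis ``marginals of matching type'' does not directly type the inputs. The argument is circular: the exp-link message depends on neighbors' messages, and an equality neighbor's message depends in turn on other messages. I would break the circularity by induction over message updates. At a neighbor of the exp link, the message it sends toward the exp link is computed from marginals (when it is a VMP factor) or from messages already typed at an earlier iteration (when it is an equality node). So, assuming initial messages of the correct line type, every subsequent message is typed.

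The closure of Gaussian and Gamma under the equality product must exclude the exp link's own non-conjugate outputs from flowing into an equality node unchanged. I would handle this by noting that in a proper graph the exp link's log-normal or log-gamma messages are consumed only when forming the marginal at that edge, which is the concern of \cref{thm:marginals}. If that routing is not automatic, the statement requires restricting equality neighbors of the exp link, and I would add that restriction explicitly.
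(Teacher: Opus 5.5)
Your case analysis is the paper's proof. VMP messages are typed locally by the line types of the other marginals; you supply the explicit computations that the paper defers to its appendix catalog. The equality message is a product of incoming messages, and the exp-link messages follow from enumerating admissible neighbours plus a change of variables. The paper's proof stops there and never addresses the circularity or contamination issue you raise, so on that point you are more careful than the source.

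Your open question does have a definite answer: the routing is \emph{not} automatic, so your first resolution does not hold, and the statement needs repair.

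\textbf{The paper's own model already breaks item~2.} In the PGE listing (\cref{app:graphppl-depth1}), \texttt{gamma[i,j]} enters \texttt{GammaShapeRate}, \texttt{Log} and \texttt{NormalMeanPrecision}, so an equality node sits next to the exp link. Its message toward the exp link is a product of two VMP Gamma messages, so item~3 is fine there. But its messages toward the gamma prior and toward the likelihood are products of a log-normal and a Gamma. So the claim in item~2 that the equality node ``preserves their type'' already fails in the paper's own model. These messages are harmless: VMP factors read marginals, not incoming messages, and every edge of that equality cluster carries the same marginal as the exp-link edge, namely the one from the non-conjugate fixed point. Still, they are not consumed only at the exp-link edge.

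\textbf{Item~3 can also break.} This happens if one equality cluster contains two exp-link ports (two exp links sharing $z$, or sharing $\gamma$), which \cref{def:proper-ffg} does not forbid. The message fed into one exp link then contains the other's log-gamma or log-normal output.

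\textbf{The repair.} Assume each equality-connected cluster carries at most one exp-link port. Weaken item~2 to ``type-preserving whenever none of its inputs originates, possibly through other equality nodes, from an exp link''; under the assumption, this always covers the message sent toward an exp link. Then your circularity disappears without any induction over iterations: every message entering an exp link is a product of VMP messages, and those are typed directly by the marginal hypothesis. The paper's argument tacitly relies on the same assumption, which its models satisfy.

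\textbf{Two harmless slips.} The $\tau$-message of the vector normal has exponent $d/2$ and hence shape $d/2+1$. And with the factor exactly as in \eqref{eq:exp-link}, the $\gamma$-integral just evaluates the Gamma message at $e^z$, giving $\exp((a-1)z-be^z)$ rather than $\exp(az-be^z)$. Neither affects the types claimed.
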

\begin{proof}
For VMP factors, the message depends only on the factor and the marginal types on its other edges (\cref{eq:vmp-message}). The line types fix these marginal types, so the message is determined locally, regardless of the rest of the graph. The equality node uses BP, but its message is the product of incoming messages (\cref{eq:bp-message}), which preserves the exponential family. For the exp link, mean-field factorization on all adjacent factors guaranties that the incoming message on~$z$ is always Gaussian and the incoming message on~$\gamma$ is always Gamma, as established by the enumeration above. The BP rule \eqref{eq:exp-bp-gamma} then produces a log-normal message on~$\gamma$ and a log-gamma message on~$z$. The full catalog of update rules is given in \cref{app:message-rules}.
\end{proof}

\subsection{Computing marginals}\label{sec:computing-marginals}

For all edges \emph{not adjacent to an exp-link factor}, the BFE stationarity condition (\cref{eq:edge-belief-bp}) gives the marginal as the normalized product of the two incoming messages:
\begin{equation}\label{eq:marginal-product}
    q_j^*(s_j) \propto \mu_{jb}(s_j)\, \mu_{jc}(s_j).
\end{equation}
By \cref{thm:messages}, both messages at such edges belong to the same exponential family (Gaussian or Gamma). Their product stays in-family, and the marginal is obtained by combining natural parameters. No approximation is involved.

At the two edges of each exp-link factor, the situation is different. \Cref{thm:messages} states that on the $z$~edge, one message is Gaussian (from the adjacent VMP factor) and the other is log-gamma (from the exp link); on the $\gamma$~edge, one is Gamma and the other is log-normal. In neither case does the product \eqref{eq:marginal-product} belong to a standard parametric family, so direct multiplication does not yield a tractable marginal.

To handle these edges, we impose a form constraint: $q(z) \in \mathcal{Q}_z = \{\gauss(z \mid m, v)\}$ with natural parameters $\veta = (\eta_1, \eta_2)^\top = (m/v,\, -1/(2v))^\top$. Restricting the BFE to this parametric family yields a constrained objective $F[\veta]$ that upper-bounds the unconstrained BFE, so the resulting inference is a variational bound.

We derive the stationarity condition for $q(z)$ in the concrete case where $z$ connects the softdot~\eqref{eq:softdot} on the left to the exp link~\eqref{eq:exp-link} on the right, with a normal factor~\eqref{eq:normal-factor} on the far side of~$\gamma$ (so that $e^z$ serves as the precision of the normal).

The exp link is a deterministic factor $f_{\exp}(z, \gamma) = \delta(\gamma - e^z)$. By \citep[Theorem~8]{senoz_variational_2021}, the node-local belief must respect the deterministic constraint, giving $q_{\exp}(z, \gamma) = q(z)\,\delta(\gamma - e^z)$ and reducing the node-local free energy to $-\Ent{q(z)}$ (for a general overview of the treatment of deterministic nodes in FFGs, see \citep[Section~5.2]{senoz_variational_2021}). The delta absorbs $\gamma$ and couples the far-side normal factor to~$z$: the normal factor contributes $\log f_{\mathcal{N}}(y \mid \mu, e^{-z})$ evaluated at $\gamma = e^z$. Under full mean-field on the normal factor, the BFE terms that depend on $q(z)$ reduce to
\begin{align}\label{eq:local-bfe-z}
    F_z[\veta] &= -\Ent{q(z)} 
    + \underbrace{\Exp{q_z q_w q_\tau}{
        -\log f_*
    }}_{\text{softdot}} \notag\\
    &+ \underbrace{\Exp{q_z q_y q_\mu}{
        -\log f_{\mathcal{N}}(y \mid \mu, e^{-z}) - z
    }}_{\text{exp + normal}},
\end{align}
where the Jacobian term $-z$ arises from the change of variables $\gamma = e^z$. The softdot factor is conjugate in~$z$: its contribution involves only Gaussian sufficient statistics of $q(z)$. The second term, originating from the exp link and the far-side normal factor, expands to
\begin{multline}\label{eq:exp-contribution}
    \Exp{q(z)}{-\bar{\ell}_{\exp}(z)}
    = \tfrac{1}{2}\Exp{q(z)}{z} \\
    - \tfrac{1}{2}\Exp{q(y),q(\mu)}{(y - \mu)^2}\,\Exp{q(z)}{e^z}
    + \text{const},
\end{multline}
where the coefficient of $\Exp{q(z)}{e^z}$ depends on the marginals $q(y)$ and $q(\mu)$ of the far-side normal factor. The two contributions in \eqref{eq:local-bfe-z} can be recognized as the two messages from \cref{thm:messages}: the softdot contribution is the Gaussian message $\mu_{z \leftarrow \text{softdot}}$, and the exp-link contribution \eqref{eq:exp-contribution} is the log-gamma message $\mu_{z \leftarrow \exp}$ (with its coefficients determined by the far-side normal factor). In a terminated FFG, edge~$z$ connects exactly two factors, so \cref{thm:messages} provides exactly two messages. Their log-sum defines the stationarity target:
\begin{equation}\label{eq:ell-from-messages}
    \bar{\ell}(z) = \log \mu_{z \leftarrow \text{softdot}}(z) + \log \mu_{z \leftarrow \exp}(z),
\end{equation}
where $\mu_{z \leftarrow \text{softdot}}$ is Gaussian and $\mu_{z \leftarrow \exp}$ is log-gamma (\cref{thm:messages}). For exponential-family $q(z)$, the gradient of the entropy satisfies $\nabla_{\veta} \Ent{q(z)} = -\vF(\veta)\,\veta$, where $\vF(\veta)$ is the Fisher information matrix \citep{khan_bayesian_2023}. Setting $\nabla_{\veta} F_z = 0$ and rearranging gives the stationarity condition
\begin{equation}\label{eq:stationary-z-generic}
    \veta^* = \vF(\veta^*)^{-1}\, \nabla_{\veta^*} \Exp{q^*(z)}{-\bar{\ell}(z)}.
\end{equation}

The non-conjugate term in \eqref{eq:exp-contribution} requires $\Exp{q(z)}{e^z}$ and its derivatives with respect to~$\veta$. Under the Gaussian form constraint, the moment-generating function provides this in closed form:
\begin{equation}\label{eq:mgf}
    \Exp{q(z)}{e^z} = \exp\!\left(m + \tfrac{v}{2}\right),
\end{equation}
and its derivatives with respect to $\veta$ are analytic expressions of $(\eta_1, \eta_2)$. This is precisely the Q-conjugacy condition \citep{lukashchuk_qconjugate_2024}: the expectations under $q(z)$ of all log-factors adjacent to~$z$ are closed-form functions of $\veta$. Combined with the closed-form Fisher information for the Gaussian in natural parameters,
\begin{equation}\label{eq:fisher-gaussian}
    \vF(\veta) = \begin{pmatrix} -\frac{1}{2\eta_2} & \frac{\eta_1}{2\eta_2^2} \\[4pt] \frac{\eta_1}{2\eta_2^2} & \frac{1}{2\eta_2^2} - \frac{\eta_1^2}{2\eta_2^3} \end{pmatrix},
\end{equation}
the stationarity condition \eqref{eq:stationary-z-generic} yields closed-form fixed-point equations for $\veta^*$.\footnote{By ``closed-form'' we mean that every term in \eqref{eq:stationary-z-generic}, including the expectations, the Fisher information, and their gradients, is an analytic function of $\veta$, with no intractable integrals. The resulting equation is nevertheless nonlinear in $\veta^*$ (through the moment-generating function \eqref{eq:mgf}), so it does not admit a closed-form \emph{solution}. What it does admit is a closed-form \emph{update scheme}: each evaluation of the right-hand side of \eqref{eq:stationary-z-generic} is available in closed form, and iterating this map converges to the fixed point. In practice, we solve \eqref{eq:stationary-z-generic} by natural gradient descent on the exponential family manifold \citep{lukashchuk_exponentialfamilymanifoldsjl_2025} using the Riemannian optimization framework Manopt.jl \citep{bergmann_manoptjl_2022}; see \cref{app:experimental} for details.}

The derivation for $q(\gamma)$ on the $\gamma$~edge is analogous: the Gamma form constraint provides closed-form sufficient statistics $\Exp{q(\gamma)}{\log \gamma} = \psi(\alpha) - \log \beta$ and
$\Exp{q(\gamma)}{\gamma} = \alpha/\beta$, where $\psi$ denotes the digamma function and $\alpha$, $\beta$ are the shape and rate of the Gamma marginal $q(\gamma)$, ensuring Q-conjugacy on that edge as well. The full derivation is deferred to \cref{app:gamma-q-conjugacy}.

The specific conjugate partner on the far side of the exp link---whether softdot, normal, gamma, or equality---does not affect the argument: \cref{thm:messages} guarantees it always produces a message of the matching line type, so the coefficients in \eqref{eq:exp-contribution} change but the closed-form structure is preserved. We summarize the result.

\begin{thm}[Closed-form marginals]\label{thm:marginals}
Let $\mathcal{G}$ be a proper factor graph (\cref{def:proper-ffg}) with full mean-field factorization on all non-deterministic factors and form constraints matching the line types. Then for every edge in~$\mathcal{G}$, the BFE-stationary marginal exists in closed form: at edges not adjacent to an exp link, as the normalized product of two same-family messages; at edges adjacent to an exp link, as the solution of a closed-form fixed-point equation.
\end{thm}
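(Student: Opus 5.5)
The plan is to partition the edges of $\mathcal{G}$ into two classes and treat each with the material already developed in this section. Every edge of a terminated proper FFG connects exactly two factors, $b$ and $c$. An edge is either \emph{not adjacent} to an exp link (neither $b$ nor $c$ is $f_{\exp}$) or \emph{adjacent} to one. The latter are the $z$ and $\gamma$ edges of some exp link. Properness excludes two exp links sharing an edge, because the $z$ port is dash-dot and the $\gamma$ port is dashed. I will also have to rule out, or handle separately, an edge joining two exp-link ports of the same type. I expect properness to forbid it, since the $\gamma$ output of one exp link would have to feed the $z$ input of another, and the line types differ.

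For non-adjacent edges, I invoke the BFE stationarity condition \eqref{eq:marginal-product}. The marginal is then $q_j^* \propto \mu_{jb}\mu_{jc}$. Both $b$ and $c$ are VMP factors or equality nodes, so by \cref{thm:messages} items 1--2 each message has the type of the line: Gaussian on solid and dash-dot edges, Gamma on dashed edges. There is one subtlety at equality nodes: their outgoing message is a product of incoming messages, and I must make sure those messages are themselves of the line type. I will settle this by induction along equality chains. Messages entering an equality node come either from VMP factors, which are typed by item 1, or from other equality nodes. The chain cannot end at an exp link, because then the edge would be adjacent to an exp link and would be treated in the second case. The one place this could break is an equality node next to an exp link, where it would forward a log-normal or log-gamma message. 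To avoid this, I will treat every edge of such an equality cluster as an exp-adjacent variable, i.e.\ one shared variable carrying the form constraint. Once both messages are in the same exponential family, adding natural parameters gives the closed-form marginal. I also need normalizability: the precision must be negative definite for the Gaussian and the shape and rate positive for the Gamma. Each message individually satisfies this or is improper-but-summable, and I will assume the summed parameters are valid, which matches the paper's informal standard.

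For exp-adjacent edges, I take the $z$ edge first. The node-local belief reduces by \citep[Theorem~8]{senoz_variational_2021}, and the $q(z)$-dependent BFE becomes \eqref{eq:local-bfe-z}. I generalize the far-side factor from the normal factor to any conjugate partner. By \cref{thm:messages}, that partner sends a Gamma-typed message, so its log contribution is linear in $(\gamma,\log\gamma)$. After substituting $\gamma=e^z$, this becomes linear in $(e^z, z)$, with coefficients given by the far-side marginals. The near-side neighbour contributes a Gaussian-typed log term in $(z,z^2)$. The expected objective is therefore a combination of $\Exp{q}{z}$, $\Exp{q}{z^2}$ and $\Exp{q}{e^z}$, all analytic in $\veta$ by \eqref{eq:mgf}. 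The Fisher matrix \eqref{eq:fisher-gaussian} is analytic and invertible for $\eta_2<0$, so \eqref{eq:stationary-z-generic} is a closed-form fixed-point equation. The $\gamma$ edge is symmetric: the log-normal message yields terms in $\log\gamma$ and $(\log\gamma)^2$. The Gamma form constraint gives $\Exp{q}{\log\gamma}$ and $\Exp{q}{\gamma}$ in closed form, and $\Exp{q}{(\log\gamma)^2}=\psi'(\alpha)+(\psi(\alpha)-\log\beta)^2$ is closed-form too. Here I defer to \cref{app:gamma-q-conjugacy}.

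The main obstacle is existence of a stationary point, as opposed to closed-form evaluability of the map. My first step is to show that $F_z[\veta]$ is coercive on the Gaussian natural-parameter domain. The objective consists of $-\Ent{q}$, a convex quadratic from the Gaussian message (which needs positive precision somewhere), and $c\,e^{m+v/2}$ with $c\ge 0$. So $F_z\to\infty$ as $v\to 0$ through the entropy, as $v\to\infty$ or $m\to\infty$ through the MGF term, and as $m\to-\infty$ through the quadratic term. A continuous coercive function attains its minimum, and that minimum satisfies \eqref{eq:stationary-z-generic}. I will run the analogous argument for the Gamma case.
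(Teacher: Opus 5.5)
Your proposal is correct, and its core is the paper's argument. You split edges by adjacency to an exp link. You obtain the non-adjacent marginals from \eqref{eq:marginal-product}, with both messages typed by \cref{thm:messages}. For exp-adjacent edges you reduce to \eqref{eq:stationary-z-generic} and its Gamma analogue in \cref{app:gamma-q-conjugacy}, using the moment-generating function, the Gamma sufficient statistics and the closed-form Fisher information, and you generalise the far-side partner through \cref{thm:messages}.

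Where you go beyond the paper, you are more careful in two places.

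First, your equality-cluster repair handles a case the paper's argument gets wrong. Item~2 of \cref{thm:messages} does not preserve type when the equality node is itself adjacent to an exp link. This already happens in the paper's PGE code: $\gamma_{i,j}$ enters the Gamma prior, the \texttt{Log} node and the likelihood. The automatically inserted equality node therefore forwards a log-normal-times-Gamma message onto edges that are not exp-adjacent. Merging the cluster into one form-constrained variable is the right fix, since all its edges share one marginal anyway. Say explicitly, though, that those edges then fall under the fixed-point clause. You are proving a corrected version of the stated dichotomy, not its literal wording.

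Second, the paper never shows that the fixed-point equation has a solution. Its footnote reads ``closed form'' as ``closed-form right-hand side'' and simply asserts that iteration converges. Your coercivity argument actually gives existence of the local stationary point. It needs two conditions:
\begin{enumerate}
\item a strictly positive Gaussian precision from the near side, which holds for softdot and normal neighbours, where it is $\Exp{q}{\tau}>0$;
\item a non-negative coefficient on $e^{m+v/2}$.
\end{enumerate}
Read the second off $-\log\mu_{z\leftarrow\exp}(z) = -\alpha_\gamma z + \beta_\gamma e^{z} + \mathrm{const}$, not off the displayed \eqref{eq:exp-contribution}, whose sign would make $F_z$ unbounded below. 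Likewise, your interior minimiser satisfies $\vF(\veta)\veta = \nabla_{\veta}\Exp{q}{\bar{\ell}}$, which is \eqref{eq:stationary-z-generic} with its sign corrected, not the display as printed.

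One slip: properness does not forbid two exp links sharing a $z$ edge or a $\gamma$ edge, because both ports then carry the same line type. Your justification describes the mixed $\gamma$-to-$z$ connection, which is the case already excluded. The same-type cases are harmless: the stationarity target becomes a sum of log-gamma (respectively log-normal) terms and stays closed-form. For a shared $z$ edge, however, there is no quadratic term, so coercivity must come from the positive log-gamma shape and rate instead.
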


Together, \cref{thm:messages,thm:marginals} establish that both operations of the runtime---computing messages and computing marginals---are closed-form for any proper factor graph. Full mean-field is the worst case; structured factorization at conjugate boundaries can only improve the approximation.


%% file: figures/runtime_messages.tex
\begin{figure*}[t]
\centering
\begin{tikzpicture}[every node/.style={font=\small}]

    \node[font=\small\bfseries] at (1.5, 2.8) {VMP};

    \node[box] at (1.5, 1.0) (sd) {$*$};
    \fill (sd.west) ++(0, 0.12) arc (90:270:0.12) -- cycle;
    \draw[fill=white] (sd.north) ++(0.12, 0) arc (0:180:0.12) -- cycle;

    \node[roundbox, draw=gray!40] at (-0.4, 1.0) (sd_w) {};
    \draw (sd_w) -- (sd);
    \node[above=0mm of sd_w, font=\scriptsize] {$\vw$};

    \node[roundbox, draw=gray!40] at (1.5, 2.2) (sd_phi) {};
    \draw (sd_phi) -- (sd);
    \node[right=1mm of sd_phi, font=\scriptsize] {$\vphi$};

    \node[roundbox, draw=gray!40] at (1.5, -0.2) (sd_tau) {};
    \draw[dashed] (sd_tau) -- (sd);
    \node[right=1mm of sd_tau, font=\scriptsize] {$\tau$};

    \node[roundbox] at (3.4, 1.0) (sd_z) {};
    \draw[dash dot] (sd) -- (sd_z) node[midway, above, font=\scriptsize] {$z$};

    \draw[dotted, gray] (5.2, -1.2) -- (5.2, 3.0);

    \begin{scope}[xshift=7.0cm]
    \node[font=\small\bfseries] at (1.2, 2.8) {BP};

    \node[box] at (-1.2, 2.0) (r1_sd) {$*$};
    \fill (r1_sd.west) ++(0, 0.12) arc (90:270:0.12) -- cycle;
    \draw[fill=white] (r1_sd.north) ++(0.12, 0) arc (0:180:0.12) -- cycle;
    \node[box] at (1.2, 2.0) (r1_exp) {$\exp$};
    \draw[dash dot] (r1_sd) -- (r1_exp) node[midway, above, font=\scriptsize] {$z$};
    \node[roundbox] at (3.2, 2.0) (r1_g) {};
    \draw[dashed] (r1_exp) -- (r1_g) node[midway, above, font=\scriptsize] {$\gamma$};

    \node[box] at (-1.2, 0.8) (r2_n) {$\gauss$};
    \node[box] at (1.2, 0.8) (r2_exp) {$\exp$};
    \draw[dash dot] (r2_n) -- (r2_exp) node[midway, above, font=\scriptsize] {$z$};
    \node[roundbox] at (3.2, 0.8) (r2_g) {};
    \draw[dashed] (r2_exp) -- (r2_g) node[midway, above, font=\scriptsize] {$\gamma$};

    \node[smallbox] at (-1.2, -0.4) (r3_eq) {$=$};
    \node[box] at (1.2, -0.4) (r3_exp) {$\exp$};
    \draw[dash dot] (r3_eq) -- (r3_exp) node[midway, above, font=\scriptsize] {$z$};
    \node[roundbox] at (3.2, -0.4) (r3_g) {};
    \draw[dashed] (r3_exp) -- (r3_g) node[midway, above, font=\scriptsize] {$\gamma$};

    \end{scope}

\end{tikzpicture}
\caption{Two modes of message computation. \textbf{(a)}~Under mean-field factorization constraints, the message from the softdot toward~$z$ depends only on the marginal \emph{types} of its other edges: $q(\vw) \in \gauss$ and $q(\vphi) \in \gauss$ (solid lines), $q(\tau) \in \gammad$ (dashed line). Which factors are connected on the other side of these edges is irrelevant; the line styles act as a type system. \textbf{(b)}~The exp link uses belief propagation (BP); as a deterministic node it cannot be mean-field factorized. The message toward~$\gamma$ depends on the actual message arriving on~$z$: $\mu_{\gamma \leftarrow \exp}(\gamma) = \int \delta(\gamma {-} e^z)\, \mu_{z \to \exp}(z)\, \mathrm{d}z$. We enumerate all possible connections on the~$z$ edge: softdot, normal, and equality each send a Gaussian message under mean-field, so the outgoing message on~$\gamma$ is always log-normal. The equality node \eqref{eq:equality} also uses BP, but its message is simply the product of incoming messages, which preserves the type; the exp link is the only BP node where the enumeration is non-trivial.}
\label{fig:runtime-messages}
\end{figure*}

%% file: sections/expressiveness.tex

By \cref{thm:messages,thm:marginals}, any proper factor graph (\cref{def:proper-ffg}) has closed-form inference. The question is therefore: how expressive are proper factor graphs?

\begin{proposition}[Decision tree encoding]\label{prop:decision-tree}
For any binary decision tree $T$ of depth $d$ with axis-aligned splits and linear leaf functions, there exists a proper factor graph that implements $T$ in the limit $\tau \to \infty$.
\end{proposition}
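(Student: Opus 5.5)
I will argue by structural induction on the depth $d$ of $T$, using the Depth-2 split-branch block of \cref{fig:depth2} as the inductive step. The object being ``implemented'' is the precision (equivalently, the log-precision score $m$ fed into the final exponential link) as a function of the input $\vx$. The induction hypothesis is the following. For every tree of depth $d$ with axis-aligned splits $x_k \lessgtr c$ and linear leaves $\vw_\ell^\top\vphi(\vx)$, there is a proper graph fragment with a single dash-dot output edge $m$. In the limit $\tau\to\infty$, the (posterior-mean) value on $m$ equals $T(\vx)$ for every $\vx$ off the split boundaries.

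\textbf{Base case ($d=0$).} A single leaf is a softdot with clamped weights $\vw_\ell$, clamped features $\vphi(\vx)$ and clamped precision $\tau_e$; its output $z$ is Gaussian with mean $\vw_\ell^\top\vphi(\vx)$ and variance $\tau_e^{-1}\to 0$. The fragment is proper because the edge types match the ports.

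\textbf{Inductive step.} Given a root split $x_k \le c$ with subtrees $T_L,T_R$ of depth $\le d-1$, I instantiate the router softdot with clamped $\vv$ chosen so that $\vv^\top\vphi(\vx)=x_k-c$. This requires $\vphi$ to contain the coordinates and a constant feature, which I will state as an assumption on $\vphi$. The two switch $\pi$ words with clamped weights $\pm1$ then give $\kappa^{L}=e^{\pm s h}$ and $\kappa^{R}=e^{\mp s h}$. Here I insert a clamped gain $s$ (or reuse the $\tau$ scaling) so that in the limit one $\kappa$ tends to $\infty$ and the other to $0$ whenever $h\ne 0$. The two sub-expert softdots are replaced by the output edges $m_L,m_R$ of the inductively constructed fragments for $T_L,T_R$; since these are dash-dot edges feeding Gaussian ports of the gating normal factors, properness is preserved. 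I then need to show that the gated normal $\mathcal N(m^b\mid z^b,(\kappa^b)^{-1})$ forces $m^b=z^b$ on the active branch and decouples on the inactive one. The $y_j$ equality node combines the two likelihoods: the diffuse branch contributes a vanishing term, so the effective log-precision equals that of the active subtree. An equality node merges the two branch outputs, and the result becomes the new output edge; the depth increases by one.

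\textbf{Main obstacle.} The hardest step is making ``implements $T$ in the limit $\tau\to\infty$'' precise for a branch whose gate precision $\kappa^b\to 0$. That branch's $m^b$ is not pinned to anything, so I must check that it does not contaminate the output. I would handle this with two choices. First, I would define the implemented function via the active branch's mean under the limiting message-passing fixed point. Second, I would show, using the closed-form Gaussian messages of \cref{thm:messages}, that the inactive normal factor sends a message whose precision is $O(\kappa^b)\to 0$, hence a flat message with negligible effect on the product at the equality node. A secondary point is the ordering of limits: take the router and switch gains to infinity jointly with $\tau$, and exclude the measure-zero split boundaries $h=0$. Finally, properness of the whole graph follows because each block only ever connects dash-dot outputs to Gaussian ports and $\kappa$ (dashed) outputs to Gamma ports, exactly as in the Depth-2 block.
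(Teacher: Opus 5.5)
Your induction is the paper's: a single softdot as the base case, and the depth-2 router with opposing switches wrapped around the two subtrees as the step. Two of your additions are genuine refinements that the paper leaves implicit. First, as $\tau\to\infty$ the switches only converge to $\kappa\to e^{\pm h}$, so sharp gating does require the router gain $s$ (i.e.\ $\|\vv\|$) to diverge; the construction in \cref{app:xor} does exactly this with $|h|=7$. Second, $\vphi$ must contain the coordinates and a constant.

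The step that fails as written is how the two branches are combined. You invoke two incompatible mechanisms. One is the per-branch exp links and likelihoods of \cref{fig:depth2} meeting at the $y_j$ equality node (``the diffuse branch contributes a vanishing term''). The other is an equality node merging the gate outputs into a single output edge.

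The first mechanism does not work. When $\kappa^b\to 0$, the gate's VMP message on $m^b$ becomes flat, so $m^b$ is determined by its own exp link and likelihood. Their log-gamma message is $\propto e^{\frac{3}{2}m-\beta e^{m}}$ with $\beta=\tfrac12\mathbb{E}_q[(y_j-\hat y_{i,j})^2]$. This makes $\mathbb{E}_q[\gamma^b]$ of order $1/\mathbb{E}_q[(y_j-\hat y_{i,j})^2]$, which is a non-vanishing contribution at $y_j$, and a dominant one when residuals are small. So the effective log-precision is not that of the active subtree. This architecture also leaves no single output edge, so your inductive hypothesis cannot be reapplied. Redefining ``implements'' via the active branch's mean only hides the problem.

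Commit to the merge, and place it before any exp link. The equality node then multiplies the two gate messages into a Gaussian with precision $\bar\kappa^L+\bar\kappa^R$ and mean $(\bar\kappa^L\bar m_L+\bar\kappa^R\bar m_R)/(\bar\kappa^L+\bar\kappa^R)$, where bars denote $\mathbb{E}_q$. This is exactly the blend formula used in \cref{app:xor}. Selection then follows from two facts:
\begin{itemize}
\item off the split boundary, $\bar\kappa^{\mathrm{in}}/\bar\kappa^{\mathrm{act}}\approx e^{-2s|x_k-c|}\to 0$;
\item the inactive child's output stays bounded, because its backward message is flat and its own subtree fixes it.
\end{itemize}
With the merge in place, your ``main obstacle'' reduces to this ratio argument.

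You should also strengthen the inductive hypothesis. ``The posterior mean on $m$ equals $T(\vx)$'' is not preserved under embedding. The parent's active gate, with precision $\bar\kappa^{\mathrm{act}}\to\infty$, pulls the child's output toward the parent's $m$, and the likelihood pulls that in turn. What composition does preserve is a stiffness statement: the fragment pins $m$ to $T(\vx)$ with an effective precision that diverges in the limit. This effective precision is the series combination of $\tau_e$ and the active gate precisions along the selected path. Any finite-precision pull from the data is then overwhelmed, giving $\mathbb{E}_q[m]\to T(\vx)$ off the boundaries.
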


\begin{proof}
By induction on $d$. The base case ($d{=}0$) is a single softdot factor. For $d > 0$, the root split is implemented by a depth-2 router with opposing switches (\cref{sec:depth2}); each child subtree of depth $d{-}1$ is implemented by the inductive hypothesis.
\end{proof}

Consider the output variable $y_j$ in the depth-2 construction (\cref{fig:depth2}): it is shared across all branches via equality nodes, and its posterior mean $\mathbb{E}_q[y_j](\vx)$ is determined by the precision-weighted combination of the expert predictions $\hat{y}_i$. In the sharp-routing limit ($\tau \to \infty$), each routing decision becomes deterministic and the dominant expert's prediction is selected in each region. Stacking $d$ layers of splits produces $2^d$ regions, and each region selects one expert's constant prediction~$\hat{y}_i$. The posterior mean is then a so-called simple function (a finite sum of constants times indicator functions of measurable sets), which can approximate any measurable function to arbitrary accuracy \citep[Ch.~1, Theorem~1.17]{rudin_real_2013}. Universal approximation follows.

\begin{corollary}[Universal approximation]\label{cor:universal}
For any continuous $f\!: \mathcal{X} \to \R$ on compact $\mathcal{X} \subset \R^p$ and any $\epsilon > 0$, there exists a proper factor graph built by stacking split-branch routing layers such that $\| \mathbb{E}_{q}[y_j](\vx) - f(\vx) \|_\infty < \epsilon$ in the sharp-routing limit, with all inference updates closed-form.
\end{corollary}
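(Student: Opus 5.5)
The plan is to reduce the corollary to \cref{prop:decision-tree} together with a uniform approximation of continuous functions by piecewise-constant functions on axis-aligned cells. Closed-form inference then comes for free from \cref{thm:messages,thm:marginals}, because the graph we build is proper.

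\emph{Step 1 (uniform continuity and a grid).} Since $\mathcal{X}$ is compact and $f$ is continuous, $f$ is uniformly continuous. Choose $\delta>0$ such that $\|\vx-\vx'\|_\infty<\delta$ implies $|f(\vx)-f(\vx')|<\epsilon/2$. Enclose $\mathcal{X}$ in a box $[a,b]^p$ and split each coordinate into $K$ intervals of width less than $\delta$, with $K$ a power of two. The resulting $K^p$ axis-aligned cells $C_k$ each have diameter less than $\delta$. In each cell meeting $\mathcal{X}$, fix a point $\vx_k$ and set $c_k=f(\vx_k)$. The simple function $s=\sum_k c_k\mathbf{1}_{C_k}$ then satisfies $\|s-f\|_\infty<\epsilon/2$ on $\mathcal{X}$. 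Working with this explicit uniform construction is better than citing the measurable-function statement, which only gives pointwise or a.e.\ convergence.

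\emph{Step 2 (tree encoding).} The grid is produced by a complete binary decision tree of depth $d=p\log_2 K$ with axis-aligned splits. Cycling through coordinates, bisect each interval repeatedly. Each leaf corresponds to one cell and is assigned a constant leaf function $c_k$; a constant is a linear leaf function with zero slope, obtained by taking the feature map $\vphi$ to include a constant entry. An axis-aligned threshold $x_r>t$ is a linear score $\vv^\top\vphi(\vx)$ with $\vphi(\vx)=(1,\vx)$. \cref{prop:decision-tree} then supplies a proper factor graph implementing this tree as $\tau\to\infty$. In that limit the posterior mean $\mathbb{E}_q[y_j](\vx)$ equals $s(\vx)$ at every $\vx$ not on a split boundary.

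\emph{Step 3 (the limit and boundaries).} This is the main obstacle. In the sharp-routing limit the switch activations $\kappa^L,\kappa^R$ tend to $\infty$ and $0$ (or the reverse) whenever the router score $h$ is nonzero. The precision-weighted combination defining $\mathbb{E}_q[y_j]$ therefore puts all its weight on a single expert. On the hyperplanes $h=0$ the two branches tie, and the mean is some convex combination of neighbouring constants. I would handle this in one of two ways.
\begin{itemize}
\item[(i)] Show that the tied value lies between the neighbouring $c_k$. Adjacent cells differ by less than $\epsilon/2$ from $f$ at nearby points, so the error at a boundary stays below $\epsilon$.
\item[(ii)] Alternatively, shift the thresholds slightly so that no grid boundary causes trouble. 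Here one uses only that $|c_k-c_{k'}|<\epsilon/2$ for adjacent cells, by uniform continuity.
\end{itemize}
Option (i) is the one I would try first, since any convex combination of adjacent $c_k$ is within $\epsilon/2$ of $f(\vx)$ plus $\epsilon/2$.

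\emph{Step 4 (conclusion).} Combining the steps gives $\|\mathbb{E}_q[y_j](\vx)-f(\vx)\|_\infty<\epsilon$ in the limit. The graph uses only alphabet letters with matching line types, so it is proper by \cref{def:proper-ffg}, and \cref{thm:messages,thm:marginals} give closed-form updates. A caveat remains: this argument takes the idealised sharp-routing behaviour of the Depth-2 word as given, namely that variational posterior means track the deterministic routing. A fully rigorous version would need to check that the fixed points of \cref{thm:marginals} converge to this selection as $\tau\to\infty$.
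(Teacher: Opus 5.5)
Your proof is sound at the paper's level of rigor, modulo the small repairs below. It shares the paper's skeleton: tree encoding via \cref{prop:decision-tree}, a piecewise-constant posterior mean in the sharp limit, and closed-form updates from properness.

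The difference is the approximation lemma. The paper observes that the sharp-limit mean is a simple function and cites Rudin's Theorem~1.17 (simple functions approximate any measurable function). You instead build an axis-aligned dyadic grid from uniform continuity.

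Your route is the one that actually fits the construction. Rudin's simple functions are constant on level sets $f^{-1}([k2^{-n},(k+1)2^{-n}))$. These are generally not finite unions of axis-aligned boxes, so a finite tree with axis-aligned splits cannot realize them. Your cells, by contrast, are exactly the leaves of a complete tree of depth $p\log_2 K$, which also gives a depth bound in terms of the modulus of continuity. You also treat points on split boundaries, where the sharp limit does not select a single expert; the paper's sup-norm claim passes over them. The paper's route buys brevity, but its nominal generality to measurable targets is not usable here, for the reason just given.

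One correction to your motivation. Rudin's construction converges everywhere, and uniformly for bounded $f$: it satisfies $f-2^{-n}<s_n\le f$ wherever $f\le n$. So the defect of that citation is the geometry of the sets, not the mode of convergence. Your closing caveat about variational fixed points applies equally to the paper, which asserts the sharp-limit behaviour without proof.

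Now the repairs.

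\emph{Cell representatives.} Take $\vx_k\in\overline{C_k}\cap\mathcal{X}$, or Tietze-extend $f$ to the box. Otherwise a point of $\mathcal{X}$ can lie on the excluded face of a half-open cell that misses $\mathcal{X}$, and that cell's arbitrary constant then enters a boundary blend.

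\emph{Ties.} At a tie, every $c_k$ with $\vx\in\overline{C_k}$ already satisfies $|c_k-f(\vx)|<\epsilon/2$, so any convex combination of them does too. That frees the spare $\epsilon/2$ for leakage.

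\emph{Leakage.} In the paper's mechanism your constants enter as clamped expert predictions $\hat y_i=c_k$; the linear leaf functions are log-precisions, not output values. $\mathbb{E}_q[y_j]$ is then a precision-weighted average with weights proportional to $\mathbb{E}_q[\gamma_i]$. Letting $\tau\to\infty$ makes the softdots deterministic, but it leaves the switch activations $e^{\pm h}$ and the expert weights $\propto e^{m_i}$ finite. So ``all weight on a single expert'' also requires scaling the router and leaf weights. If the dominant log-precision exceeds the others by $M$, the leakage is at most $2(n-1)e^{-M}\max_k|c_k|$. This vanishes as $M\to\infty$ (also at ties, with a smaller effective gap), so the spare $\epsilon/2$ absorbs it.

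\emph{Option (ii).} Drop it: shifting thresholds cannot keep split boundaries out of $\mathcal{X}$ when, say, $\mathcal{X}$ is a box.
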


For finite~$\tau$ the posterior over $y_j$ carries calibrated uncertainty about the routing decisions (\cref{fig:xor-uncertainty}). Small $\tau$ yields wide posteriors over the router score~$h$, so both branches contribute and the model hedges across experts; large $\tau$ concentrates the posterior on a single branch. This uncertainty is a direct output of inference, not an add-on; it can be propagated downstream to any quantity that depends on~$y_j$. The connection to classical neural universal approximation theorems is discussed in \cref{app:neural}.

\input{figures/xor_uncertainty}

%% file: figures/xor_uncertainty.tex
\begin{figure}[t]
\centering
\includegraphics[width=\columnwidth]{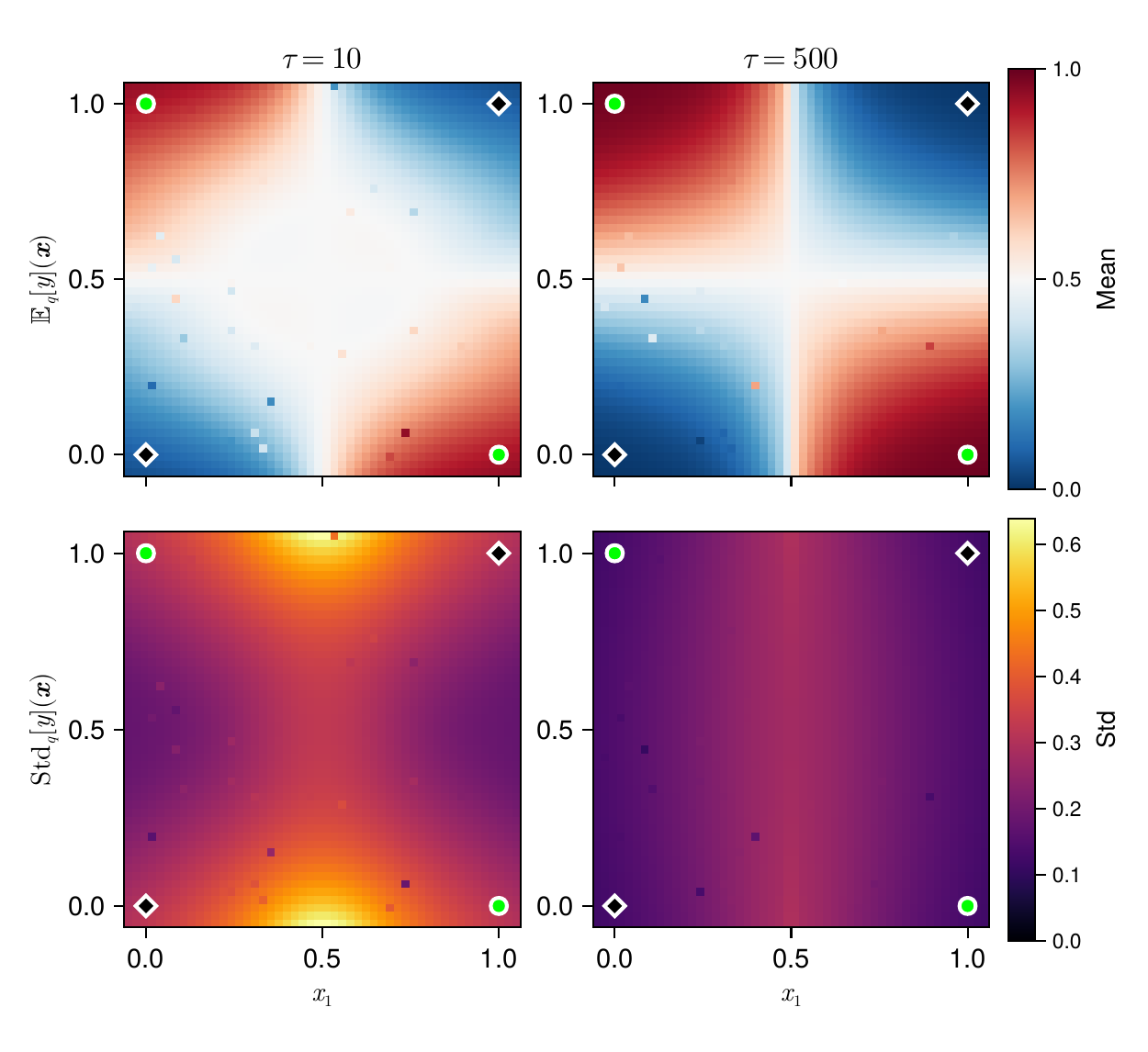}
\caption{Posterior prediction for the XOR encoding with two depth-2 experts at different routing precisions~$\tau$. \textbf{Top:} the posterior mean $\mathbb{E}_q[y_j](\vx)$ transitions from a smooth blend ($\tau{=}10$) to sharp classification regions ($\tau{=}500$). \textbf{Bottom:} the posterior standard deviation reveals routing uncertainty---large everywhere under soft routing, concentrated at the decision boundary under sharp routing. Explicit parameter values are given in \cref{app:xor}.}
\label{fig:xor-uncertainty}
\end{figure}

%% file: sections/application.tex

Time series forecasting is a natural testbed for the framework. Multiple forecasters with different inductive biases (linear, convolutional, recurrent) are trained independently on the same data, and their relative reliability varies with the temporal context: one forecaster may excel during trend regimes while another is more accurate around seasonal peaks. The task of predicting the next time window is a standard regression problem, but the interesting question is \emph{which expert to trust given the current context}. Our framework answers this question probabilistically: the precision-gated mechanism learns input-dependent trust from the recent history.
\subsection{Models and inference variants}

We evaluate three model structures at increasing depth. The Static ensemble (Depth~0, \cref{fig:depth0}) learns a single precision~$\gamma_i$ per forecaster, recovering Cochran's inverse-variance weighting \citep{cochran_problems_1937}. The Dynamic model (Depth~1, \cref{fig:depth1}) replaces each gamma prior with a precision word~$\pi$, making trust input-dependent. The Noisy model (Depth~1) augments the Dynamic model with an additional normal factor with precision~$\kappa$ between the expert-combined variable and the observation, separating forecaster disagreement from irreducible predictive noise; the factor graph and GraphPPL.jl code are given in \cref{app:graphppl-noisy}.

For the Dynamic and Noisy models, inference maintains a posterior $q(\vw_i)$ over the weight vectors. We consider two inference variants that share the same model but differ in the covariance constraint: full covariance $q(\vw_i) = \gauss(\vw_i \mid \vm_i, \vV_i)$, or diagonal covariance $q(\vw_i) = \gauss(\vw_i \mid \vm_i, \text{diag}(\vv_i))$, reducing parameters from $O(d^2)$ to $O(d)$ per expert. This gives five configurations: Static, Dynamic, Dynamic Diagonal, Noisy, and Noisy Diagonal. All use the same closed-form message passing of \cref{sec:runtime}.

\subsection{Setup}

We combine $n = 7$ forecasters (five neural: CNN, DLinear, NLinear \citep{zeng_are_2023}, LSTM \citep{hochreiter_long_1997}, NConv; two constant: 10th and 90th quantiles) on five benchmarks (ETTh1, ETTh2 \citep{zhou_informer_2021}, Exchange Rate, Electricity \citep{trindade_electricityloaddiagrams20112014_2015}, Traffic) with horizons $H \in \{96, 192, 336, 720\}$. Feature vectors $\vphi(\vx_j)$ are obtained from a VAE with a latent dimension of 64. For comparison, we include softmax-based Mixture-of-Experts (MoE) \citep{jacobs_adaptive_1991} baselines with single-layer and two-layer gating networks. Full experimental details, dataset statistics, and parameter counts are in the appendix (\cref{app:experimental,tab:datasets,tab:methods_params}). We report MSE and negative log-likelihood (NLL; see \cref{app:baseline_nll} for MoE baseline NLL computation). Our framework provides calibrated predictive distributions via marginal posteriors, not just point forecasts.

\begin{figure*}[!t]
    \centering
    \begin{subfigure}[t]{0.49\textwidth}
        \centering
        \includegraphics[width=\linewidth]{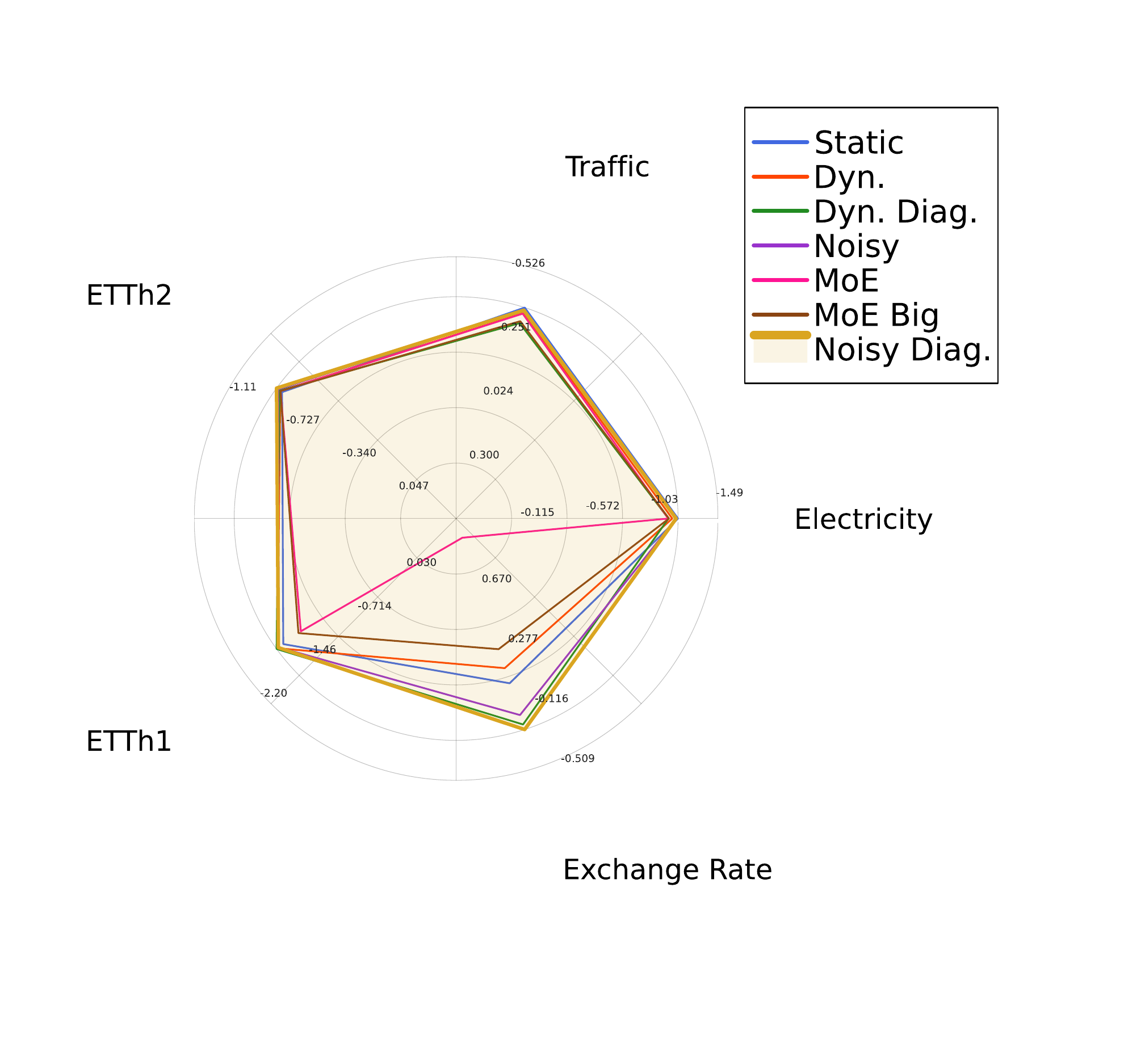}
        \caption{All methods: Log MSE}
        \label{fig:radar_mse_all}
    \end{subfigure}
    \hfill
    \begin{subfigure}[t]{0.49\textwidth}
        \centering
        \includegraphics[width=\linewidth]{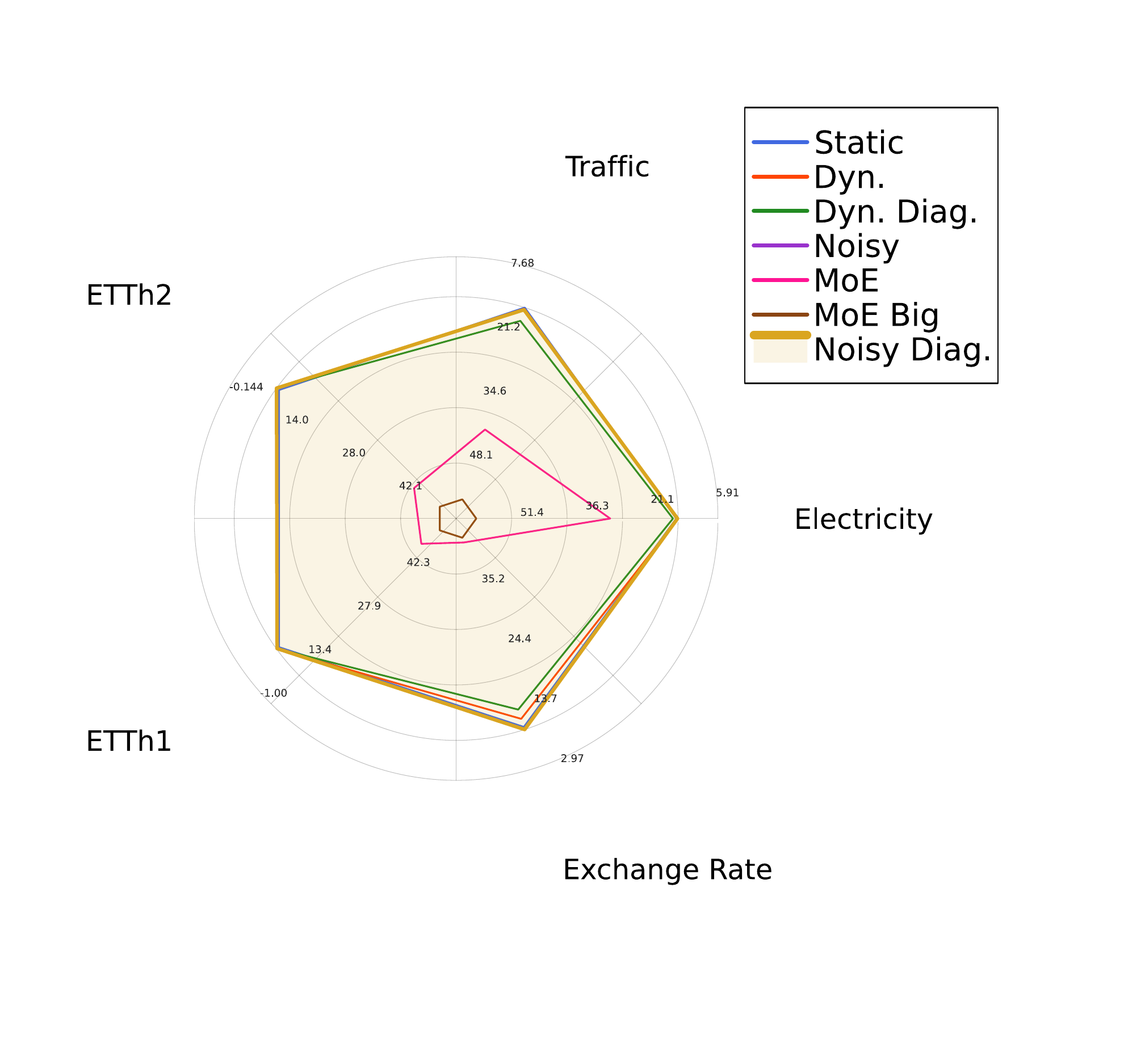}
        \caption{All methods: Log NLL}
        \label{fig:radar_nll_all}
    \end{subfigure}

    \vspace{0.5em}

    \begin{subfigure}[t]{0.49\textwidth}
        \centering
        \includegraphics[width=\linewidth]{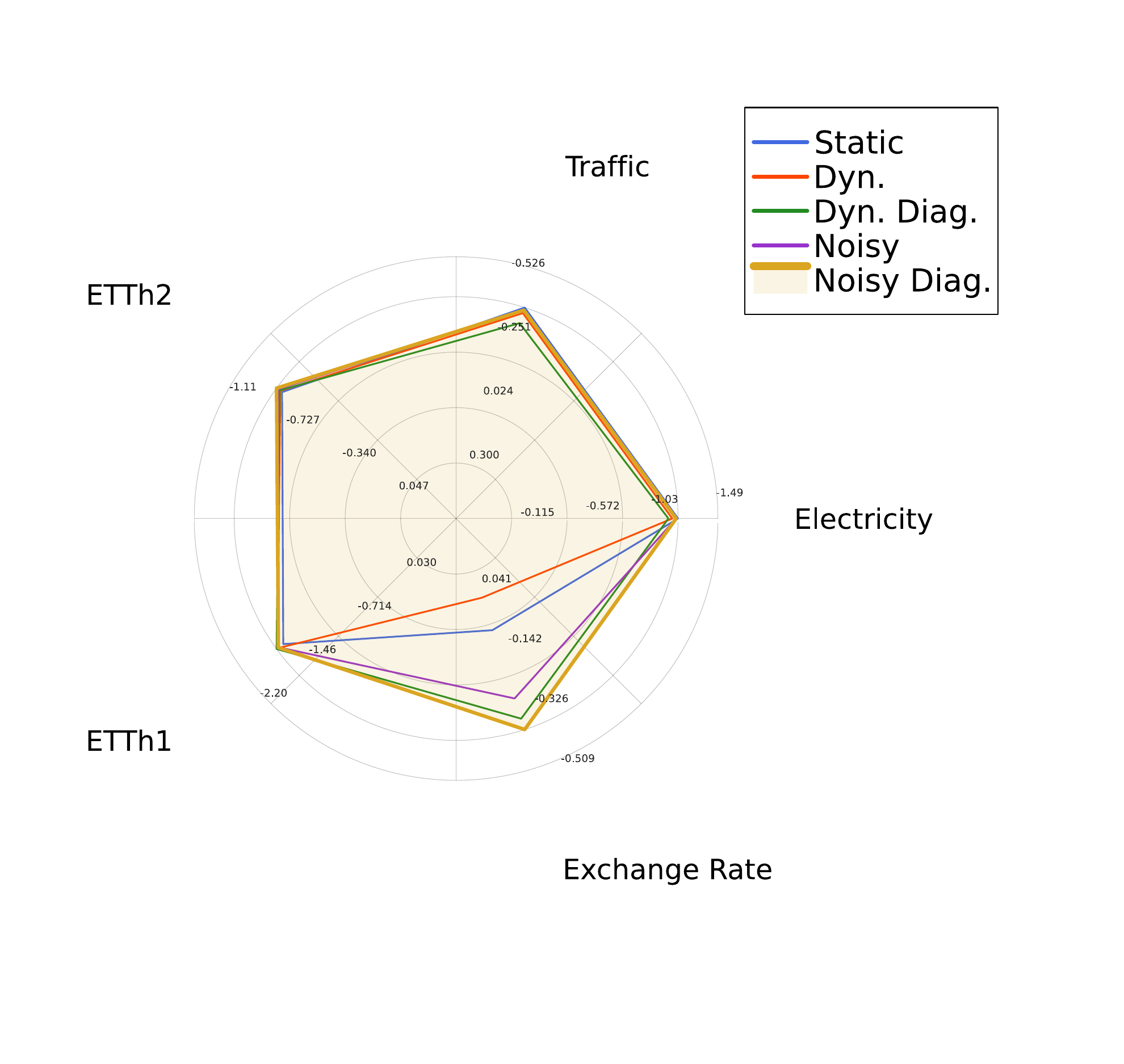}
        \caption{Our framework: Log MSE}
        \label{fig:radar_mse_bayes}
    \end{subfigure}
    \hfill
    \begin{subfigure}[t]{0.49\textwidth}
        \centering
        \includegraphics[width=\linewidth]{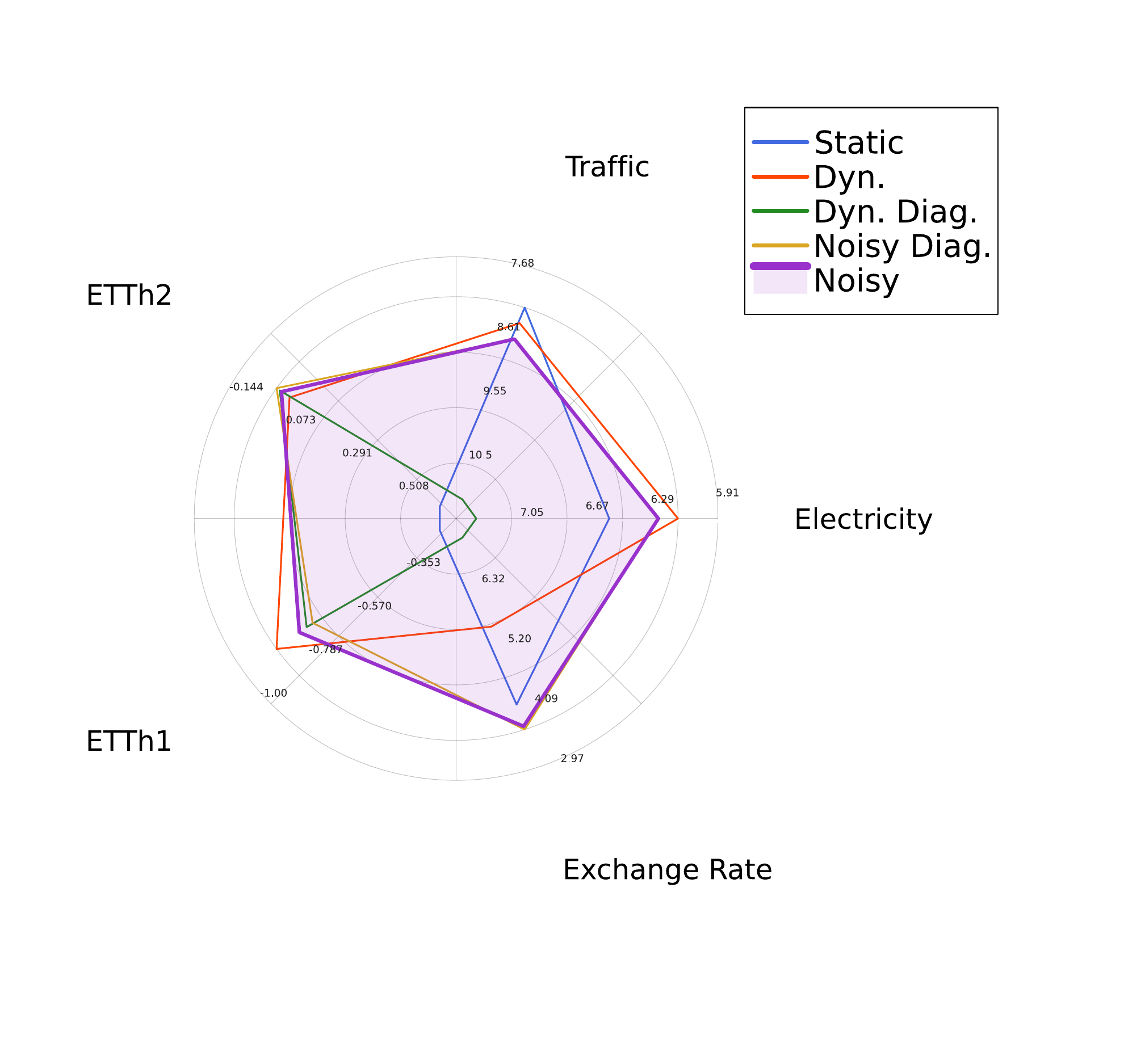}
        \caption{Our framework: Log NLL}
        \label{fig:radar_nll_bayes}
    \end{subfigure}
    \caption{Radar charts of log-transformed MSE and NLL averaged over all horizons. Each axis corresponds to a dataset; larger area indicates better performance. (a,\,b)~All methods: the Noisy Diagonal variant attains the largest overall area; larger neural MoE models improve MSE but become overconfident, resulting in poor NLL. (c,\,d)~Our framework only: Noisy Diagonal achieves the best MSE coverage; both noisy variants provide comparable NLL.}
    \label{fig:radar_charts}
\end{figure*}

\subsection{Results}

Full results are in \cref{tab:ensemble_comparison}; radar charts and Pareto frontiers in \cref{fig:radar_charts,fig:pareto_frontier}. Forecasting examples and comparison of models with distribution of experts in Figure \ref{fig:forecast_compare}.

The Static ensemble already provides a strong baseline, matching or surpassing the best individual forecaster on multiple dataset--horizon pairs. The Dynamic variants extend this by adapting expert trust to the input, achieving the best MSE across all ETTh1 horizons with substantially lower NLL. The Diagonal inference variants are particularly strong on multivariate datasets: on Exchange Rate, Noisy Diagonal achieves the best NLL by a large margin while reducing the parameter count from 15\,512 to 952 ($d{=}65$). The softmax-based MoE baselines are unstable: although occasionally competitive in MSE, their NLL values are often non-finite, indicating poor calibration. Overall, the Noisy Diagonal variant provides the best aggregate trade-off between accuracy, calibration, and parameter efficiency (\cref{fig:pareto_frontier}). Code implementation available in repository: \href{https://github.com/biaslab/PrecisionGatedExperts}{Repository}.

%% file: sections/new_discussion.tex

The depth-0 instance of our framework recovers the inverse-variance weighting of \citet{cochran_problems_1937}; our framework generalizes it by learning precisions from data, making them input-dependent and composing them hierarchically.

Our universality result (\cref{cor:universal}) parallels classical universal approximation theorems for neural networks \citep{cybenko_approximation_1989, hornik_approximation_1991}: the core mechanism, a sufficiently fine partition of the input space, is the same; however, our framework replaces gradient-based point estimation with Bethe free energy optimization, yielding marginal posterior distributions.

The current alphabet covers $\times$, $\exp$, and $+$ (the last is not needed for universality but is easy to include); these suffice so that a model shares the topology of the corresponding computational graph but carries uncertainty on every edge. Unlike a deterministic compiler, which executes instructions without an objective, the probabilistic runtime \emph{optimizes} the Bethe free energy at every iteration. This separates the framework from linearization-based approaches such as Laplace propagation \citep{smola_laplace_2003}, which approximate the nonlinearity and optimize no bound; here, the nonlinearity is exact, and only the posterior family is constrained. We do not have a formal lower bound on alphabet size, but we conjecture at least one non-conjugate factor is required: a purely linear--Gaussian alphabet remains linear--Gaussian end-to-end and cannot reach universal approximation; the exponential link plays that role here.

The per-edge stationarity condition \eqref{eq:stationary-z-generic} has the same form as the global optimality condition of \citet[Eq.~5]{khan_information_2025}; our contribution is that it arises \emph{locally} at each non-conjugate edge of the factor graph, so that marginal computation decomposes into independent per-edge problems solved by message passing.

Concurrent work on the generalized HGF \citep{weber_generalized_2026} shares this philosophy but relies on a Laplace approximation updates without expressiveness guarantees; our construction provides both.

%% file: sections/conclusion.tex

The practical success of deep learning rests not only on its expressive power \citep{lecun_deep_2015} but also on its \emph{accessibility}: layers, backpropagation, and an optimizer suffice to compose powerful models from modular building blocks. The factor graph approach follows a similar compositional philosophy \citep{loeliger_introduction_2004}, yet in practice demands considerably more: deriving message rules, selecting message types, and mixing algorithmic techniques for each new problem.

Our contribution is to narrow this gap. A finite alphabet of Q-conjugate factors composes freely, bringing layer-like modularity to probabilistic inference: softdot experts, exponential-link gates, and split-branch routers stack to arbitrary depth with guaranteed closed-form variational message passing. The alphabet is the language, the Bethe free energy is the compiler, and message passing is the runtime \citep{vandelaar_forneylab_2018, bagaev_reactive_2023}; no new derivations are needed. The tradeoff is explicit: the practitioner trades arbitrary differentiable operations for closed-form marginal inference. This limitation is mild—the alphabet is a universal approximator—and should diminish as the library of reusable composed subgraphs grows.

%% file: sections/appendix_ffg.tex

This appendix reviews Forney-style factor graphs and shows how message passing algorithms arise from minimizing the Bethe free energy. We follow the presentation of \citet{senoz_variational_2021}, to which we refer for proofs and further details.

\subsection{Forney-style factor graphs}\label{app:ffg-def}

A Forney-style factor graph (FFG) is an undirected graph $\mathcal{G} = (\mathcal{V}, \mathcal{E})$ with nodes $\mathcal{V}$ and edges $\mathcal{E} \subseteq \mathcal{V} \times \mathcal{V}$ \citep{forney_codes_2001, loeliger_introduction_2004}. We denote the neighboring edges of a node $a \in \mathcal{V}$ by $\mathcal{E}(a)$. Vice versa, for an edge $i \in \mathcal{E}$, the notation $\mathcal{V}(i)$ collects all neighboring nodes. As a notational convention, we index nodes by $a, b, c$ and edges by $i, j, k$, unless stated otherwise.

An FFG represents a factorized positive function
\begin{equation}\label{eq:ffg-factorization}
    f(\vs) = \prod_{a \in \mathcal{V}} f_a(\vs_a),
\end{equation}
where $\vs$ is the collection of all variables, $\vs_a$ collects the argument variables of factor $f_a$, and a node $a \in \mathcal{V}$ corresponds to a factor $f_a$. The neighboring edges $\mathcal{E}(a)$ correspond to the variables $\vs_a$ that are the arguments of $f_a$.

In the FFG representation, a node can be connected to an arbitrary number of edges, while an edge can only be connected to at most two nodes. An edge that connects to only one node (e.g., for an observed variable or a fixed hyperparameter) is called a \emph{half-edge}. When a variable must be shared among more than two factors, FFGs introduce explicit \emph{equality nodes} with factor function
\begin{equation}\label{eq:equality-node}
    f_a(s_i, s_j, s_k) = \delta(s_j - s_i)\,\delta(s_j - s_k),
\end{equation}
which constrain connected edges to carry identical values.

\subsection{Terminated factor graphs}\label{app:terminated-ffg}

If every edge in the FFG has exactly two connected nodes (including equality nodes), we call the graph a \emph{terminated} FFG (TFFG). A half-edge can be closed by an observation factor, e.g.\ $f_{\mathrm{obs}}(s_i) = \delta(s_i - s_i^{\mathrm{obs}})$, or by a unity factor $f_1(s_i) = 1$. The latter leaves the variable uninformative; ordinary priors, such as the Gamma factor on $\gamma_i$, are just regular factors in the graph. All FFGs in this paper are assumed to be terminated before inference is run.

For the Depth-0 model, the same unnormalized factor product
\begin{equation}
    f(\vy, \bold{\hat{y}}, \vgamma)
    =
    \prod_{i=1}^{n} f_{\gammad}(\gamma_i \mid \alpha_i, \beta_i)
    \prod_{\substack{i=1\\ j=1}}^{n,m}
    f_{\mathcal{N}}(\hat{y}_{i,j} \mid y_j, \gamma_i)
\end{equation}
supports different directed graphical-model readings after termination. In smoothing, $y_j$ and $\hat{y}_{i,j}$ are observed, so the terminated graph defines
\[
    p(\vgamma \mid \vy, \bold{\hat{y}}) \propto f(\vy, \bold{\hat{y}}, \vgamma).
\]
In prediction, only the forecaster predictions are observed; $y_*$ is left latent (closed by a unity factor), giving
\[
    p(y_*, \vgamma \mid \bold{\hat{y}}_*) \propto
    \prod_i f_{\gammad}(\gamma_i \mid \alpha_i, \beta_i)
    \prod_i f_{\mathcal{N}}(\hat{y}_{i,*} \mid y_*, \gamma_i).
\]
Thus the FFG fixes the local factorization, while termination specifies which conditional distribution is represented.

\subsection{Bethe free energy}\label{app:bfe}

Given a TFFG $\mathcal{G} = (\mathcal{V}, \mathcal{E})$ for a factorized function $f(\vs) = \prod_{a \in \mathcal{V}} f_a(\vs_a)$, the Bethe free energy (BFE) is defined as \citep{yedidia_constructing_2005}
\begin{align}\label{eq:bethe-free-energy}
    F[q, f] &\triangleq \sum_{a \in \mathcal{V}} \underbrace{\int q_a(\vs_a) \log \frac{q_a(\vs_a)}{f_a(\vs_a)} \, \mathrm{d}\vs_a}_{F[q_a, f_a]} \notag\\
    &\quad + \sum_{i \in \mathcal{E}} \underbrace{\int q_i(s_i) \log \frac{1}{q_i(s_i)} \, \mathrm{d}s_i}_{H[q_i]},
\end{align}
such that the factorized beliefs
\begin{equation}\label{eq:bethe-factorization}
    q(\vs) = \prod_{a \in \mathcal{V}} q_a(\vs_a) \prod_{i \in \mathcal{E}} q_i(s_i)^{-1}
\end{equation}
satisfy the following constraints:
\begin{subequations}\label{eq:bfe-constraints}
\begin{align}
    \int q_a(\vs_a) \, \mathrm{d}\vs_a &= 1, \quad \text{for all } a \in \mathcal{V}, \label{eq:normalization}\\
    \int q_a(\vs_a) \, \mathrm{d}\vs_{a \setminus i} &= q_i(s_i), \quad \text{for all } a \in \mathcal{V} \text{ and all } i \in \mathcal{E}(a). \label{eq:marginalization}
\end{align}
\end{subequations}
The normalization constraint \eqref{eq:normalization} and marginalization constraint \eqref{eq:marginalization} together imply that the edge marginals $q_i(s_i)$ are also normalized. The BFE includes a local free energy term $F[q_a, f_a]$ for each node $a \in \mathcal{V}$ and an entropy term $H[q_i]$ for each edge $i \in \mathcal{E}$. In a TFFG every edge connects exactly two nodes; the edge entropy corrects for the double-counting that would arise from the node-local free energies alone.

The Bethe factorization \eqref{eq:bethe-factorization} and constraints \eqref{eq:bfe-constraints} are summarized by the \emph{local polytope}
\begin{multline}\label{eq:local-polytope}
    \mathcal{L}(\mathcal{G}) = \bigl\{q_a \text{ for all } a \in \mathcal{V} \text{ s.t.~\eqref{eq:normalization}},\\
    \text{and } q_i \text{ for all } i \in \mathcal{E}(a) \text{ s.t.~\eqref{eq:marginalization}}\bigr\},
\end{multline}
which defines the constrained search space for the factorized variational distribution. The problem is then to find the beliefs in the local polytope that minimize the BFE:
\begin{equation}\label{eq:bfe-min}
    q^*(\vs) = \argmin_{q \in \mathcal{L}(\mathcal{G})} F[q, f].
\end{equation}

\subsection{Message passing from stationarity conditions}\label{app:messages-from-stationarity}

Instead of solving the global optimization problem \eqref{eq:bfe-min} directly, we employ the factorization of the BFE and the local polytope to subdivide it into interdependent local objectives. The resulting fixed-point equations can be interpreted as messages on the edges of the FFG. We state the two results needed in this paper; proofs can be found in \citep{kschischang_factor_2001, yedidia_constructing_2005, dauwels_variational_2007, senoz_variational_2021}.

\paragraph{Sum-product message passing (belief propagation).}
The stationary points of the Lagrangian associated with \eqref{eq:bfe-min} yield the sum-product message update. The message from node $b$ to edge $j$ is
\begin{equation}\label{eq:bp-message}
    \mu_{jb}^{(k+1)}(s_j) = \int f_b(\vs_b) \prod_{\substack{i \in \mathcal{E}(b) \\ i \neq j}} \mu_{ib}^{(k)}(s_i) \, \mathrm{d}\vs_{b \setminus j},
\end{equation}
where $k$ is an iteration index and $\mu_{ib}$ denotes the incoming message on edge $i$ towards node $b$. The edge belief is proportional to the product of the two incoming messages:
\begin{equation}\label{eq:edge-belief-bp}
    q_j^*(s_j) = \frac{\mu_{jb}^*(s_j)\, \mu_{jc}^*(s_j)}{\int \mu_{jb}^*(s_j)\, \mu_{jc}^*(s_j)\, \mathrm{d}s_j}.
\end{equation}
On tree-structured graphs the sum-product algorithm recovers the exact posterior; on graphs with cycles, convergence is not guaranteed in general \citep{yedidia_constructing_2005}.

\paragraph{Naive mean-field variational message passing.}
When a naive mean-field factorization constraint is imposed at node $b$, i.e., $q_b(\vs_b) = \prod_{i \in \mathcal{E}(b)} q_i(s_i)$, the local free energy for factor $b$ becomes
\begin{multline}\label{eq:local-fe-mf}
    F[q_b, f_b] = \sum_{i \in \mathcal{E}} \int q_i(s_i) \log q_i(s_i) \, \mathrm{d}s_i \\ - \int \Bigl\{\prod_{i \in \mathcal{E}(b)} q_i(s_i)\Bigr\} \log f_b(\vs_b) \, \mathrm{d}\vs_b.
\end{multline}
The variational message passing update \citep{winn_variational_2005} from node $b$ towards edge $j$ is then given by
\begin{equation}\label{eq:vmp-message}
    \mu_{jb}^{(k+1)}(s_j) = \exp\!\left(\int \Bigl\{\prod_{\substack{i \in \mathcal{E}(b) \\ i \neq j}} q_i^{(k)}(s_i) \Bigr\} \log f_b(\vs_b) \, \mathrm{d}\vs_{b \setminus j} \right),
\end{equation}
which is the exponentiated expected log-factor under the current beliefs of all other edges. We use naive mean-field factorization constraints at all non-conjugate factor boundaries in this paper.

\subsection{Message rules for the alphabet}\label{app:message-rules}


These rules, combined with a valid schedule, are sufficient to run inference on any factor graph composed from the alphabet.

\paragraph{Soft-dot} The soft-dot factor node has the form:
\begin{equation*}
    f_*(z \, | \, \vw, \vphi, \tau) = \mathcal{N}(z \, | \, \vw^{\top} \vphi, \tau^{-1}) \, .
\end{equation*}
We assume the node's incoming messages are marginals of the form:
\begin{align*}
    q(z) &= \mathcal{N}(z \, | \, m_z, s_z^2) \\
    q(\vw) &= \mathcal{N}(\vw \, | \, m_{\vw}, s^2_{\vw}) \\
    q(\vphi) &= \mathcal{N}(\vphi \, | \, m_{\vphi}, s^2_{\vphi}) \\
    q(\tau) &= \Gamma(\tau \, | \, \alpha_\tau, \beta_\tau) \, .
\end{align*}
The forward message towards $z$ is:
\begin{align*}
    \mu(z) \!
    &\propto \exp\big( \mathbb{E}_q \big[ \log \mathcal{N}(z \, | \, \vw^{\top} \vphi, \tau^{-1}) \big] \big) \\
    &\propto \exp\big( \mathbb{E}_q \big[ \! - \! \frac{\tau}{2}(z^2 \! - \! 2z \vw^{\top} \vphi \! + \! \vw^{\top} \vphi \vphi^{\top} \vw) \big] \big) \\
    &\propto \exp \! \big( \! - \! \frac{\tau}{2}(z^2 \! - \! 2z m_{\vw}^{\top} m_{\vphi} \! + \! m_{\vw}^{\top} m_{\vphi} m_{\vphi}^{\top} m_{\vw}) \big] \big) \\
    &\propto \mathcal{N}(z \, | \, m_{\vw}^{\top} m_{\vphi}, \, \tau^{-1})
\end{align*}
The backwards message towards $\vw$ is:
\begin{equation}\label{eq:softdot-vmp-w-message}
\begin{aligned}
    \mu(\vw) \! 
    &\propto \! \exp\big( \mathbb{E}_q \big[ \log \mathcal{N}(z \, | \, \vw^{\top} \vphi, \tau^{-1}) \big] \big) \\
    &\propto \! \exp \!\big( \! - \! \frac{\tau}{2}(m_z^2 \! - \! 2m_z \vw^{\top} \! m_{\vphi} \! + \! \vw^{\top} \! (m_{\vphi} m_{\vphi}^{\top} \! + \! S_{\vphi}) \vw) \big) \\
    &\propto \mathcal{N}(\vw \, | \, m_{\vw}, S_{\vw}) \, ,
\end{aligned}
\end{equation}
whose parameters are
\begin{align*}
    m_{\vw} &= \tau(m_{\vphi} m_{\vphi}^{\top} + S_{\vphi}) (m_z m_{\vphi}) \\ 
    S_{\vw} &= \tau^{-1}(m_{\vphi} m_{\vphi}^{\top} + S_{\vphi})^{-1} \, .
\end{align*}
The backwards message towards $\vphi$ is:
\begin{align*}
    \mu(\vphi) \!
    &\propto \! \exp\big( \mathbb{E}_q \big[ \log \mathcal{N}(z \, | \, \vw^{\top} \vphi, \tau^{-1}) \big] \big) \\
    &\propto \! \exp\big( \! - \! \frac{\tau}{2}(m_z^2 \! - \! 2m_z m_{\vw}^{\top} \vphi \! + \! \vphi^{\top}(m_{\vw} m_{\vw}^{\top} \! + \! S_{\vw}) \vphi) \big) \\
    &\propto \mathcal{N}(\vphi \, | \, m_{\vphi}, S_{\vphi}) \, ,
\end{align*}
whose parameters are
\begin{align*}
    m_{\vphi} &= \tau(m_{\vw} m_{\vw}^{\top} + S_{\vw}) (m_z m_{\vw}) \\ 
    S_{\vphi} &= \tau^{-1}(m_{\vw} m_{\vw}^{\top} + S_{\vw})^{-1} \, .
\end{align*}

The backwards message towards $\tau$ is:
\begin{equation}\label{eq:softdot-vmp-tau-message}
\begin{aligned}
    \mu(\tau) &\propto \exp\big(\mathbb{E}_q\big[\log \mathcal{N}(z \, | \, \vw^{\top} \vphi, \tau^{-1}) \big]\big) \\
    &\propto \exp\big( \mathbb{E}_q \big[ \frac{1}{2}\log \tau - \frac{\tau}{2}(z - \vw^{\top}\vphi)^2 \big]\big) \\
    &\propto \tau^{1/2} \exp\Big( - \frac{\tau}{2}\big((m_z - m_{\vw}^{\top}m_{\vphi})^2 + s^2_z \\
    &\qquad \qquad \ + \text{tr}[S_{\vw}(m_{\vphi} m_{\vphi}^{\top} + S_{\vphi})] \big) \Big) \\
    &\propto \mathcal{G}(\tau \, | \, \alpha_\tau, \beta_\tau) \, ,
\end{aligned}
\end{equation}
whose parameters are:
\begin{align*}
    \alpha_\tau &= 3/2\\
    \beta_\tau &= (m_z \! - \! m_{\vw}^{\top}m_{\vphi})^2 \! + \! s^2_z \! + \! \text{tr}[S_{\vw}(m_{\vphi} m_{\vphi}^{\top} \! + \! S_{\vphi})] .
\end{align*}

\paragraph{LogGamma} The LogGamma distribution is a probability distribution on the real line with density
\begin{equation*}
    \mathcal{LG}(x \mid a, b) = \frac{e^{b x} e^{-e^{x}/a}}{a^{b} \Gamma(b)}, \quad -\infty < x < \infty, \; a > 0, \; b > 0 \, .
\end{equation*}
It arises as the change of variable $x = \log \gamma$ applied to a Gamma-distributed $\gamma$.

\paragraph{Exponential} The exponential link factor node is:
\begin{equation*}
    f_{\exp}(\gamma \, | \, z) = \delta\big(\gamma - \exp(z) \big) \, .
\end{equation*}
The node's incoming messages are of the form:
\begin{align*}
    \mu(\gamma) = \mathcal{G}(\gamma \, | \, \alpha_\gamma, \beta_\gamma) \, , \quad
    \mu(z) = \mathcal{N}(z \, | \, m_z, s^2_z) \, .
\end{align*}
The forwards message to $\gamma$ is:
\begin{align*}
    \mu(\gamma) &= \int \delta\big(\gamma - \exp(z)\big) \mu(z) \mathrm{d}z \\
    &= | \frac{\partial \log( \gamma)}{\partial \gamma}| \int \delta\big(\log(\gamma) - z \big) \mathcal{N}(z \, | \, m_z, s^2_z) \mathrm{d}z \\
    &= \frac{1}{\gamma} \mathcal{N}(\log(\gamma) \, | \, m_z, s^2_z) \\
    &= \mathcal{LN}( \gamma \, | \, m_z, s_2^2) \, .
\end{align*}
The backwards message towards $z$ is obtained by an exact change of variable. Substituting $\gamma = \exp(z)$ with Jacobian $|\partial \exp(z) / \partial z| = \exp(z)$:
\begin{align*}
    \mu(z) &= \int \delta\big(\gamma - \exp(z)\big) \mu(\gamma) \,\mathrm{d}\gamma \\
    &= \Big|\frac{\partial \exp(z)}{\partial z}\Big| \int \delta\big(\exp(z) - \gamma\big) \, \mathcal{G}(\gamma \mid \alpha_\gamma, \beta_\gamma) \,\mathrm{d}\gamma \\
    &= \exp(z) \cdot \mathcal{G}\big(\exp(z) \mid \alpha_\gamma, \beta_\gamma\big) \\
    &= \exp(z) \cdot \frac{\beta_\gamma^{\alpha_\gamma}}{\Gamma(\alpha_\gamma)} \exp(z)^{\alpha_\gamma - 1} \exp\!\big(-\beta_\gamma \exp(z)\big) \\
    &= \frac{\beta_\gamma^{\alpha_\gamma}}{\Gamma(\alpha_\gamma)} \exp(\alpha_\gamma z) \exp\!\big(-\beta_\gamma \exp(z)\big) \\
    &= \mathcal{LG}\big(z \mid \beta_\gamma^{-1}, \alpha_\gamma\big) \, .
\end{align*}

\paragraph{Gamma} The Gamma distribution factor node is:
\begin{equation*}
     f_{\mathcal{G}}(\gamma \, | \, \alpha, \beta) = \mathcal{G}(\gamma \, | \, \alpha, \beta) = \frac{\beta^{\alpha}}{\Gamma(\alpha)} \gamma^{\alpha-1} \exp(-\beta \gamma) \, .
 \end{equation*}
We assume incoming messages of the form:
\begin{align*}
    q(\gamma) = \mathcal{G}(\gamma \, | \, \eta_\gamma, \zeta_\gamma) \, , \quad
    q(\beta) = \mathcal{G}(\beta \, | \, \eta_\beta, \zeta_\beta) \, ,
\end{align*}
where $\eta$ is a shape and $\zeta$ a rate parameter. The forward message towards $\gamma$ is:
\begin{align*}
    \mu(\gamma) &\propto \exp\big(\mathbb{E}_q \big[ \log \Gamma(\gamma \, | \, \alpha, \beta) \big] \big) \\
    &\propto \exp\big(\mathbb{E}_q \big[ (\alpha - 1)\log (\gamma) -\beta \gamma \big] \big) \\
    &\propto \gamma^{\alpha-1} \exp\big( - \frac{\eta_\beta}{\zeta_\beta} \gamma  \big) \\
    &\propto \mathcal{G}(\gamma \, | \, \alpha, \frac{\eta_\beta}{\zeta_\beta})
\end{align*}
The backwards message towards $\beta$ is:
\begin{align*}
    \mu(\beta) &\propto \exp\big(\mathbb{E}_q \big[ \log \Gamma(\gamma \, | \, \alpha, \beta) \big] \big) \\
    &\propto \exp\big( \mathbb{E}_q \big[ \alpha \log \beta  - \beta \gamma \big] \big) \\
    &\propto \beta^\alpha \exp\big( - \beta \frac{\eta_\gamma}{\zeta_\gamma} \big) \\
    &\propto \mathcal{G}(\beta \, | \, \alpha, \frac{\eta_\gamma}{\zeta_\gamma}) \, .
\end{align*}

\paragraph{Normal} The Normal factor node has the form:
\begin{equation*}
    f_{\mathcal{N}}(y \, | \, \mu, \tau) = \mathcal{N}(y \, | \, \mu, \tau^{-1}) \, .
\end{equation*}
We assume the incoming messages are:
\begin{align*}
    q(y,\mu) &= q(y \, | \, \mu, s^2_y) q(\mu \, | \, m_{\mu}, s^2_{\mu}) \\
    &= \mathcal{N}\left(\begin{bmatrix} y \\ \mu \end{bmatrix} \, | \, \begin{bmatrix} m_{\mu} \\ m_{\mu} \end{bmatrix}, \begin{bmatrix} s^2_y + s^2_{\mu} & s^2_{\mu} \\ s^2_{\mu} & s^2_{\mu} \end{bmatrix}\right)\\
    q(\tau) &= \mathcal{G}(\tau \, | \, \alpha_\tau, \beta_\tau)
\end{align*}
Then the forward message to $y$ is:
\begin{align*}
    \mu(y) &\propto \exp\big( \mathbb{E}_q \big[\log \mathcal{N}(y \, | \, \mu, \tau^{-1}) \big] \big) \\
    &\propto \exp\big( \mathbb{E}_q \big[\frac{1}{2}\log \tau -\frac{\tau}{2}(y^2 - 2y\mu + \mu^2)  \big] \big) \\
    &\propto \exp\big(  - \frac{\alpha_\tau}{2 \beta_\tau}(y^2 \! - \! 2 y m_{\mu} \! + \! m_{\mu}^2 )  \big) \\
    &\propto \mathcal{N}\big(y \, | \, m_{\mu}, \frac{\alpha_\tau}{\beta_{\tau}}\big) \, ,
\end{align*}
where $\psi(\cdot)$ is the digamma function. The backwards message towards $\mu$ is:
\begin{align*}
    \mu(\mu) 
    &\propto \exp\big( \mathbb{E}_q \big[\frac{1}{2}\log \tau -\frac{\tau}{2}(y^2 - 2y\mu + \mu^2)  \big] \big) \\ 
    &\propto \exp\big(  - \frac{\alpha_\tau}{2 \beta_\tau}(m_{\mu}^2 \! - \! 2 m_{\mu} \mu \! + \! \mu^2 )  \big) \\
    &\propto \mathcal{N}\big(\mu \, | \, m_{\mu}, \frac{\alpha_\tau}{\beta_\tau}\big)
\end{align*}
The backwards message towards $\tau$ is:
\begin{align*}
    \mu(\tau)
    &\propto \exp\big( \mathbb{E}_q \big[ \frac{1}{2}\log \tau - \frac{\tau}{2}(y^2 - 2y\mu + \mu^2) \big]\big) \\
    &\propto \tau^{1/2} \exp\big( - \frac{\tau}{2} s^2_y \big) \\
    &\propto \mathcal{G}(\tau \, | \, \frac{3}{2}, s^2_y) \, .
\end{align*}
 
\paragraph{Equality} The equality node has the following form:
\begin{equation*} 
    f_=(x_1, x_2, x_3) = \delta\big(x_1 - x_2\big) \delta\big(x_2 - x_3\big)
\end{equation*}
The forward message to $x_1$ is:
\begin{align*}
    \mu(x_1) &= \iint \delta\big(x_1 - x_2\big) \delta\big(x_1 - x_3\big) \mu_{x_2}(x_2) \mu_{x_3}(x_3) \mathrm{d}x_2 \mathrm{d}x_3 \\
    &= \int \delta\big(x_1 - x_3\big) \mu_{x_2}(x_1) \mu_{x_3}(x_3) \mathrm{d}x_3 \\
    &= \mu_{x_2}(x_1) \mu_{x_3}(x_1) \, .
\end{align*}
In words, the outgoing message is the product of incoming messages.
In fact, the equality node is symmetric; the outgoing message towards any edge is the product of incoming messages on other edges \citep{loeliger_factor_2007}.

\subsection{Q-conjugacy on the $\gamma$ edge}\label{app:gamma-q-conjugacy}

This section derives the closed-form stationarity condition for the marginal $q(\gamma)$ on an edge adjacent to the exponential link, under the Gamma form constraint $q(\gamma) = \mathcal{G}(\gamma \mid \alpha, \beta)$. The derivation parallels the Gaussian $z$-edge treatment in the main text but uses the Gamma exponential family structure.

\paragraph{Local BFE terms.}
On the $\gamma$ edge, two messages arrive from the two adjacent factors. From the exponential link, the forward message is log-Normal (\cref{app:message-rules}):
\begin{equation}\label{eq:msg-lognormal}
    \mu_{\gamma \leftarrow \exp}(\gamma) = \mathcal{LN}(\gamma \mid m_z, s_z^2) \, ,
\end{equation}
whose log-density is
\begin{equation*}
    \log \mathcal{LN}(\gamma \mid m_z, s_z^2) = -\frac{(\log \gamma - m_z)^2}{2 s_z^2} - \log \gamma - \tfrac{1}{2}\log(2\pi s_z^2) \, .
\end{equation*}
From the far-side conjugate factor (e.g., \ the normal or gamma factor), the message is Gamma:
\begin{equation}\label{eq:msg-gamma-far}
    \mu_{\gamma \leftarrow \mathrm{far}}(\gamma) = \mathcal{G}(\gamma \mid a, b) \, ,
\end{equation}
whose log-density is $(a-1)\log \gamma - b\gamma + a\log b - \log\Gamma(a)$. The stationarity target is
\begin{equation}\label{eq:ell-gamma}
    \bar{\ell}(\gamma) = \log \mu_{\gamma \leftarrow \exp}(\gamma) + \log \mu_{\gamma \leftarrow \mathrm{far}}(\gamma) \, .
\end{equation}

\paragraph{Gamma as exponential family.}
The Gamma distribution $\mathcal{G}(\gamma \mid \alpha, \beta)$ with shape $\alpha$ and rate $\beta$ is an exponential family with natural parameters
\begin{equation}\label{eq:gamma-natural}
    \veta = \begin{pmatrix} \eta_1 \\ \eta_2 \end{pmatrix} = \begin{pmatrix} \alpha - 1 \\ -\beta \end{pmatrix}, \quad \eta_1 > -1, \; \eta_2 < 0 \, ,
\end{equation}
sufficient statistics $\vT(\gamma) = (\log \gamma, \; \gamma)^\top$, and log-partition function
\begin{equation}\label{eq:gamma-logpartition}
    A(\veta) = \log \Gamma(\eta_1 + 1) - (\eta_1 + 1)\log(-\eta_2) \, .
\end{equation}

\paragraph{Closed-form expectations.}
The expected sufficient statistics under $q(\gamma) = \mathcal{G}(\gamma \mid \alpha, \beta)$ are given by the gradient of the log-partition:
\begin{equation}\label{eq:gamma-expectations}
    \nabla_{\veta} A(\veta) = \begin{pmatrix} \Exp{q(\gamma)}{\log \gamma} \\[4pt] \Exp{q(\gamma)}{\gamma} \end{pmatrix} = \begin{pmatrix} \psi(\eta_1 + 1) - \log(-\eta_2) \\[4pt] -(\eta_1 + 1)/\eta_2 \end{pmatrix},
\end{equation}
where $\psi$ denotes the digamma function. Both are analytic functions of $\veta$.

The log-Normal message \eqref{eq:msg-lognormal} additionally requires the second moment $\Exp{q}{(\log \gamma)^2}$. By the variance identity,
\begin{align}\label{eq:gamma-logsq}
    \Exp{q(\gamma)}{(\log \gamma)^2} &= \mathrm{Var}_{q}[\log \gamma] + \bigl(\Exp{q(\gamma)}{\log \gamma}\bigr)^2 \notag \\
    &= \psi'(\eta_1 + 1) + \bigl(\psi(\eta_1 + 1) - \log(-\eta_2)\bigr)^2,
\end{align}
where $\psi'$ is the trigamma function. This is again an analytic function of $\veta$.

Writing $\bar{g} \triangleq \Exp{q}{\log\gamma}$, $\bar{g}_2 \triangleq \Exp{q}{(\log\gamma)^2}$, and $\bar{\gamma} \triangleq \Exp{q}{\gamma}$, the expected energy under $q(\gamma)$ is
\begin{align}\label{eq:exp-ell-gamma}
    \Exp{q}{-\bar{\ell}(\gamma)}
    &= \tfrac{1}{2s_z^2}\,\bar{g}_2
       - \tfrac{m_z}{s_z^2}\,\bar{g}
       + \bar{g} \notag \\
    &\quad - (a\!-\!1)\,\bar{g}
       + b\,\bar{\gamma}
       + \mathrm{const} \, ,
\end{align}
where every expectation is a closed-form function of $\veta$ via \eqref{eq:gamma-expectations}--\eqref{eq:gamma-logsq}. This is the Q-conjugacy condition: $\Exp{q}{-\bar{\ell}(\gamma)}$ and its gradient with respect to $\veta$ are analytic functions of $\veta$.

\paragraph{Fisher information matrix.}
The Fisher information matrix of the Gamma distribution in natural parameters is the Hessian of the log-partition \eqref{eq:gamma-logpartition}:
\begin{equation}\label{eq:fisher-gamma}
    \vF(\veta) = \nabla_{\veta}^2 A(\veta) = \begin{pmatrix} \psi'(\eta_1 + 1) & -1/\eta_2 \\[4pt] -1/\eta_2 & (\eta_1 + 1)/\eta_2^2 \end{pmatrix}.
\end{equation}

\paragraph{Stationarity condition.}
Setting the gradient of the local BFE contribution at the $\gamma$ edge to zero and rearranging (as in the main text for the $z$ edge) yields the fixed-point equation
\begin{equation}\label{eq:stationary-gamma}
    \veta^* = \vF(\veta^*)^{-1} \, \nabla_{\veta^*} \Exp{q^*(\gamma)}{-\bar{\ell}(\gamma)} \, .
\end{equation}
Since $\vF(\veta)$ \eqref{eq:fisher-gamma} and $\nabla_{\veta}\Exp{q(\gamma)}{-\bar{\ell}(\gamma)}$ \eqref{eq:exp-ell-gamma} are both closed-form functions of $\veta$, the right-hand side of \eqref{eq:stationary-gamma} is a closed-form expression of $\veta^*$. This gives a closed-form fixed-point equation for the Gamma marginal $q^*(\gamma)$, analogous to the Gaussian fixed-point equation for $q^*(z)$ in the main text.

Differentiating \eqref{eq:exp-ell-gamma} with respect to $\veta$ and writing $\bar{g} = \psi(\eta_1\!+\!1) - \log(-\eta_2)$ and $c_1 = 1 - m_z/s_z^2 - (a\!-\!1)$, the gradient is
\begin{align}\label{eq:grad-ell-gamma}
    &\nabla_{\veta}\Exp{q}{-\bar{\ell}(\gamma)} = \notag \\
    &\begin{pmatrix}
        \frac{\psi'(\eta_1\!+\!1)}{s_z^2}
        \bigl[\bar{g} + \tfrac{\psi''(\eta_1\!+\!1)}{2\,\psi'(\eta_1\!+\!1)}\bigr]
        + c_1\,\psi'(\eta_1\!+\!1) \\[6pt]
        \frac{\bar{g}}{s_z^2(-\eta_2)}
        + \frac{c_1}{(-\eta_2)}
        - \frac{b}{\eta_2^{2}}
    \end{pmatrix},
\end{align}
where $\psi''$ is the polygamma function of order two. Every term is an analytic function of $\veta$. Substituting \eqref{eq:fisher-gamma} and \eqref{eq:grad-ell-gamma} into \eqref{eq:stationary-gamma} yields a closed-form fixed-point equation for $\veta^*$, which we solve by natural gradient descent on the Gamma manifold as described in \cref{app:experimental}.

%% file: sections/appendix_xor.tex

This appendix gives explicit parameter values that realize the XOR encoding of \cref{fig:xor-uncertainty}. We use features $\vphi(\vx) = (x_1, x_2, 1)^\top$ with a bias term and two experts ($i \in \{1,2\}$) with depth-2 routing, each producing a log-precision $m_i(\vx)$ that feeds into a likelihood $y \sim \mathcal{N}(\hat{y}_i, e^{-m_i})$. Expert~1 predicts $\hat{y}_1{=}0$ and expert~2 predicts $\hat{y}_2{=}1$.

\paragraph{Router weights.}
Each expert has a routing vector $\vv_i$ and shared sub-expert weight vectors $\vw^L$, $\vw^R$:
\begin{align*}
    \vv_1 &= (14,\; 0,\; {-}7)^\top, & \vv_2 &= ({-}14,\; 0,\; 7)^\top, \\
    \vw^L &= (0,\; 10,\; 0)^\top, & \vw^R &= (0,\; {-}10,\; 10)^\top.
\end{align*}
The softdot precision is $\tau = 2000$ (approximating the sharp-routing limit).

\paragraph{Routing mechanism.}
For expert~$i$ at input $\vx$, the router score is $h_i = \vv_i^\top \vphi(\vx)$. Two switches with clamped weights $\pm 1$ produce opposing activations:
$\kappa_i^R = \exp(h_i)$, $\kappa_i^L = \exp(-h_i)$.
The blended log-precision is
\[
    m_i(\vx) \approx \frac{\kappa_i^R \cdot (\vw^R)^\top \vphi(\vx) + \kappa_i^L \cdot (\vw^L)^\top \vphi(\vx)}{\kappa_i^R + \kappa_i^L},
\]
which selects the sub-expert from the active branch.

\paragraph{Trace through all four inputs.}
\Cref{tab:xor-trace} shows the routing scores and resulting log-precisions. Expert~1 achieves high $\gamma_1 = e^{m_1}$ (and thus high influence for its prediction $\hat{y}_1{=}0$) precisely at the XOR${=}0$ inputs; expert~2 achieves high $\gamma_2 = e^{m_2}$ (high influence for $\hat{y}_2{=}1$) at the XOR${=}1$ inputs.

\begin{table}[h]
\centering
\caption{XOR encoding trace. For each input, the active branch selects the sub-expert whose output $z$ yields a large log-precision $m$, giving that expert high influence $\gamma = e^m$.}
\label{tab:xor-trace}
\small
\begin{tabular}{@{}cccccccc@{}}
\toprule
$x_1$ & $x_2$ & XOR & $h_1$ & $m_1$ & $h_2$ & $m_2$ & Dominant \\
\midrule
0 & 0 & 0 & $-7$ & ${\approx}\,10$ & $+7$ & ${\approx}\,0$ & Exp.\ 1 ($\hat{y}{=}0$) \\
0 & 1 & 1 & $-7$ & ${\approx}\,0$ & $+7$ & ${\approx}\,10$ & Exp.\ 2 ($\hat{y}{=}1$) \\
1 & 0 & 1 & $+7$ & ${\approx}\,0$ & $-7$ & ${\approx}\,10$ & Exp.\ 2 ($\hat{y}{=}1$) \\
1 & 1 & 0 & $+7$ & ${\approx}\,10$ & $-7$ & ${\approx}\,0$ & Exp.\ 1 ($\hat{y}{=}0$) \\
\bottomrule
\end{tabular}
\end{table}

%% file: sections/appendix_neural.tex

The universal approximation result of \cref{cor:universal} parallels classical theorems for neural networks. \citet{cybenko_approximation_1989} proved that single-hidden-layer networks with sigmoidal activation functions are universal approximators. \citet{hornik_approximation_1991} extended this to arbitrary bounded nonconstant activation functions, showing that the multilayer feedforward architecture itself—not the specific choice of activation—is the source of universality (Theorems~1 and~2 therein).

The structural analogy with our framework is direct. In the sharp-routing limit ($\tau \to \infty$), the split-branch construction of \cref{sec:depth2} partitions the input space into regions separated by axis-aligned hyperplanes, selecting a different expert prediction in each region. This is the same piecewise-constant mechanism that underlies the neural proofs: a sufficiently fine partition of the input space, combined with a constant approximation in each cell, can approximate any continuous function on a compact domain to arbitrary accuracy.

The key difference lies in the inference mechanism:
\begin{itemize}
    \item Neural networks are trained via gradient descent on a loss function, producing point estimates of all parameters.
    \item Our framework optimizes the Bethe free energy with closed-form message passing, producing marginal posterior distributions over all parameters and latent variables.
\end{itemize}
As a consequence, epistemic uncertainty is available at every level of the hierarchy without additional cost—no ensembles-of-ensembles, no MC dropout, no Laplace approximation. \Cref{fig:xor-uncertainty} illustrates this concretely: the posterior standard deviation reveals where the model is uncertain about its routing decisions, information that is absent from any point-estimate approach.

%% file: sections/appendix_graphppl.tex

The models from \cref{sec:grammar} can be written directly in GraphPPL.jl \citep{nuijten_graphppljl_2024}, a probabilistic programming language that compiles model specifications into Forney-style factor graphs for inference in RxInfer.jl \citep{bagaev_reactive_2023}.

In the \texttt{@model} block, each tilde statement creates one factor node. The mapping to the alphabet \eqref{eq:alphabet} is as follows:
\begin{center}
\begin{tabular}{@{}ll@{}}
\toprule
\textsf{GraphPPL} & \textsf{Alphabet letter} \\
\midrule
\texttt{NormalMeanPrecision} & $f_{\mathcal{N}}$ \eqref{eq:normal-factor} \\
\texttt{GammaShapeRate} & $f_{\gammad}$ \eqref{eq:gamma-factor} \\
\texttt{softdot} & $f_*$ \eqref{eq:softdot} \\
\texttt{Log} & $f_{\exp}$ \eqref{eq:exp-link} \\
\bottomrule
\end{tabular}
\end{center}
Equality nodes $f_{=}$ \eqref{eq:equality} are inserted automatically whenever a variable appears in more than two factors.

\subsection{Depth 0: static ensemble}\label{app:graphppl-depth0}

The following listing implements the static ensemble model introduced in \cref{sec:depth0}. The outer loop over $i$ corresponds to the $\cdots$ in \cref{fig:depth0} (repetition over experts), and the inner loop over $j$ corresponds to the $\vdots$ (repetition over observations).

\begin{minted}{julia}
@model function static_ensemble(
    n_forecasters, X, y, priors
)
  for i = 1:n_forecasters
    gamma[i] ~ priors[:gamma][i]
  end
  for i = 1:n_forecasters
    for j = 1:length(y)
      y[j] ~ NormalMeanPrecision(
        X[i,j], gamma[i]
      )
    end
  end
end
\end{minted}

\subsection{Depth 1: Precision-Gated Experts}\label{app:graphppl-depth1}

The following listing implements the Precision-Gated Experts model introduced in \cref{sec:depth1}. The loops over $i$ and $j$ carry the same meaning as in \cref{app:graphppl-depth0} ($\cdots$ and $\vdots$ in \cref{fig:depth1}).

\begin{minted}{julia}
@model function pge(n_forecasters,
    n_obs, features, predictions,
    y, priors
)
  for i = 1:n_forecasters
    w[i]    ~ priors[:w][i]
    tau[i]  ~ priors[:tau][i]
    beta[i] ~ priors[:beta][i]
  end
  for j = 1:n_obs
    for i = 1:n_forecasters
      z[i,j] ~ softdot(
        features[j], w[i], tau[i]
      )
      gamma[i,j] ~ GammaShapeRate(
        1.0, beta[i]
      )
      z[i,j] ~ Log(gamma[i,j])
      y[j] ~ NormalMeanPrecision(
        predictions[i,j], gamma[i,j]
      )
    end
  end
end
\end{minted}

The constraints and initialization blocks declare the factorization and form constraints from \cref{sec:declarative}:

\begin{minted}{julia}
@constraints function pge_constraints()
  q(w, z, gamma, tau, beta) =
    q(w)q(z, gamma)q(tau)q(beta)
  q(z)     :: ProjectedTo(
    NormalMeanVariance)
  q(gamma) :: ProjectedTo(Gamma)
end

@initialization function pge_init(priors)
  q(w)     = priors[:w]
  q(z)     = NormalMeanVariance(0.0, 1.0)
  q(gamma) = GammaShapeScale(1.0, 1.0)
  q(tau)   = priors[:tau]
  q(beta)  = priors[:beta]
end
\end{minted}

Given these three blocks, RxInfer.jl constructs the factor graph, derives the message passing schedule, and runs inference automatically — no update equations are written by the user.

\subsection{Noisy Experts}\label{app:graphppl-noisy}

The noisy experts model (\cref{fig:noisy-model}) augments the Precision-Gated Experts (\cref{app:graphppl-depth1}) with an additional noise layer for each expert. Instead of treating each forecaster prediction $\hat{y}_{i,j}$ as an exact input, the model introduces a latent variable $\mathrm{pred}_{i,j}$ connected to the observed prediction through a Normal factor with learned precision $\kappa_i$.

\paragraph{Training model.} During training, $y$ is a data variable (observed). The factorization between $q(y, \mathrm{pred})$ is not required: since $y$ is observed, the structured dependence between $y$ and $\mathrm{pred}$ is absorbed by the data and does not affect the variational updates.

\begin{minted}{julia}
@model function noisy_experts(
    n_forecasters, n_obs,
    features, predictions, y, priors
)
  local w, z, gamma, tau, beta, kappa, pred
  for i = 1:n_forecasters
    w[i]     ~ priors[:w][i]
    tau[i]   ~ priors[:tau][i]
    beta[i]  ~ priors[:beta][i]
    kappa[i] ~ priors[:kappa][i]
  end
  for j = 1:n_obs
    for i = 1:n_forecasters
      z[i,j] ~ softdot(
        features[j], w[i], tau[i]
      ) where {meta = LowRankMeta()}
      gamma[i,j] ~ GammaShapeRate(
        1.0, beta[i]
      )
      z[i,j] ~ Log(gamma[i,j])
      pred[i,j] ~ NormalMeanPrecision(
        predictions[i,j], kappa[i]
      )
      y[j] ~ NormalMeanPrecision(
        pred[i,j], gamma[i,j]
      )
    end
  end
end
\end{minted}

\paragraph{Prediction model.} For prediction, $y$ is no longer observed but is instead a latent variable with an uninformative prior. In this setting, the choice between mean-field and belief propagation on the $\mathrm{pred} \to y$ edge becomes significant. Under the mean-field variational rule the outgoing message towards $y$ from $f_{\mathcal{N}}(\mathrm{pred}, \gamma)$ uses only $\mathbb{E}_{q}[\mathrm{pred}]$:
\begin{minted}{julia}
# Mean-field rule: ignores Var(pred)
@rule NormalMeanPrecision(:out, Marginalisation) (
    q_mu::Any, q_tau::Any
) = NormalMeanPrecision(
    mean(q_mu), mean(q_tau)
)
\end{minted}
This sets the predictive precision to $\mathbb{E}[\gamma_{i,j}]$ alone, discarding the forecaster noise $\kappa_i^{-1}$ entirely. By contrast, the belief propagation rule propagates the full uncertainty in $\mathrm{pred}$:
\begin{minted}{julia}
# BP rule: additive variance 1/kappa + 1/gamma
@rule NormalMeanPrecision(:out, Marginalisation) (
    m_mu::NormalDistributionsFamily,
    m_tau::PointMass
) = begin
    m_mu_mean, m_mu_cov = mean_cov(m_mu)
    NormalMeanPrecision(m_mu_mean,
      inv(m_mu_cov + inv(mean(m_tau))))
end
\end{minted}
Here $\mathrm{Var}(\mathrm{pred}_{i,j}) = \kappa_i^{-1}$, so the predictive variance becomes $\kappa_i^{-1} + \gamma_{i,j}^{-1}$. The prediction model activates this rule by keeping $q(y, \mathrm{pred})$ in a single structured factor and marking $y$ as uninformative:

\begin{minted}{julia}
@model function noisy_experts_prediction(
    n_forecasters, n_obs,
    features, predictions, priors
)
  local w, z, gamma, tau, beta, kappa, pred, y
  for i = 1:n_forecasters
    w[i]     ~ priors[:w][i]
    tau[i]   ~ priors[:tau][i]
    beta[i]  ~ priors[:beta][i]
    kappa[i] ~ priors[:kappa][i]
  end
  for j = 1:n_obs
    for i = 1:n_forecasters
      z[i,j] ~ softdot(
        features[j], w[i], tau[i]
      ) where {meta = LowRankMeta()}
      gamma[i,j] ~ GammaShapeRate(
        1.0, beta[i]
      )
      z[i,j] ~ Log(gamma[i,j])
      pred[i,j] ~ NormalMeanPrecision(
        predictions[i,j], kappa[i]
      )
      y[j] ~ NormalMeanPrecision(
        pred[i,j], gamma[i,j]
      )
    end
    y[j] ~ Uninformative()
  end
end
\end{minted}

The constraints and initialization blocks for both models:

\begin{minted}{julia}
@constraints function noisy_constraints(
    priors, prediction
)
  if prediction
    q(w, z, gamma, tau, beta,
      kappa, pred, y) =
      q(w)q(z, gamma)q(tau)
      q(beta)q(kappa)q(y, pred)
  else
    q(w, z, gamma, tau, beta,
      kappa, pred) =
      q(w)q(z, gamma)q(tau)
      q(beta)q(kappa)q(pred)
  end
  q(z) :: ProjectedTo(
    NormalMeanVariance)
  q(gamma) :: ProjectedTo(Gamma)
end

@initialization function noisy_init(priors)
  q(w)     = deepcopy(priors[:w])
  q(z)     = NormalMeanVariance(0.0, 1.0)
  q(gamma) = GammaShapeScale(1.0, 1.0)
  q(tau)   = priors[:tau]
  q(beta)  = priors[:beta]
  q(kappa) = priors[:kappa]
  q(pred)  = NormalMeanVariance(0.0, 1.0)
end
\end{minted}

\input{figures/noisy_graph.tex}

%% file: figures/noisy_graph.tex
\begin{figure}[t]
\centering
\begin{tikzpicture}[every node/.style={font=\small},
    pibox/.style={draw, double, double distance=1.5pt, minimum size=7mm, rounded corners=1pt}]


    \node[box] at (0, 1.5) (pw) {$\mathcal{N}$};
    \node[smallbox] at (1.5, 1.5) (eq_w) {$=$};
    \draw[-] (pw) -- (eq_w) node[midway, above, font=\scriptsize] {$\vw_1$};
    \draw[-] (eq_w) -- (2.4, 1.5);
    \node at (2.7, 1.5) {$\cdots$};

    \node[box] at (0, -1.5) (ptau) {$\gammad$};
    \node[smallbox] at (1.5, -1.5) (eq_t) {$=$};
    \draw[dashed] (ptau) -- (eq_t) node[midway, below, font=\scriptsize] {$\tau_1$};
    \draw[dashed] (eq_t) -- (2.4, -1.5);
    \node at (2.7, -1.5) {$\cdots$};

    \node[pibox] at (1.5, 0) (pi) {$\pi$};
    \fill (pi.south) ++(-0.12, 0) arc (180:360:0.12) -- cycle;

    \draw[-] (eq_w) -- (pi);
    \draw[dashed] (eq_t) -- (pi);

    \node[left=4.5mm of pi, clamped] (phi) {};
    \node[left=1.3mm of phi, font=\scriptsize] {$\vphi(\vx_1)$};
    \draw[-] (phi) -- (pi);


    \node[box] at (3.5, 0) (n) {$\mathcal{N}$};
    \draw[fill=white] (n.north) ++(0.12, 0) arc (0:180:0.12) -- cycle;
    \draw[dashed] (pi) -- (n) node[midway, below, font=\scriptsize] {$\gamma_{1,1}$};


    \node[box] at (3.5, 1.5) (nnoise) {$\mathcal{N}$};
    \draw[fill=white] (nnoise.north) ++(0.12, 0) arc (0:180:0.12) -- cycle;
    \draw[-] (n) -- (nnoise) node[midway, left, font=\scriptsize] {$\mathrm{pred}_{1,1}$};

    \node[clamped] at (3.5, 2.5) (yh) {};
    \node[above=0.3mm of yh, font=\scriptsize] {$\hat{y}_{1,1}$};
    \draw[-] (yh) -- (nnoise);


    \node[box] at (5.5, 2.5) (pkappa) {$\gammad$};
    \node[smallbox] at (5.5, 1.5) (eq_k) {$=$};
    \draw[dashed] (pkappa) -- (eq_k) node[midway, right, font=\scriptsize] {$\kappa_1$};
    \draw[dashed] (eq_k) -- (nnoise);
    \draw[dashed] (eq_k) -- (6.4, 1.5);
    \node at (6.7, 1.5) {$\cdots$};


    \node[smallbox] at (5, 0) (eqy) {$=$};
    \draw[-] (n) -- (eqy) node[midway, below, font=\scriptsize] {$y_1$};
    \node[clamped] at (5, -0.9) (obs) {};
    \node[below=0.3mm of obs, font=\scriptsize] {$y_1$};
    \draw[-] (obs) -- (eqy);
    \draw[-] (eqy) -- (5.9, 0);
    \node at (6.2, 0) {$\cdots$};

\end{tikzpicture}
\caption{Noisy experts factor graph, shown for expert $i{=}1$ and observation $j{=}1$. Compared to Depth~1 (\cref{fig:depth1}), the clamped prediction $\hat{y}_{1,1}$ is replaced by a latent variable $\mathrm{pred}_{1,1}$, connected to the observed forecaster output through an additional Normal factor with learned precision $\kappa_1$. This separates forecaster disagreement from irreducible predictive uncertainty.}
\label{fig:noisy-model}
\end{figure}

%% file: sections/appendix_experimental.tex

\subsection{Base forecasters}

We train $n = 7$ forecasters independently using a fixed input sequence length of 96 time steps.
\begin{itemize}
    \item \textbf{DLinear} \citep{zeng_are_2023}: decomposes the input series into trend and seasonal components, then applies separate linear layers to forecast each part. Channel-independent.
    \item \textbf{NLinear} \citep{zeng_are_2023}: applies a linear forecasting layer after normalizing the input sequence. Channel-independent.
    \item \textbf{LSTM} \citep{hochreiter_long_1997}: recurrent neural network for temporal dependencies.
    \item \textbf{CNN}: cross-channel one-dimensional convolutional model applied across time-series channels.
    \item \textbf{NConv}: similar to NLinear, but replaces the linear layer with a one-dimensional convolution.
    \item \textbf{Quantile 10 / Quantile 90}: constant forecasters corresponding to the 10th and 90th quantiles.
\end{itemize}

All neural networks are trained with the AdamW optimizer \citep{loshchilov_decoupled_2019}, learning rate $0.001$, for $50$ epochs; the best checkpoint is selected by validation performance. For ETTh1 and ETTh2, forecasting is performed on a univariate target; for Exchange Rate, Electricity, and Traffic, all dimensions are predicted.

\subsection{Feature extraction}

To obtain a compact representation of multivariate time series with input length 96, we train a variational autoencoder (VAE) \citep{kingma_autoencoding_2013} that reconstructs the input sequence through a latent representation of dimension 64.

\subsection{Baseline gating methods}\label{app:baseline_nll}

For comparison we consider two gating variants based on the classical Mixture-of-Experts architecture \citep{jacobs_adaptive_1991}: \textbf{MoE}, a single-layer neural network with softmax output, and \textbf{MoE Big}, a two-layer fully connected neural network with softmax output. Both are trained with early stopping: training halts when the loss decrease falls below $\delta = 10^{-5}$, with 100 epochs and patience 50.

\paragraph{NLL computation for the softmax baselines.}
The softmax gating weights $w_i = e^{z_i}/\sum_j e^{z_j}$ are positive and sum to one, so the predictive mean is $\hat{y} = \sum_i w_i \hat{y}_i$. To obtain a predictive variance we interpret the unnormalized scores $e^{z_i}$ as precisions: setting $\gamma_i = e^{z_i}$ recovers the same weighted average $\hat{y} = \sum_i \gamma_i \hat{y}_i / \sum_i \gamma_i$ and yields a predictive precision $\sum_j e^{z_j}$, giving a Gaussian predictive distribution $\gauss(\hat{y},\, (\sum_j e^{z_j})^{-1})$ from which NLL is computed.

This interpretation is inherently limited: the softmax is shift-invariant, so replacing $z_i \to z_i + c$ leaves the weights (and hence the predictive mean) unchanged but scales the implied precision by $e^{c}$. No additional term in the training objective can resolve this ambiguity, since the gradient of the softmax output with respect to a global shift is zero. Consequently, the NLL values reported for the MoE baselines depend on an unidentified degree of freedom. In PGE, the precisions are explicit random variables equipped with priors, so no such ambiguity arises and the predictive uncertainty is fully determined by the model.

\subsection{Our framework}

All configurations of our framework are trained for 5 variational message passing iterations and use 3 iterations during prediction.

\paragraph{Solving the non-conjugate fixed point.}
The stationarity condition \eqref{eq:stationary-z-generic} is a nonlinear fixed-point equation whose right-hand side is available in closed form but whose solution is not. We solve it by casting it as the minimization of the local free energy $F_z[\veta]$ \eqref{eq:local-bfe-z} over the natural-parameter space of the Gaussian, which carries the Fisher information metric and forms a Riemannian manifold. Optimization on this manifold is performed with Manopt.jl \citep{bergmann_manoptjl_2022}, using the exponential family manifold interface of \citet{lukashchuk_exponentialfamilymanifoldsjl_2025}. We use an Armijo line search with initial step size $10^{-2}$ and a bounded-norm update rule (maximum step norm $0.5$), running up to 50 iterations with a tolerance of $10^{-6}$.

%% file: sections/full_results.tex
\subsection{Dataset info}

The statistical characteristics of the datasets include the dataset names, the number of input dimensions, corresponding to the number of individual time series in each dataset, the length of each time series, the training, validation, and test splits adopted for expert and ensemble training, and the sampling frequency of the temporal index.

\begin{table}[t]
\centering
\caption{Characteristics of the benchmark datasets, including dimensionality, length, split ratio (train:val:test), and sampling frequency.}
\label{tab:datasets}
\resizebox{\columnwidth}{!}{%
\begin{tabular}{l r r c l}
\hline
Dataset & Dims & Length & Split ratio & Frequency \\
\hline
ETTh1       & 7   & 14307 & 6:2:2 & 15 min \\
ETTh2       & 7   & 14307 & 6:2:2 & 15 min \\
Electricity & 321 & 26211 & 7:1:2 & Hourly \\
Traffic     & 862 & 17451 & 7:1:2 & Hourly \\
Exchange    & 8   & 7207  & 7:1:2 & Daily \\
\hline
\end{tabular}%
}
\end{table}

\subsection{Methods parameter count }

Table \ref{tab:methods_params} reports the trainable parameter counts for all considered ensemble methods in the VAE-based feature setting with input dimensionality $d=65$ and number of experts $K=7$. Here, $d=65$ consists of a 64-dimensional latent representation produced by the VAE together with one additional constant feature. The table presents both the general formulas and the corresponding numerical values for the experimental setup used in this work.

As expected, the Static model has the smallest number of parameters, since it only learns global coefficients for each expert. The MoE model is also relatively compact due to its single-layer softmax gating mechanism, whereas MoE Big is substantially larger because it employs a deeper gating network. The Dynamic and Noisy variants require the largest number of parameters because they model full covariance structures, whose complexity grows quadratically with $d$. By contrast, the diagonal variants remain much more parameter-efficient, since they restrict uncertainty modeling to diagonal terms only.

These parameter counts are later used to analyze the trade-off between forecasting performance and model complexity.

  \begin{table}[t]
  \centering
  \caption{Parameter-count formulas and counts for the VAE-feature setting ($d=65$, $K=7$).}
  \label{tab:methods_params}
  \setlength{\tabcolsep}{1.5pt}
  \begin{tabular}{lcc}
  \toprule
  Method & Formula for $N_{\text{total}}$ & Count ($d=65$) \\
  \midrule
  Static & $2K$ & $14$ \\
  MoE & $K(d+1)$ & $462$ \\
  MoE Big & $H(d+1)+K(H+1)$ & $14023$ \\
  Dyn.\ Diag. & $2K(d+2)$ & $938$ \\
  Noisy Diag. & $K(2d+6)$ & $952$ \\
  Dynamic & $K\!\left(d+\frac{d(d+1)}{2}+4\right)$ & $15498$ \\
  Noisy & $K\!\left(d+\frac{d(d+1)}{2}+6\right)$ & $15512$ \\
  \bottomrule
  \end{tabular}
  \end{table}

\subsection{Pareto Frontier}

The Pareto frontier was constructed for the Static, Dynamic, Dynamic Diagonal, Noisy, and Noisy Diagonal models. It compares the radar-plot area percentage shown in Figures \ref{fig:radar_mse_all} and \ref{fig:radar_nll_all} against the number of trainable parameters used by each model, with parameter counts reported in Table \ref{tab:methods_params}.

\begin{figure*}[t]
    \centering
    \includegraphics[width=\linewidth]{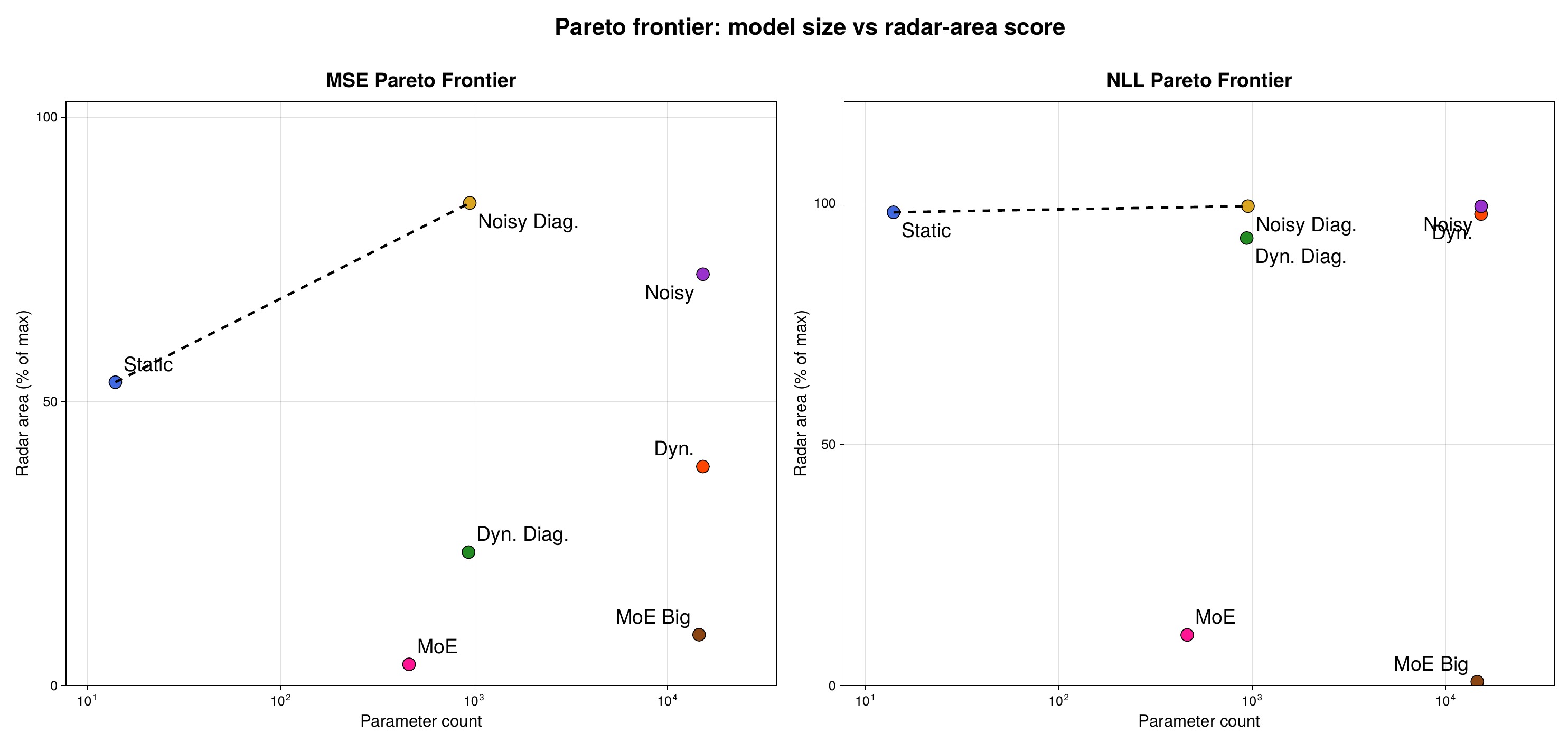}
    \caption{Pareto frontier relating model size to radar-chart area. Static and Noisy Diagonal offer the strongest trade-off between complexity and predictive performance.}
    \label{fig:pareto_frontier}
\end{figure*}

\subsection{Experts Performance}

This table presents the MAE and MSE results for each expert model on all datasets described in Table \ref{tab:datasets}, evaluated over the forecasting horizons ${96,192,336,720}$. All experts were trained with an input sequence length of 96. The considered expert models are CNN, DLinear, LSTM, NLinear, NConv, as well as the quantile predictors q10 and q90.

\begin{table*}[t]
\centering
\scriptsize
\setlength{\tabcolsep}{2.0pt}
\renewcommand{\arraystretch}{1.05}
\caption{Comparison of MSE and MAE for baseline models and quantile-based predictors with input sequence length $96$.}
\label{tab:baselines_seq96}
\resizebox{\textwidth}{!}{%
\begin{tabular}{llcccccccccccccc}
\toprule
\multirow{2}{*}{Dataset} & \multirow{2}{*}{H} &
\multicolumn{2}{c}{CNN} &
\multicolumn{2}{c}{DLinear} &
\multicolumn{2}{c}{LSTM} &
\multicolumn{2}{c}{NLinear} &
\multicolumn{2}{c}{NConv} &
\multicolumn{2}{c}{q10} &
\multicolumn{2}{c}{q90} \\
\cmidrule(lr){3-4} \cmidrule(lr){5-6} \cmidrule(lr){7-8} \cmidrule(lr){9-10}
\cmidrule(lr){11-12} \cmidrule(lr){13-14} \cmidrule(lr){15-16}
& & MSE & MAE & MSE & MAE & MSE & MAE & MSE & MAE & MSE & MAE & MSE & MAE & MSE & MAE \\
\midrule
\multirow{5}{*}{Exchange\_rate}
& 96  & 2.455 & 1.187 & 0.157 & 0.306 & 1.150 & 0.881 & 0.183 & 0.317 & 0.180 & 0.314 & 1.439 & 0.958 & 3.423 & 1.411 \\
& 192 & 3.559 & 1.448 & 0.325 & 0.446 & 2.589 & 1.337 & 0.400 & 0.478 & 0.410 & 0.479 & 1.379 & 0.962 & 3.584 & 1.447 \\
& 336 & 2.295 & 1.184 & 0.558 & 0.606 & 1.536 & 1.023 & 0.804 & 0.680 & 0.820 & 0.680 & 1.407 & 0.974 & 3.751 & 1.491 \\
& 720 & 4.902 & 1.706 & 1.495 & 0.984 & 1.982 & 1.147 & 2.187 & 1.166 & 2.426 & 1.225 & 2.269 & 1.262 & 2.313 & 1.125 \\
& Avg & 3.303 & 1.381 & 0.634 & 0.585 & 1.814 & 1.097 & 0.894 & 0.660 & 0.959 & 0.674 & 1.623 & 1.039 & 3.268 & 1.368 \\
\midrule
\multirow{5}{*}{ETTh1}
& 96  & 0.456 & 0.586 & 0.156 & 0.312 & 0.412 & 0.536 & 0.147 & 0.301 & 0.181 & 0.333 & 0.611 & 0.680 & 0.601 & 0.675 \\
& 192 & 0.248 & 0.398 & 0.162 & 0.318 & 0.278 & 0.419 & 0.137 & 0.295 & 0.178 & 0.323 & 0.538 & 0.630 & 0.647 & 0.708 \\
& 336 & 0.298 & 0.449 & 0.137 & 0.309 & 0.231 & 0.377 & 0.135 & 0.291 & 0.174 & 0.328 & 0.501 & 0.605 & 0.708 & 0.750 \\
& 720 & 1.006 & 0.905 & 0.290 & 0.462 & 0.275 & 0.409 & 0.216 & 0.361 & 0.310 & 0.440 & 0.465 & 0.583 & 0.799 & 0.809 \\
& Avg & 0.502 & 0.585 & 0.187 & 0.350 & 0.299 & 0.435 & 0.159 & 0.312 & 0.211 & 0.356 & 0.529 & 0.624 & 0.689 & 0.736 \\
\midrule
\multirow{5}{*}{ETTh2}
& 96  & 1.796 & 1.114 & 0.313 & 0.448 & 0.851 & 0.752 & 0.339 & 0.465 & 0.427 & 0.505 & 3.160 & 1.561 & 1.086 & 0.825 \\
& 192 & 1.017 & 0.772 & 0.270 & 0.414 & 0.855 & 0.751 & 0.320 & 0.455 & 0.414 & 0.504 & 2.943 & 1.496 & 1.081 & 0.823 \\
& 336 & 1.109 & 0.853 & 0.269 & 0.413 & 0.848 & 0.715 & 0.345 & 0.470 & 0.423 & 0.508 & 2.787 & 1.449 & 1.081 & 0.824 \\
& 720 & 1.957 & 1.228 & 0.385 & 0.516 & 1.232 & 0.966 & 0.681 & 0.661 & 0.685 & 0.655 & 2.683 & 1.420 & 1.198 & 0.875 \\
& Avg & 1.470 & 0.992 & 0.309 & 0.448 & 0.947 & 0.796 & 0.421 & 0.513 & 0.487 & 0.543 & 2.893 & 1.482 & 1.111 & 0.837 \\
\midrule
\multirow{5}{*}{Electricity}
& 96  & 0.350 & 0.432 & 0.176 & 0.267 & 0.396 & 0.443 & 0.178 & 0.266 & 0.348 & 0.363 & 2.778 & 1.335 & 1.838 & 1.095 \\
& 192 & 0.388 & 0.462 & 0.251 & 0.326 & 0.335 & 0.403 & 0.260 & 0.324 & 0.307 & 0.345 & 2.786 & 1.337 & 1.872 & 1.105 \\
& 336 & 0.364 & 0.434 & 0.223 & 0.314 & 0.337 & 0.407 & 0.229 & 0.310 & 0.248 & 0.319 & 2.800 & 1.340 & 1.922 & 1.119 \\
& 720 & 0.418 & 0.462 & 0.344 & 0.404 & 0.425 & 0.462 & 0.370 & 0.407 & 0.462 & 0.449 & 2.832 & 1.347 & 2.049 & 1.155 \\
& Avg & 0.380 & 0.448 & 0.249 & 0.336 & 0.373 & 0.429 & 0.259 & 0.341 & 0.341 & 0.369 & 2.799 & 1.340 & 1.920 & 1.119 \\
\midrule
\multirow{5}{*}{Traffic}
& 96  & 0.743 & 0.422 & 0.475 & 0.308 & 0.845 & 0.469 & 0.478 & 0.305 & 1.205 & 0.596 & 2.738 & 1.139 & 3.308 & 1.442 \\
& 192 & 0.737 & 0.423 & 0.710 & 0.431 & 0.817 & 0.443 & 0.716 & 0.430 & 0.831 & 0.453 & 2.746 & 1.140 & 3.161 & 1.425 \\
& 336 & 0.730 & 0.403 & 0.547 & 0.345 & 0.740 & 0.407 & 0.556 & 0.337 & 0.622 & 0.344 & 2.752 & 1.141 & 3.109 & 1.437 \\
& 720 & 0.800 & 0.447 & 0.814 & 0.473 & 1.055 & 0.576 & 0.820 & 0.466 & 1.191 & 0.588 & 2.772 & 1.144 & 3.073 & 1.426 \\
& Avg & 0.753 & 0.424 & 0.637 & 0.389 & 0.864 & 0.474 & 0.643 & 0.385 & 0.962 & 0.496 & 2.752 & 1.141 & 3.163 & 1.433 \\
\bottomrule
\end{tabular}%
}
\end{table*}

\subsection{Performance of ensembles}

The MSE and NLL values are presented for the Static, Dynamic, Dynamic Diagonal, Noisy, Noisy Diagonal, MoE, and MoE Big ensemble methods. The MoE model is implemented as a single-layer neural network with softmax-based weighting, whereas MoE Big is a two-layer softmax-based neural network with a higher number of trainable parameters, as detailed in Table \ref{tab:methods_params}. All ensemble approaches were evaluated on the same five datasets listed in Table \ref{tab:datasets}. In all cases, the ensemble input consisted of sequences of length 96, which were compressed into a 64-dimensional latent state using a variational autoencoder trained to reconstruct all dataset features over the same input length.

\begin{table*}[t]
\centering
\tiny
\setlength{\tabcolsep}{2.5pt}
\caption{MSE / NLL for Static, Dynamic (Dyn.), Dynamic Diagonal (Dyn. Diag.), Noisy Experts (Noisy), Noisy Diagonal (Noisy Diag.), Mixture of Experts (MoE), and MoE Big ensembles, compared against the best baseline model selected by lowest baseline MSE for each dataset and horizon. Rows are organized by dataset, horizon, and metric; models remain as columns. An average block is included for each dataset. \textbf{Bold} indicates the best result; {\underline{\textcolor{blue}{blue underlined}}} indicates the second best. Non-finite values are treated as missing. The \(\pm\) values denote 95\% confidence intervals for each metric.}
\label{tab:ensemble_comparison}
\begin{tabular}{lll c c c c c c c c}
\toprule
Dataset & H & Metric & Static & Dyn. & Dyn. Diag. & Noisy & Noisy Diag. & MoE & MoE Big & Best \\
\midrule
\multirow{10}{*}{{\scriptsize ETTh1}} & \multirow{2}{*}{96} & MSE & 0.132 \(\pm\) 0.01 & \underline{\textcolor{blue}{0.128 \(\pm\) 0.01}} & \textbf{0.123 \(\pm\) 0.01} & 0.131 \(\pm\) 0.01 & 0.129 \(\pm\) 0.01 & 0.147 \(\pm\) 0.01 & 0.156 \(\pm\) 0.01 & 0.147 \\
 &  & NLL & 1.138 \(\pm\) 0.10 & \textbf{0.412 \(\pm\) 0.02} & 0.451 \(\pm\) 0.01 & \underline{\textcolor{blue}{0.447 \(\pm\) 0.02}} & 0.468 \(\pm\) 0.01 & $(2.1 \pm 1.7)\!\times\!10^{25}$ & $(2.3 \pm 1.3)\!\times\!10^{24}$ & $\times$ \\
\cmidrule(lr){2-11}
 & \multirow{2}{*}{192} & MSE & \underline{\textcolor{blue}{0.114 \(\pm\) 0.01}} & 0.115 \(\pm\) 0.01 & \textbf{0.112 \(\pm\) 0.01} & 0.116 \(\pm\) 0.01 & 0.115 \(\pm\) 0.01 & 0.137 \(\pm\) 0.01 & 0.137 \(\pm\) 0.01 & 0.137 \\
 &  & NLL & 0.759 \(\pm\) 0.08 & \textbf{0.370 \(\pm\) 0.02} & 0.425 \(\pm\) 0.01 & \underline{\textcolor{blue}{0.408 \(\pm\) 0.02}} & 0.433 \(\pm\) 0.01 & $(1.1 \pm 1.1)\!\times\!10^{22}$ & $(1.2 \pm 0.9)\!\times\!10^{19}$ & $\times$ \\
\cmidrule(lr){2-11}
 & \multirow{2}{*}{336} & MSE & 0.105 \(\pm\) 0.01 & \underline{\textcolor{blue}{0.096 \(\pm\) 0.00}} & \textbf{0.095 \(\pm\) 0.00} & 0.098 \(\pm\) 0.01 & 0.097 \(\pm\) 0.00 & 0.172 \(\pm\) 0.01 & 0.231 \(\pm\) 0.01 & 0.135 \\
 &  & NLL & 0.610 \(\pm\) 0.07 & \textbf{0.314 \(\pm\) 0.01} & 0.384 \(\pm\) 0.01 & \underline{\textcolor{blue}{0.362 \(\pm\) 0.01}} & 0.393 \(\pm\) 0.01 & $(1.1 \pm 0.7)\!\times\!10^{22}$ & $(2.5 \pm 1.9)\!\times\!10^{14}$ & $\times$ \\
\cmidrule(lr){2-11}
 & \multirow{2}{*}{720} & MSE & 0.145 \(\pm\) 0.01 & \textbf{0.112 \(\pm\) 0.01} & \underline{\textcolor{blue}{0.113 \(\pm\) 0.01}} & 0.115 \(\pm\) 0.01 & 0.116 \(\pm\) 0.01 & 0.215 \(\pm\) 0.01 & 0.216 \(\pm\) 0.01 & 0.216 \\
 &  & NLL & 0.801 \(\pm\) 0.07 & \textbf{0.376 \(\pm\) 0.02} & 0.439 \(\pm\) 0.01 & \underline{\textcolor{blue}{0.424 \(\pm\) 0.01}} & 0.454 \(\pm\) 0.01 & $(9.1 \pm 6.9)\!\times\!10^{11}$ & $(1.5 \pm 1.1)\!\times\!10^{17}$ & $\times$ \\
\cmidrule(lr){2-11}
 & \multirow{2}{*}{Avg} & MSE & 0.124 \(\pm\) 0.01 & \underline{\textcolor{blue}{0.113 \(\pm\) 0.01}} & \textbf{0.111 \(\pm\) 0.01} & 0.115 \(\pm\) 0.01 & 0.114 \(\pm\) 0.01 & 0.168 \(\pm\) 0.01 & 0.185 \(\pm\) 0.01 & 0.159 \\
 &  & NLL & 0.827 \(\pm\) 0.08 & \textbf{0.368 \(\pm\) 0.02} & 0.424 \(\pm\) 0.01 & \underline{\textcolor{blue}{0.410 \(\pm\) 0.01}} & 0.437 \(\pm\) 0.01 & $(5.1 \pm 4.3)\!\times\!10^{24}$ & $(5.7 \pm 3.3)\!\times\!10^{23}$ & $\times$ \\
\midrule
\multirow{10}{*}{{\scriptsize ETTh2}} & \multirow{2}{*}{96} & MSE & 0.325 \(\pm\) 0.02 & 0.346 \(\pm\) 0.02 & 0.344 \(\pm\) 0.02 & 0.340 \(\pm\) 0.02 & 0.337 \(\pm\) 0.02 & 0.339 \(\pm\) 0.02 & \textbf{0.313 \(\pm\) 0.02} & \underline{\textcolor{blue}{0.313}} \\
 &  & NLL & 2.333 \(\pm\) 0.14 & 0.934 \(\pm\) 0.03 & \underline{\textcolor{blue}{0.885 \(\pm\) 0.02}} & 0.897 \(\pm\) 0.03 & \textbf{0.874 \(\pm\) 0.02} & $(3.1 \pm 2.5)\!\times\!10^{23}$ & $(1.1 \pm 1.1)\!\times\!10^{31}$ & $\times$ \\
\cmidrule(lr){2-11}
 & \multirow{2}{*}{192} & MSE & \underline{\textcolor{blue}{0.296 \(\pm\) 0.02}} & 0.336 \(\pm\) 0.02 & 0.332 \(\pm\) 0.02 & 0.327 \(\pm\) 0.02 & 0.321 \(\pm\) 0.02 & 0.320 \(\pm\) 0.02 & 0.320 \(\pm\) 0.02 & \textbf{0.270} \\
 &  & NLL & 1.736 \(\pm\) 0.12 & 0.924 \(\pm\) 0.04 & \underline{\textcolor{blue}{0.872 \(\pm\) 0.02}} & 0.882 \(\pm\) 0.03 & \textbf{0.856 \(\pm\) 0.02} & $(2.9 \pm 2.7)\!\times\!10^{23}$ & $(7.1 \pm 6.6)\!\times\!10^{24}$ & $\times$ \\
\cmidrule(lr){2-11}
 & \multirow{2}{*}{336} & MSE & 0.304 \(\pm\) 0.02 & 0.353 \(\pm\) 0.02 & 0.350 \(\pm\) 0.02 & 0.337 \(\pm\) 0.02 & 0.333 \(\pm\) 0.02 & \underline{\textcolor{blue}{0.269 \(\pm\) 0.01}} & 0.269 \(\pm\) 0.01 & \textbf{0.269} \\
 &  & NLL & 1.661 \(\pm\) 0.11 & 0.961 \(\pm\) 0.04 & \underline{\textcolor{blue}{0.902 \(\pm\) 0.02}} & 0.904 \(\pm\) 0.03 & \textbf{0.878 \(\pm\) 0.02} & $(5.8 \pm 4.1)\!\times\!10^{19}$ & $(1.4 \pm 1.3)\!\times\!10^{23}$ & $\times$ \\
\cmidrule(lr){2-11}
 & \multirow{2}{*}{720} & MSE & 0.480 \(\pm\) 0.02 & \textbf{0.321 \(\pm\) 0.02} & 0.323 \(\pm\) 0.02 & 0.326 \(\pm\) 0.02 & \underline{\textcolor{blue}{0.322 \(\pm\) 0.02}} & 0.385 \(\pm\) 0.02 & 0.385 \(\pm\) 0.02 & 0.385 \\
 &  & NLL & 1.976 \(\pm\) 0.11 & 0.870 \(\pm\) 0.03 & \underline{\textcolor{blue}{0.857 \(\pm\) 0.02}} & 0.863 \(\pm\) 0.03 & \textbf{0.854 \(\pm\) 0.02} & $(2.1 \pm 1.5)\!\times\!10^{19}$ & $\times$ & $\times$ \\
\cmidrule(lr){2-11}
 & \multirow{2}{*}{Avg} & MSE & 0.351 \(\pm\) 0.02 & 0.339 \(\pm\) 0.02 & 0.337 \(\pm\) 0.02 & 0.333 \(\pm\) 0.02 & 0.328 \(\pm\) 0.02 & 0.328 \(\pm\) 0.02 & \underline{\textcolor{blue}{0.322 \(\pm\) 0.02}} & \textbf{0.309} \\
 &  & NLL & 1.926 \(\pm\) 0.12 & 0.922 \(\pm\) 0.03 & \underline{\textcolor{blue}{0.879 \(\pm\) 0.02}} & 0.886 \(\pm\) 0.03 & \textbf{0.866 \(\pm\) 0.02} & $(1.5 \pm 1.3)\!\times\!10^{23}$ & $(3.8 \pm 3.6)\!\times\!10^{30}$ & $\times$ \\
\midrule
\multirow{10}{*}{{\scriptsize \shortstack{Exchange\\rate}}} & \multirow{2}{*}{96} & MSE & 0.326 \(\pm\) 0.01 & 0.398 \(\pm\) 0.01 & 0.239 \(\pm\) 0.01 & 0.306 \(\pm\) 0.01 & 0.262 \(\pm\) 0.01 & 2.455 \(\pm\) 0.03 & \underline{\textcolor{blue}{0.183 \(\pm\) 0.01}} & \textbf{0.157} \\
 &  & NLL & 18.757 \(\pm\) 0.68 & $(1.7 \pm 0.1)\!\times\!10^{2}$ & $(6.2 \pm 0.5)\!\times\!10^{2}$ & \underline{\textcolor{blue}{10.912 \(\pm\) 0.33}} & \textbf{9.705 \(\pm\) 0.32} & $(2.5 \pm 0.2)\!\times\!10^{19}$ & $(1.5 \pm 0.2)\!\times\!10^{13}$ & $\times$ \\
\cmidrule(lr){2-11}
 & \multirow{2}{*}{192} & MSE & 0.820 \(\pm\) 0.03 & 0.821 \(\pm\) 0.03 & 0.521 \(\pm\) 0.02 & 0.509 \(\pm\) 0.02 & \underline{\textcolor{blue}{0.453 \(\pm\) 0.02}} & 3.559 \(\pm\) 0.11 & 3.559 \(\pm\) 0.11 & \textbf{0.325} \\
 &  & NLL & 40.131 \(\pm\) 1.35 & $(1.7 \pm 0.1)\!\times\!10^{2}$ & $(1.6 \pm 0.1)\!\times\!10^{3}$ & \underline{\textcolor{blue}{17.519 \(\pm\) 0.73}} & \textbf{16.568 \(\pm\) 0.63} & $(2.3 \pm 0.2)\!\times\!10^{14}$ & $\times$ & $\times$ \\
\cmidrule(lr){2-11}
 & \multirow{2}{*}{336} & MSE & 1.161 \(\pm\) 0.03 & 1.420 \(\pm\) 0.03 & 1.071 \(\pm\) 0.02 & 0.854 \(\pm\) 0.02 & \underline{\textcolor{blue}{0.749 \(\pm\) 0.02}} & 2.295 \(\pm\) 0.07 & 2.295 \(\pm\) 0.07 & \textbf{0.558} \\
 &  & NLL & 43.942 \(\pm\) 1.04 & $(2.4 \pm 0.1)\!\times\!10^{2}$ & $(1.9 \pm 0.1)\!\times\!10^{3}$ & \underline{\textcolor{blue}{27.064 \(\pm\) 0.65}} & \textbf{25.050 \(\pm\) 0.56} & $(5.9 \pm 1.3)\!\times\!10^{20}$ & $(1.1 \pm 0.2)\!\times\!10^{21}$ & $\times$ \\
\cmidrule(lr){2-11}
 & \multirow{2}{*}{720} & MSE & 1.675 \(\pm\) 0.04 & 1.754 \(\pm\) 0.03 & \textbf{1.140 \(\pm\) 0.01} & 1.515 \(\pm\) 0.03 & \underline{\textcolor{blue}{1.472 \(\pm\) 0.03}} & 1.982 \(\pm\) 0.05 & 1.982 \(\pm\) 0.05 & 1.495 \\
 &  & NLL & \underline{\textcolor{blue}{36.109 \(\pm\) 0.78}} & $(1.3 \pm 0.0)\!\times\!10^{2}$ & $(8.6 \pm 0.5)\!\times\!10^{2}$ & 36.573 \(\pm\) 0.61 & \textbf{35.816 \(\pm\) 0.72} & $(4.3 \pm 1.1)\!\times\!10^{21}$ & $(2.8 \pm 0.8)\!\times\!10^{32}$ & $\times$ \\
\cmidrule(lr){2-11}
 & \multirow{2}{*}{Avg} & MSE & 0.995 \(\pm\) 0.02 & 1.098 \(\pm\) 0.03 & 0.743 \(\pm\) 0.01 & 0.796 \(\pm\) 0.02 & \underline{\textcolor{blue}{0.734 \(\pm\) 0.02}} & 2.573 \(\pm\) 0.07 & 2.005 \(\pm\) 0.06 & \textbf{0.634} \\
 &  & NLL & 34.735 \(\pm\) 0.96 & $(1.8 \pm 0.1)\!\times\!10^{2}$ & $(1.2 \pm 0.1)\!\times\!10^{3}$ & \underline{\textcolor{blue}{23.017 \(\pm\) 0.58}} & \textbf{21.785 \(\pm\) 0.56} & $(1.2 \pm 0.3)\!\times\!10^{21}$ & $(9.4 \pm 2.8)\!\times\!10^{31}$ & $\times$ \\
\midrule
\multirow{10}{*}{{\scriptsize Electricity}} & \multirow{2}{*}{96} & MSE & 0.182 \(\pm\) 0.01 & 0.197 \(\pm\) 0.01 & 0.198 \(\pm\) 0.01 & 0.188 \(\pm\) 0.01 & 0.188 \(\pm\) 0.01 & 0.176 \(\pm\) 0.01 & \textbf{0.176 \(\pm\) 0.01} & \underline{\textcolor{blue}{0.176}} \\
 &  & NLL & $(5.2 \pm 0.2)\!\times\!10^{2}$ & \textbf{$(3.1 \pm 0.1)\!\times\!10^{2}$} & $(1.4 \pm 0.1)\!\times\!10^{3}$ & \underline{\textcolor{blue}{$(3.3 \pm 0.1)\!\times\!10^{2}$}} & $(3.3 \pm 0.1)\!\times\!10^{2}$ & $(6.5 \pm 5.4)\!\times\!10^{14}$ & $(5.7 \pm 4.3)\!\times\!10^{19}$ & $\times$ \\
\cmidrule(lr){2-11}
 & \multirow{2}{*}{192} & MSE & \textbf{0.225 \(\pm\) 0.01} & 0.232 \(\pm\) 0.01 & 0.248 \(\pm\) 0.01 & \underline{\textcolor{blue}{0.228 \(\pm\) 0.01}} & 0.228 \(\pm\) 0.01 & 0.251 \(\pm\) 0.01 & 0.251 \(\pm\) 0.01 & 0.251 \\
 &  & NLL & $(5.6 \pm 0.2)\!\times\!10^{2}$ & \textbf{$(3.6 \pm 0.1)\!\times\!10^{2}$} & $(1.3 \pm 0.0)\!\times\!10^{3}$ & \underline{\textcolor{blue}{$(4.2 \pm 0.1)\!\times\!10^{2}$}} & $(4.2 \pm 0.1)\!\times\!10^{2}$ & $(1.1 \pm 0.3)\!\times\!10^{13}$ & $(1.4 \pm 1.3)\!\times\!10^{22}$ & $\times$ \\
\cmidrule(lr){2-11}
 & \multirow{2}{*}{336} & MSE & \textbf{0.207 \(\pm\) 0.01} & 0.216 \(\pm\) 0.01 & 0.229 \(\pm\) 0.01 & 0.210 \(\pm\) 0.01 & \underline{\textcolor{blue}{0.209 \(\pm\) 0.01}} & 0.224 \(\pm\) 0.01 & 0.230 \(\pm\) 0.01 & 0.223 \\
 &  & NLL & $(5.8 \pm 0.2)\!\times\!10^{2}$ & \textbf{$(3.6 \pm 0.1)\!\times\!10^{2}$} & $(1.3 \pm 0.0)\!\times\!10^{3}$ & \underline{\textcolor{blue}{$(4.0 \pm 0.2)\!\times\!10^{2}$}} & $(4.0 \pm 0.2)\!\times\!10^{2}$ & $(4.3 \pm 1.5)\!\times\!10^{10}$ & $(4.7 \pm 1.4)\!\times\!10^{12}$ & $\times$ \\
\cmidrule(lr){2-11}
 & \multirow{2}{*}{720} & MSE & \textbf{0.308 \(\pm\) 0.01} & 0.321 \(\pm\) 0.01 & 0.320 \(\pm\) 0.01 & \underline{\textcolor{blue}{0.311 \(\pm\) 0.01}} & 0.311 \(\pm\) 0.01 & 0.340 \(\pm\) 0.01 & 0.370 \(\pm\) 0.01 & 0.344 \\
 &  & NLL & $(7.0 \pm 0.2)\!\times\!10^{2}$ & \textbf{$(4.6 \pm 0.1)\!\times\!10^{2}$} & $(1.9 \pm 0.0)\!\times\!10^{3}$ & \underline{\textcolor{blue}{$(5.7 \pm 0.1)\!\times\!10^{2}$}} & $(5.7 \pm 0.1)\!\times\!10^{2}$ & $(1.6 \pm 0.2)\!\times\!10^{5}$ & $(3.5 \pm 2.2)\!\times\!10^{23}$ & $\times$ \\
\cmidrule(lr){2-11}
 & \multirow{2}{*}{Avg} & MSE & \textbf{0.231 \(\pm\) 0.01} & 0.242 \(\pm\) 0.01 & 0.249 \(\pm\) 0.01 & 0.234 \(\pm\) 0.01 & \underline{\textcolor{blue}{0.234 \(\pm\) 0.01}} & 0.248 \(\pm\) 0.01 & 0.257 \(\pm\) 0.01 & 0.248 \\
 &  & NLL & $(5.9 \pm 0.2)\!\times\!10^{2}$ & \textbf{$(3.7 \pm 0.1)\!\times\!10^{2}$} & $(1.5 \pm 0.0)\!\times\!10^{3}$ & \underline{\textcolor{blue}{$(4.3 \pm 0.1)\!\times\!10^{2}$}} & $(4.3 \pm 0.1)\!\times\!10^{2}$ & $(1.7 \pm 1.4)\!\times\!10^{14}$ & $(9.2 \pm 5.7)\!\times\!10^{22}$ & $\times$ \\
\midrule
\multirow{10}{*}{{\scriptsize Traffic}} & \multirow{2}{*}{96} & MSE & 0.503 \(\pm\) 0.02 & 0.528 \(\pm\) 0.02 & 0.556 \(\pm\) 0.03 & 0.540 \(\pm\) 0.03 & 0.540 \(\pm\) 0.03 & \textbf{0.472 \(\pm\) 0.02} & 0.477 \(\pm\) 0.02 & \underline{\textcolor{blue}{0.475}} \\
 &  & NLL & \textbf{$(1.9 \pm 0.1)\!\times\!10^{3}$} & \underline{\textcolor{blue}{$(2.5 \pm 0.1)\!\times\!10^{3}$}} & $(6.3 \pm 0.6)\!\times\!10^{4}$ & $(2.9 \pm 0.1)\!\times\!10^{3}$ & $(2.9 \pm 0.1)\!\times\!10^{3}$ & $(9.7 \pm 7.1)\!\times\!10^{13}$ & $\times$ & $\times$ \\
\cmidrule(lr){2-11}
 & \multirow{2}{*}{192} & MSE & 0.629 \(\pm\) 0.03 & 0.648 \(\pm\) 0.03 & 0.684 \(\pm\) 0.03 & \underline{\textcolor{blue}{0.622 \(\pm\) 0.03}} & \textbf{0.622 \(\pm\) 0.03} & 0.677 \(\pm\) 0.03 & 0.690 \(\pm\) 0.03 & 0.710 \\
 &  & NLL & \textbf{$(2.2 \pm 0.1)\!\times\!10^{3}$} & \underline{\textcolor{blue}{$(3.0 \pm 0.2)\!\times\!10^{3}$}} & $(6.8 \pm 0.8)\!\times\!10^{4}$ & $(4.3 \pm 0.2)\!\times\!10^{3}$ & $(4.3 \pm 0.2)\!\times\!10^{3}$ & $(1.4 \pm 1.1)\!\times\!10^{21}$ & $\times$ & $\times$ \\
\cmidrule(lr){2-11}
 & \multirow{2}{*}{336} & MSE & \textbf{0.515 \(\pm\) 0.02} & 0.528 \(\pm\) 0.03 & 0.567 \(\pm\) 0.03 & 0.517 \(\pm\) 0.02 & \underline{\textcolor{blue}{0.517 \(\pm\) 0.02}} & 0.552 \(\pm\) 0.02 & 0.556 \(\pm\) 0.03 & 0.547 \\
 &  & NLL & \textbf{$(2.1 \pm 0.1)\!\times\!10^{3}$} & \underline{\textcolor{blue}{$(2.7 \pm 0.2)\!\times\!10^{3}$}} & $(5.1 \pm 0.5)\!\times\!10^{4}$ & $(3.4 \pm 0.2)\!\times\!10^{3}$ & $(3.4 \pm 0.2)\!\times\!10^{3}$ & $(5.5 \pm 2.2)\!\times\!10^{14}$ & $(9.9 \pm 4.2)\!\times\!10^{21}$ & $\times$ \\
\cmidrule(lr){2-11}
 & \multirow{2}{*}{720} & MSE & 0.748 \(\pm\) 0.03 & 0.757 \(\pm\) 0.04 & 0.782 \(\pm\) 0.04 & \underline{\textcolor{blue}{0.741 \(\pm\) 0.03}} & \textbf{0.741 \(\pm\) 0.03} & 0.766 \(\pm\) 0.03 & 1.036 \(\pm\) 0.04 & 0.800 \\
 &  & NLL & \textbf{$(2.3 \pm 0.1)\!\times\!10^{3}$} & \underline{\textcolor{blue}{$(3.2 \pm 0.2)\!\times\!10^{3}$}} & $(8.0 \pm 0.9)\!\times\!10^{4}$ & $(4.8 \pm 0.2)\!\times\!10^{3}$ & $(4.8 \pm 0.2)\!\times\!10^{3}$ & $(2.9 \pm 0.9)\!\times\!10^{12}$ & $\times$ & $\times$ \\
\cmidrule(lr){2-11}
 & \multirow{2}{*}{Avg} & MSE & \textbf{0.599 \(\pm\) 0.03} & 0.615 \(\pm\) 0.03 & 0.647 \(\pm\) 0.03 & 0.605 \(\pm\) 0.03 & \underline{\textcolor{blue}{0.605 \(\pm\) 0.03}} & 0.617 \(\pm\) 0.03 & 0.690 \(\pm\) 0.03 & 0.633 \\
 &  & NLL & \textbf{$(2.2 \pm 0.1)\!\times\!10^{3}$} & \underline{\textcolor{blue}{$(2.9 \pm 0.2)\!\times\!10^{3}$}} & $(6.6 \pm 0.7)\!\times\!10^{4}$ & $(3.8 \pm 0.2)\!\times\!10^{3}$ & $(3.8 \pm 0.2)\!\times\!10^{3}$ & $(3.5 \pm 2.6)\!\times\!10^{20}$ & $(9.9 \pm 4.2)\!\times\!10^{21}$ & $\times$ \\
\bottomrule
\end{tabular}
\end{table*}

\subsection{Forecasting Comparison}

Figure \ref{fig:forecast_compare} presents a comparison of the forecasting performance of the Static, Noisy Diagonal, and MoE ensemble models with confidence intervals on the Electricity dataset for the forecasting horizon of 720. For the Static ensemble, the figure shows the contribution values assigned to each expert. For the Noisy Diagonal model, the figure presents the time-varying values associated with each expert together with the Top Share plot, which illustrates the extent to which the model concentrates the forecasting contribution on a single expert over time. For the MoE model, the figure shows the gating-layer weights after the softmax activation. Figure \ref{fig:forecast_compare_2} also shows, that on some datasets MoE can be overconfident and collapse into one expert, which can be not the best among experts for this data. 

\begin{figure*}[t]
    \centering
    \includegraphics[width=\linewidth]{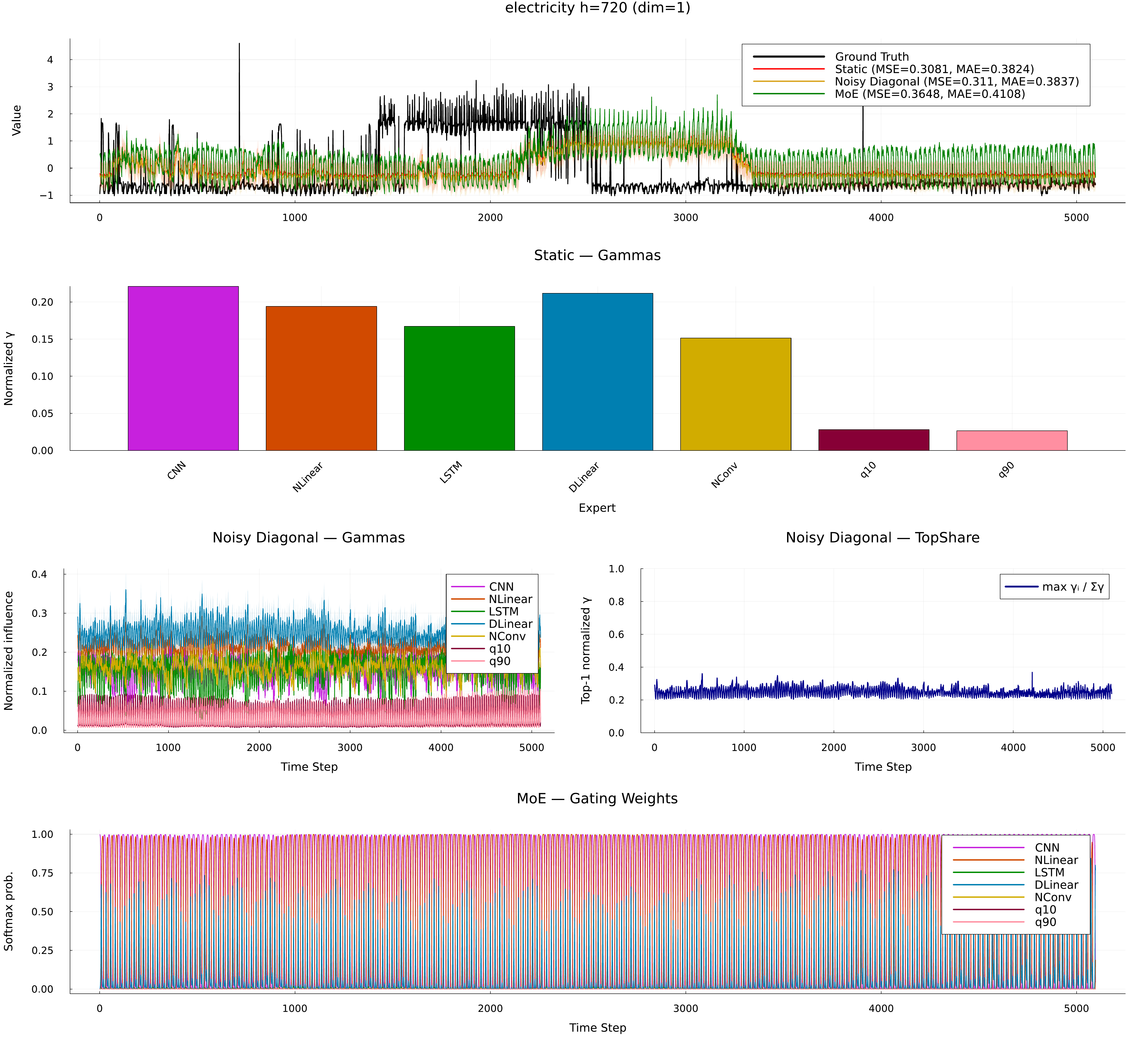}
    \caption{Comparison of forecasting with confidence interval of Static, Noisy Diagonal and MoE ensembles on electricity dataset and horizon 720. With values of each ensemble method for it's experts. }
    \label{fig:forecast_compare}
\end{figure*}

\begin{figure*}[t]
    \centering
    \includegraphics[width=\linewidth]{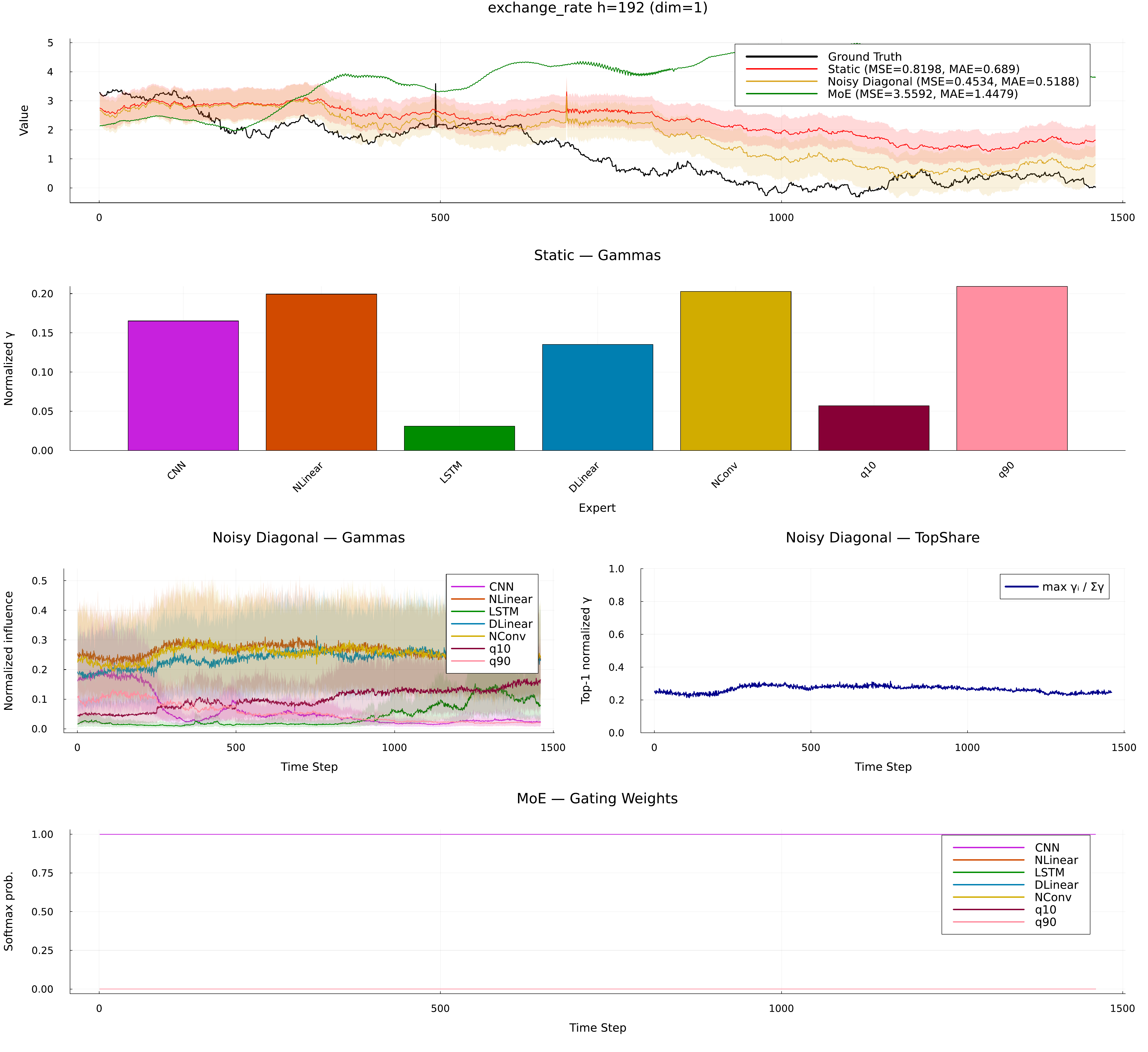}
    \caption{Comparison of forecasting with confidence interval of Static, Noisy Diagonal and MoE ensembles on exchange rate dataset and horizon 192. This example shows how classic neural network can collapse into one expert, which is not the best for this task. }
    \label{fig:forecast_compare_2}
\end{figure*}

%% file: main.bbl
\begin{thebibliography}{34}
\providecommand{\natexlab}[1]{#1}
\providecommand{\url}[1]{\texttt{#1}}
\expandafter\ifx\csname urlstyle\endcsname\relax
  \providecommand{\doi}[1]{doi: #1}\else
  \providecommand{\doi}{doi: \begingroup \urlstyle{rm}\Url}\fi

\bibitem[Bagaev and De~Vries(2023)]{bagaev_reactive_2023}
D.~Bagaev and B.~De~Vries.
\newblock Reactive {Message} {Passing} for {Scalable} {Bayesian} {Inference}.
\newblock \emph{Scientific Programming}, 2023:\penalty0 1--26, May 2023.
\newblock ISSN 1875-919X, 1058-9244.
\newblock \doi{10.1155/2023/6601690}.
\newblock URL \url{https://www.hindawi.com/journals/sp/2023/6601690/}.

\bibitem[Bergmann(2022)]{bergmann_manoptjl_2022}
R.~Bergmann.
\newblock Manopt.jl: {Optimization} on {Manifolds} in {Julia}.
\newblock \emph{Journal of Open Source Software}, 7\penalty0 (70):\penalty0
  3866, 2022.
\newblock \doi{10.21105/joss.03866}.

\bibitem[Cochran(1937)]{cochran_problems_1937}
W.~G. Cochran.
\newblock Problems arising in the analysis of a series of similar experiments.
\newblock \emph{Supplement to the Journal of the Royal Statistical Society},
  4\penalty0 (1):\penalty0 102--118, 1937.

\bibitem[Cybenko(1989)]{cybenko_approximation_1989}
G.~Cybenko.
\newblock Approximation by superpositions of a sigmoidal function.
\newblock \emph{Mathematics of Control, Signals and Systems}, 2\penalty0
  (4):\penalty0 303--314, Dec. 1989.
\newblock ISSN 1435-568X.
\newblock \doi{10.1007/BF02551274}.
\newblock URL \url{https://doi.org/10.1007/BF02551274}.

\bibitem[Dauwels(2007)]{dauwels_variational_2007}
J.~Dauwels.
\newblock On {Variational} {Message} {Passing} on {Factor} {Graphs}.
\newblock In \emph{{IEEE} {International} {Symposium} on {Information}
  {Theory}}, pages 2546--2550, Nice, France, June 2007.
\newblock \doi{10.1109/ISIT.2007.4557602}.
\newblock URL \url{http://ieeexplore.ieee.org/abstract/document/4557602}.

\bibitem[Forney(2001)]{forney_codes_2001}
G.~Forney.
\newblock Codes on graphs: normal realizations.
\newblock \emph{IEEE Transactions on Information Theory}, 47\penalty0
  (2):\penalty0 520--548, Feb. 2001.
\newblock ISSN 0018-9448.
\newblock \doi{10.1109/18.910573}.
\newblock URL \url{https://ieeexplore.ieee.org/abstract/document/910573}.

\bibitem[Ho et~al.(2020)Ho, Jain, and Abbeel]{ho_denoising_2020}
J.~Ho, A.~Jain, and P.~Abbeel.
\newblock Denoising diffusion probabilistic models.
\newblock \emph{Advances in neural information processing systems},
  33:\penalty0 6840--6851, 2020.

\bibitem[Hochreiter and Schmidhuber(1997)]{hochreiter_long_1997}
S.~Hochreiter and J.~Schmidhuber.
\newblock Long {Short}-{Term} {Memory}.
\newblock \emph{Neural Comput.}, 9\penalty0 (8):\penalty0 1735--1780, Nov.
  1997.
\newblock ISSN 0899-7667.
\newblock \doi{10.1162/neco.1997.9.8.1735}.
\newblock URL \url{https://doi.org/10.1162/neco.1997.9.8.1735}.

\bibitem[Hornik(1991)]{hornik_approximation_1991}
K.~Hornik.
\newblock Approximation capabilities of multilayer feedforward networks.
\newblock \emph{Neural Networks}, 4\penalty0 (2):\penalty0 251--257, 1991.
\newblock ISSN 0893-6080.
\newblock \doi{https://doi.org/10.1016/0893-6080(91)90009-T}.
\newblock URL
  \url{https://www.sciencedirect.com/science/article/pii/089360809190009T}.

\bibitem[Jacobs et~al.(1991)Jacobs, Jordan, Nowlan, and
  Hinton]{jacobs_adaptive_1991}
R.~A. Jacobs, M.~I. Jordan, S.~J. Nowlan, and G.~E. Hinton.
\newblock Adaptive mixtures of local experts.
\newblock \emph{Neural computation}, 3\penalty0 (1):\penalty0 79--87, 1991.

\bibitem[Khan(2025)]{khan_information_2025}
M.~E. Khan.
\newblock Information {Geometry} of {Variational} {Bayes}.
\newblock \emph{Information Geometry}, 8\penalty0 (S1):\penalty0 275--289, Nov.
  2025.
\newblock ISSN 2511-2481, 2511-249X.
\newblock \doi{10.1007/s41884-025-00174-3}.
\newblock URL \url{https://link.springer.com/10.1007/s41884-025-00174-3}.

\bibitem[Khan and Rue(2023)]{khan_bayesian_2023}
M.~E. Khan and H.~Rue.
\newblock The {Bayesian} learning rule.
\newblock \emph{Journal of Machine Learning Research}, 24\penalty0
  (281):\penalty0 1--46, 2023.

\bibitem[Kingma and Welling(2013)]{kingma_autoencoding_2013}
D.~P. Kingma and M.~Welling.
\newblock Auto-{Encoding} {Variational} {Bayes}.
\newblock \emph{arXiv:1312.6114 [cs, stat]}, Dec. 2013.
\newblock URL \url{http://arxiv.org/abs/1312.6114}.
\newblock arXiv: 1312.6114.

\bibitem[Kschischang et~al.(2001)Kschischang, Frey, and
  Loeliger]{kschischang_factor_2001}
F.~R. Kschischang, B.~J. Frey, and H.-A. Loeliger.
\newblock Factor graphs and the sum-product algorithm.
\newblock \emph{IEEE Transactions on information theory}, 47\penalty0
  (2):\penalty0 498--519, 2001.
\newblock \doi{10.1109/18.910572}.
\newblock URL
  \url{http://ieeexplore.ieee.org/xpls/abs_all.jsp?arnumber=910572}.

\bibitem[LeCun et~al.(2015)LeCun, Bengio, and Hinton]{lecun_deep_2015}
Y.~LeCun, Y.~Bengio, and G.~Hinton.
\newblock Deep learning.
\newblock \emph{Nature}, 521\penalty0 (7553):\penalty0 436--444, May 2015.
\newblock ISSN 0028-0836, 1476-4687.
\newblock \doi{10.1038/nature14539}.
\newblock URL \url{https://www.nature.com/articles/nature14539}.

\bibitem[Loeliger(2004)]{loeliger_introduction_2004}
H.-A. Loeliger.
\newblock An introduction to factor graphs.
\newblock \emph{Signal Processing Magazine, IEEE}, 21\penalty0 (1):\penalty0
  28--41, Jan. 2004.
\newblock \doi{10.1109/MSP.2004.1267047}.
\newblock URL \url{https://ieeexplore.ieee.org/document/1267047}.

\bibitem[Loeliger(2007)]{loeliger_factor_2007}
H.-A. Loeliger.
\newblock Factor {Graphs} and {Message} {Passing} {Algorithms} -- {Part} 1:
  {Introduction}, 2007.
\newblock URL \url{http://www.crm.sns.it/media/course/1524/Loeliger_A.pdf}.

\bibitem[Loshchilov and Hutter(2019)]{loshchilov_decoupled_2019}
I.~Loshchilov and F.~Hutter.
\newblock Decoupled {Weight} {Decay} {Regularization}.
\newblock In \emph{7th {International} {Conference} on {Learning}
  {Representations}, {ICLR} 2019, {New} {Orleans}, {LA}, {USA}, {May} 6-9,
  2019}. OpenReview.net, 2019.
\newblock URL \url{https://openreview.net/forum?id=Bkg6RiCqY7}.

\bibitem[Lukashchuk et~al.(2024)Lukashchuk, Senöz, and
  de~Vries]{lukashchuk_qconjugate_2024}
M.~Lukashchuk, I.~Senöz, and B.~de~Vries.
\newblock Q-conjugate message passing for efficient bayesian inference.
\newblock In \emph{International conference on probabilistic graphical models},
  pages 295--311. PMLR, 2024.

\bibitem[Lukashchuk et~al.(2025)Lukashchuk, Bagaev, Podusenko, {\c{S}}en{\"o}z,
  and de~Vries]{lukashchuk_exponentialfamilymanifoldsjl_2025}
M.~Lukashchuk, D.~Bagaev, A.~Podusenko, {\.{I}}.~{\c{S}}en{\"o}z, and
  B.~de~Vries.
\newblock {ExponentialFamilyManifolds}.jl: {Representing} exponential families
  as {Riemannian} manifolds.
\newblock \emph{Proceedings of the JuliaCon Conferences}, 7\penalty0
  (70):\penalty0 179, 2025.
\newblock \doi{10.21105/jcon.00179}.
\newblock URL \url{https://doi.org/10.21105/jcon.00179}.

\bibitem[Neal(2011)]{neal_mcmc_2011}
R.~M. Neal.
\newblock \emph{{MCMC} using {Hamiltonian} dynamics}.
\newblock May 2011.
\newblock \doi{10.1201/b10905}.
\newblock URL \url{http://arxiv.org/abs/1206.1901}.
\newblock arXiv:1206.1901 [physics, stat].

\bibitem[Nuijten et~al.(2024)Nuijten, Bagaev, and
  de~Vries]{nuijten_graphppljl_2024}
W.~W.~L. Nuijten, D.~Bagaev, and B.~de~Vries.
\newblock {GraphPPL}.jl: {A} {Probabilistic} {Programming} {Language} for
  {Graphical} {Models}.
\newblock \emph{Entropy}, 26\penalty0 (11), 2024.
\newblock ISSN 1099-4300.
\newblock \doi{10.3390/e26110890}.
\newblock URL \url{https://www.mdpi.com/1099-4300/26/11/890}.

\bibitem[Ranganath et~al.(2014)Ranganath, Gerrish, and
  Blei]{ranganath_black_2014}
R.~Ranganath, S.~Gerrish, and D.~Blei.
\newblock Black {Box} {Variational} {Inference}.
\newblock In S.~Kaski and J.~Corander, editors, \emph{Proceedings of the
  {Seventeenth} {International} {Conference} on {Artificial} {Intelligence} and
  {Statistics}}, volume~33 of \emph{Proceedings of {Machine} {Learning}
  {Research}}, pages 814--822, Reykjavik, Iceland, Apr. 2014. PMLR.
\newblock URL \url{https://proceedings.mlr.press/v33/ranganath14.html}.

\bibitem[Rezende and Mohamed(2015)]{rezende_variational_2015}
D.~J. Rezende and S.~Mohamed.
\newblock Variational {Inference} with {Normalizing} {Flows}.
\newblock \emph{arXiv:1505.05770 [cs, stat]}, May 2015.
\newblock URL \url{http://arxiv.org/abs/1505.05770}.
\newblock arXiv: 1505.05770.

\bibitem[Rudin(2013)]{rudin_real_2013}
W.~Rudin.
\newblock \emph{Real and complex analysis}.
\newblock {McGraw}-{Hill} international editions {Mathematics} series.
  McGraw-Hill, New York, NY, 3. ed., internat. ed., [nachdr.] edition, 2013.
\newblock ISBN 978-0-07-100276-9 978-0-07-054234-1.
\newblock OCLC: 957461070.

\bibitem[Senöz et~al.(2021)Senöz, van~de Laar, Bagaev, and
  de~Vries]{senoz_variational_2021}
I.~Senöz, T.~van~de Laar, D.~Bagaev, and B.~de~Vries.
\newblock Variational {Message} {Passing} and {Local} {Constraint}
  {Manipulation} in {Factor} {Graphs}.
\newblock \emph{Entropy}, 23\penalty0 (7):\penalty0 807, July 2021.
\newblock ISSN 1099-4300.
\newblock \doi{10.3390/e23070807}.
\newblock URL \url{https://www.mdpi.com/1099-4300/23/7/807}.

\bibitem[Smola et~al.(2003)Smola, Vishwanathan, and Eskin]{smola_laplace_2003}
A.~Smola, S.~Vishwanathan, and E.~Eskin.
\newblock Laplace propagation.
\newblock In \emph{Advances in neural information processing systems},
  volume~16. MIT Press, 2003.
\newblock URL
  \url{https://proceedings.neurips.cc/paper_files/paper/2003/file/7fd804295ef7f6a2822bf4c61f9dc4a8-Paper.pdf}.

\bibitem[Trindade(2015)]{trindade_electricityloaddiagrams20112014_2015}
A.~Trindade.
\newblock {ElectricityLoadDiagrams20112014}.
\newblock \emph{UCI Machine Learning Repository}, 10:\penalty0 C58C86, 2015.

\bibitem[van~de Laar et~al.(2018)van~de Laar, Cox, Senoz, Bocharov, and
  de~Vries]{vandelaar_forneylab_2018}
T.~van~de Laar, M.~Cox, I.~Senoz, I.~Bocharov, and B.~de~Vries.
\newblock {ForneyLab}: {A} {Toolbox} for {Biologically} {Plausible} {Free}
  {Energy} {Minimization} in {Dynamic} {Neural} {Models}.
\newblock In \emph{Conference on {Complex} {Systems} ({CCS})}, Thessaloniki,
  Greece, Sept. 2018.

\bibitem[Weber et~al.(2026)Weber, Waade, Legrand, Møller, Stephan, and
  Mathys]{weber_generalized_2026}
L.~A. Weber, P.~T. Waade, N.~Legrand, A.~H. Møller, K.~E. Stephan, and
  C.~Mathys.
\newblock The generalized {Hierarchical} {Gaussian} {Filter}.
\newblock Mar. 2026.
\newblock \doi{10.7554/elife.110174.1}.
\newblock URL \url{http://dx.doi.org/10.7554/eLife.110174.1}.

\bibitem[Winn and Bishop(2005)]{winn_variational_2005}
J.~Winn and C.~M. Bishop.
\newblock Variational {Message} {Passing}.
\newblock \emph{Journal of Machine Learning Research}, 6\penalty0
  (23):\penalty0 661--694, 2005.
\newblock ISSN 1533-7928.
\newblock URL \url{http://jmlr.org/papers/v6/winn05a.html}.

\bibitem[Yedidia et~al.(2005)Yedidia, Freeman, and
  Weiss]{yedidia_constructing_2005}
J.~S. Yedidia, W.~Freeman, and Y.~Weiss.
\newblock Constructing free-energy approximations and generalized belief
  propagation algorithms.
\newblock \emph{IEEE Transactions on Information Theory}, 51\penalty0
  (7):\penalty0 2282--2312, July 2005.
\newblock ISSN 0018-9448.
\newblock \doi{10.1109/TIT.2005.850085}.
\newblock URL \url{http://ieeexplore.ieee.org/abstract/document/1459044}.

\bibitem[Zeng et~al.(2023)Zeng, Chen, Zhang, and Xu]{zeng_are_2023}
A.~Zeng, M.~Chen, L.~Zhang, and Q.~Xu.
\newblock Are transformers effective for time series forecasting?
\newblock In \emph{Proceedings of the {AAAI} conference on artificial
  intelligence}, volume~37, pages 11121--11128, 2023.

\bibitem[Zhou et~al.(2021)Zhou, Zhang, Peng, Zhang, Li, Xiong, and
  Zhang]{zhou_informer_2021}
H.~Zhou, S.~Zhang, J.~Peng, S.~Zhang, J.~Li, H.~Xiong, and W.~Zhang.
\newblock Informer: {Beyond} {Efficient} {Transformer} for {Long} {Sequence}
  {Time}-{Series} {Forecasting}.
\newblock In \emph{The {Thirty}-{Fifth} {AAAI} {Conference} on {Artificial}
  {Intelligence}, {AAAI} 2021, {Virtual} {Conference}}, volume~35, pages
  11106--11115. AAAI Press, 2021.

\end{thebibliography}
